\documentclass[11pt,a4paper]{article}

\usepackage[T1]{fontenc}
\usepackage[utf8]{inputenc}
\usepackage{lmodern}
\usepackage[margin=1in]{geometry}
\usepackage{graphicx}
\usepackage[round,authoryear]{natbib}
\usepackage{comment}
\usepackage{amsmath,amssymb,amsfonts,amsthm,mathtools}
\usepackage{bm}
\usepackage{booktabs,tabularx}
\usepackage{enumitem}
\usepackage{microtype}
\usepackage{xcolor}
\usepackage{algorithm,algpseudocode}
\usepackage{hyperref}
\usepackage[capitalise,noabbrev]{cleveref}

\hypersetup{
  colorlinks=true,
  linkcolor=blue!55!black,
  citecolor=blue!55!black,
  urlcolor=blue!55!black
}

\theoremstyle{plain}
\newtheorem{theorem}{Theorem}
\newtheorem{lemma}[theorem]{Lemma}

\newtheorem{corollary}[theorem]{Corollary}

\theoremstyle{definition}

\newtheorem{example}{Example}

\theoremstyle{remark}

\crefname{theorem}{Theorem}{Theorems}
\Crefname{theorem}{Theorem}{Theorems}
\crefname{lemma}{Lemma}{Lemmas}
\Crefname{lemma}{Lemma}{Lemmas}
\crefname{proposition}{Proposition}{Propositions}
\Crefname{proposition}{Proposition}{Propositions}
\crefname{corollary}{Corollary}{Corollaries}
\Crefname{corollary}{Corollary}{Corollaries}
\crefname{definition}{Definition}{Definitions}
\Crefname{definition}{Definition}{Definitions}
\crefname{assumption}{Assumption}{Assumptions}
\Crefname{assumption}{Assumption}{Assumptions}
\crefname{problem}{Problem}{Problems}
\Crefname{problem}{Problem}{Problems}
\crefname{example}{Example}{Examples}
\Crefname{example}{Example}{Examples}
\crefname{remark}{Remark}{Remarks}
\Crefname{remark}{Remark}{Remarks}

\title{Runtime Monitoring of Supermartingale Certificates}
\author{%
  Mirco Giacobbe\textsuperscript{1}\quad
  Thomas A.~Henzinger\textsuperscript{2}\quad
  Konstantin Kueffner\textsuperscript{5}\\
  Emily Yu\textsuperscript{3}\quad
  \DJ{}or\dj{}e \v{Z}ikeli\'{c}\textsuperscript{4}\\[0.8em]
  {\small\textsuperscript{1}University of Birmingham}\\
  {\small\textsuperscript{2}Institute of Science and Technology Austria}\\
  {\small\textsuperscript{3}Leiden University}\\
  {\small\textsuperscript{4}Nanyang Technological University}\\
  {\small\textsuperscript{5}University of Oxford}%
}
\date{}

\newcommand{\gM}{\mathfrak{M}}

\newcommand{\aA}{\mathcal{A}}

\newcommand{\aF}{\mathcal{F}}
\newcommand{\aG}{\mathcal{G}}

\newcommand{\aS}{\mathcal{S}}

\newcommand{\aX}{\mathcal{X}}
\newcommand{\aY}{\mathcal{Y}}

\newcommand{\sE}{\mathbb{E}}

\newcommand{\sN}{\mathbb{N}}

\newcommand{\sP}{\mathbb{P}}

\newcommand{\sR}{\mathbb{R}}

\newcommand{\sT}{\mathbb{T}}

\newcommand{\rV}{V}

\newcommand{\rX}{X}
\newcommand{\rY}{Y}

\newcommand{\pX}{\bm{X}}
\newcommand{\pY}{\bm{Y}}

\newcommand{\expe}{\sE}
\newcommand{\prob}{\sP}

\newcommand{\NN}{\sN}

\newcommand{\RN}{\sR}

\newcommand{\moni}{\gM}

\newcommand{\sample}{\pi}
\newcommand{\verdict}{\xi}

\begin{document}
\raggedbottom
\maketitle

\begin{abstract}
Supermartingale certificates formally guarantee temporal properties of discrete and continuous-time stochastic dynamical systems. However, these guarantees rely on a mathematical model of the world, leaving a sim-to-real gap that can invalidate them for the deployed system. We formulate the runtime monitoring problem for supermartingale certificates, in which a monitor observes a single system trajectory and raises an alarm when there is sufficient statistical evidence that the supermartingale certificate has been violated during execution. We present an algorithm to automatically construct a monitor using betting-based sequential tests that bound the probability of ever raising a false alarm at a user-specified level. Within this betting framework, we formulate strategies that allow the monitor to exploit spatial regularity to improve detection power while preserving this statistical guarantee. We demonstrate the efficacy of our approach on supermartingale certificate benchmarks from the literature.
\end{abstract}

\section{Introduction}\label{sec:intro}

Supermartingale certificates guarantee that probabilistic, potentially infinite-state systems satisfy temporal properties of their intended behaviour, making them a powerful tool for assuring sequential decision-making under uncertainty. Applications include the automated control of unmanned vehicles, manufacturing processes, and financial systems, where rigorous guarantees of safety and security are paramount. Supermartingale certificates are real-valued functions of system states whose expected values decrease along system executions in a manner determined by the specification of interest. They are the probabilistic analogues of Lyapunov and barrier functions in control systems~\citep{kushner1965stability}, and of inductive invariants and progress measures in software systems~\citep{Floyd1967,ChakarovS13}. 
A supermartingale certificate reduces reasoning about a global temporal property of system behaviour---such as reachability, safety, reach-avoidance, stability, reactivity---to verifying a local expected-decrease condition at each individual step of the system. 

The algorithmic construction of supermartingale certificates has traditionally relied on rigorous symbolic reasoning methods~\citep{PrajnaJ04,DBLP:conf/cav/ChatterjeeFG16}, with more recent work turning to neural representations and certified learning-based approaches~\citep{AbateGR20,LechnerZCH22,ZikelicLHC23,NeustroevGL25}. 
Yet, the formal guarantees provided by the existing methods are contingent on a given {\em model of the world}, fixed at design time. If the behaviour of the environment in which the controller is deployed deviates from this assumed model, the certificate may no longer satisfy the supermartingale conditions, invalidating its guarantees. Such deviations may arise, for example, from environmental changes, distribution shifts, or unmodelled physical conditions, and are commonly associated with what is known in control theory and robotics as the {\em sim-to-real gap}~\citep{JacobiHH95,TobinFRSZA17,ZhaoQW20}. 

This raises a fundamental question: when the true environment is not fully known and can only be interacted with at runtime as a black box, how can we determine whether a certificate remains valid after deployment? This is the central question we address in this work:
we propose for the first time a framework for the {\em runtime monitoring of supermartingale certificates} upon deployment. Given a supermartingale certificate computed at design time and a a black-box stochastic deployment environment, our method observes a single stream of system states, continually tests whether the supermartingale certificate satisfies its defining expected-decrease condition, and raises an alarm once it has gathered sufficient statistical evidence that the supermartingale certificate is invalid.

Runtime monitoring of a supermartingale certificate requires detecting violations from a single stream of observations. Because the environment is stochastic, an increase (resp.~decrease) in the certificate value at any individual step does not necessarily indicate a violation (resp.~satisfaction) of the expected-decrease condition. To address this challenge, our monitor accumulates statistical evidence over successive state observations, seeking to determine when the observed changes in certificate value become sufficiently inconsistent with the expected-decrease condition to signal a violation. This evidence is accumulated using a {\em betting-based sequential testing method}~\citep{ramdas2023game,waudby2024betting}. The monitor plays a betting game: it starts with an initial wealth and, for each pair of successive state observations, converts the change in certificate value into a normalised score and chooses a stake based on the observations seen so far. A positive score increases the monitor's wealth, while a negative score decreases it, with the magnitude of either change determined by the chosen stake. Crucially, while the expected-decrease condition holds, no admissible betting strategy can increase the monitor's wealth in expectation. Sustained wealth growth therefore constitutes statistical evidence against this condition. Once the wealth exceeds a threshold calibrated to control the false-alarm probability, the monitor signals a violation. 

We implemented our method and evaluated betting strategies that either weight past observations uniformly or favour recent observations and those from nearby states across a wide range of benchmarks. Our results show that the framework effectively detects certificate violations under changes in deployment dynamics, and that exploiting temporal and spatial regularity in the betting strategy improves the detection of local violations.

Our contribution is threefold. First, we present the first framework for runtime verification of supermartingale certificates. Our framework is agnostic to the temporal property being certified, applies to both discrete- and continuous-time stochastic systems, and encompasses the existing supermartingale certificates (see Section~\ref{sec:prelims}). Second, we introduce an algorithm for automatically constructing monitors based on betting games, with a user-specified bound on the false-alarm probability that holds uniformly over an unbounded monitoring horizon. This bound allows users to control the monitor's conservativeness and avoid unnecessary intervention (see Section~\ref{sec:monitor-construction}).
Finally, we empirically demonstrate the effectiveness of our method in detecting certificate violations under changed
deployment dynamics (see Section~\ref{sec:exp}).

\section{Runtime monitoring of supermartingale certificates}\label{sec:prelims}

\paragraph{Markov Process} 
We consider probabilistic systems modelled as Markov processes in either discrete or continuous time, over which supermartingale certificates are defined. We generically write $\sT$ for the time domain, with $\sT=\NN_0$ (the non-negative integers) in discrete time and $\sT=\RN_{\geq 0}$ (the non-negative reals) in continuous time, and take $t=0$ as the initial time.

Let $\aX = \mathbb{R}^d$ be a Euclidean space, and $(\aX, \mathcal{F}, \mathbb{P})$ be a probability space over $\aX$. A property holds \emph{almost surely} if it holds with probability one. Let $(\aG_t)_{t\in\sT}$ be a filtration in this probability space. A {\em stochastic process} $\pX=(\rX_t)_{t\in\sT}$ with respect to $(\aG_t)_{t\in\sT}$ is
an indexed family of random variables, with each $\rX_t$ being $\aG_t$-measurable. In the continuous-time setting, we also assume that $\pX$ is right-continuous and that the filtration $(\aG_t)_{t\in\sT}$ is complete and right-continuous. A {\em (time-homogeneous) Markov process} $\pX=(\rX_t)_{t\in\sT}$ is a stochastic process such that, for every measurable function $f: \aX \rightarrow \mathbb{R}$ and for every pair of time values $t,s \in \sT$, we have $\mathbb{E}[f(\rX_{t+s}) \mid \aG_t] = \mathbb{E}[f(\rX_{t+s}) \mid \rX_t]$.

\paragraph{Supermartingale Certificate} A {\em supermartingale certificate} is a measurable function $f: \aX \rightarrow \mathbb{R}$ which satisfies a finite number of conditions of one of the following two types:
\begin{enumerate}[leftmargin=*]
    \item {\em domain-bound condition.} Given a domain $\aS \subseteq \aX$ and a bound interval $[l,u]$ where $l, u \in \mathbb{R} \cup \{\pm\infty\}$, the $(\aS ,[l,u])$-domain-bound condition specifies that, for all times $t \in \sT$, we have $f(\rX_t) \in [l,u]$. Intuitively, this condition requires the value of $f$ to lie within the interval $[l,u]$ at all times, whenever the stochastic process value is in the domain set $\aS $.
    
    \item {\em expected-decrease condition.} 
 Given a domain $\aS\subseteq\aX$ and a measurable drift bound $l:\aX\rightarrow\mathbb{R}$, zero outside $\aS$, the $(\aS,l)$-expected drift condition requires, for $t\in\sT$ and $\rX_t\in\aS$, a bound on
\begin{itemize}[leftmargin=*]
    \item ({\em discrete-time}) the one-step expected decrease: $\mathbb{E}[f(\rX_{t+1})\mid\rX_t]-f(\rX_t)\leq l(\rX_t)$;
    \item ({\em continuous-time}) the instantaneous drift: $ \mathcal{A}f(\rX_t)
        \coloneqq 
        \lim_{h\downarrow0}
        \frac{\mathbb{E}[f(\rX_{t+h})\mid\rX_t]-f(\rX_t)}{h}
        \leq l(\rX_t)$. 
\end{itemize}
Intuitively, this bounds the expected change per step, or the instantaneous drift rate, from above by $l$ whenever the process is in $\aS$. The continuous-time setting requires additional regularity assumptions specified in Appendix~\ref{app:sec:continuous}. 
\end{enumerate}
These defining conditions subsume all classical supermartingale certificates for proving temporal properties in stochastic systems, such as ranking supermartingales~\citep{ChakarovS13} for almost-sure reachability, stochastic barrier functions~\citep{PrajnaJP07} for probabilistic safety, reach-avoid supermartingales~\citep{ZikelicLHC23} for probabilistic reach-avoidance, or Streett supermartingales~\citep{AbateGR24,AbateGR25} or limit-deterministic B\"uchi supermartingales~\citep{HenzingerMSZ25} for general omega-regular specifications. Thus, we follow this unified presentation to design a runtime monitoring framework that applies to all these supermartingale certificates.

\begin{example}[A robot climbing a hill]
\label{ex:hill-rsm}
A robot starts at distance $x_0=10$ from its target. Each step moves it
forwards by $0.01$ or makes it slip backwards by the same distance. It stops upon reaching a distance in $[0,1]$ from the target, where it remains. A sloping section between positions $3$ and $7$ increases the probability of slipping backwards. As the robot moves from $7$ towards $3$, the slope becomes steeper until position $5$, then gradually
flattens. We describe this steepness profile by $ b(x)\coloneqq\left(1-|x-5|/2\right)_+.$ Thus, position $5$ is the steepest part of the slope. 
Before reaching the target, the dynamics for $a\in[0,3/4]$ and $p_a(x)\coloneqq1/4+a \cdot b(x)$ are
\begin{align*}
    \rX_{n+1}
    &=\rX_n-0.01+0.02 \cdot Z_n,
    &
    Z_n\mid\aG_n
    &\sim\operatorname{Bernoulli}\bigl(p_a(\rX_n)\bigr).
\end{align*}
Each draw uses fresh randomness with slip probability determined by the current state. The range of $a$ ensures that $p_a(x)\in[0,1]$. If we choose $f(x)=x$ and $l(x)=-0.005 \cdot \mathbf{1}\{x>1\}$, the expected decrease before stopping is $d_a(x)\coloneqq
    \expe[f(\rX_{n+1})-f(\rX_n)\mid\rX_n=x]
    =-0.005+0.02 \cdot a \cdot b(x)$. If $a=0$, $f$ is a ranking supermartingale proving almost-sure target reachability. However, for every $a>0$, the expected-decrease condition is violated on $(3,7)$. 
\end{example}

\paragraph{Trajectory} Given a Markov process $\pX = (\rX_t)_{t \in \sT}$ and a measurable function $f: \aX \rightarrow \mathbb{R}$, the goal of the supermartingale certificate monitoring problem is to construct a monitor which observes states of the Markov process at different time steps and, upon each newly observed state, issues a verdict on whether the function $f$ violates one of the supermartingale conditions or not. To that end, we define $\aY\coloneqq\sT\times\aX$ to be the set of time--state pairs, and use $\aY^\ast$ to denote the set of all finite sequences of time-state pairs. We call elements of $\aY^\ast$ {\em trajectories}.

\paragraph{Monitor} A {\em monitor} is a tuple $\moni\coloneqq(\sample,\verdict)$,
where $\sample\colon\aY^*\to\sT$ is a {\em sampling rule} which for each sampled trajectory specifies the next observation time, and $\verdict\colon\aY^*\to\{0,1\}$ is a {\em verdict rule} which for each sampled trajectory decides whether to raise an alarm or not. The verdict value $1$ denotes an {\em alarm}, indicating statistical evidence of a certificate violation, while $0$ means that no alarm has been raised. The semantics of a monitor are defined inductively as follows. At the beginning, the monitor starts with a single-state trajectory $\pY_{0:0} = (T_0,x_{T_0})$, with $T_0 = 0$ and $x_{T_0}$ sampled from $\rX_{T_0}$. Then, for each $n$, if $\pY_{0:n}=(\rY_0, \dots, \rY_n)$ is the trajectory of time-state samples that have been sampled after $n$ steps, the next time-state sample and the monitor verdict are computed via
\begin{align*}
  T_{n+1}
  \coloneqq
  \sample(\pY_{0:n}),
  \quad
  x_{T_{n+1}} \sim X_{T_{n+1}},
  \quad
  \pY_{0:n+1} \coloneqq (\pY_{0:n}, (T_{n+1}, x_{T_{n+1}})),
  \quad 
  \rV_{n+1}
  \coloneqq
  \verdict(\pY_{0:n+1}).
\end{align*}
Hence, the monitor determines the next sampling time $T_{n+1}$, samples the next state $x_{T_{n+1}}$ from $\rX_{T_{n+1}}$, appends the new time-state pair to the trajectory to obtain trajectory $\pY_{0:n+1}$, and finally computes the verdict $\rV_{n+1}$ by applying the verdict rule to the trajectory $\pY_{0:n+1}$.

\paragraph{Problem Statement} Our goal is to design a monitor that raises an alarm as accurately as possible, while providing {\em probabilistic guarantees} that its false alarm rate is below some tolerable threshold $\delta$ {\em over all sampling times}, and not just for each individual sampled state. The accuracy of our monitor is an empirical claim that will be evaluated in our experiments. However, the false alarm rate is a property that we want to {\em guarantee}. We define it as follows. Given a candidate certificate $f$ and an error tolerance $\delta\in(0,1)$, we want to construct a monitor $\moni\coloneqq(\sample,\verdict)$ such that, whenever $f$ satisfies the certificate conditions for the deployed process, we have $\prob\left(\exists n\in\NN_0:\rV_n=1 \right)\leq\delta$.

\paragraph{Challenge} 
To determine whether $f$ is a supermartingale certificate, it suffices to design one monitor for each domain-bound condition and each expected-decrease condition of the supermartingale. But designing a monitor for domain-bound conditions is trivial -- if $S \subseteq \aX$ is the domain and $[l,u]$ is the bound interval, the verdict rule is simply $\rV_{n} = \verdict(\pY_{0:n}) = 1$ iff the last time-state pair $(T_n,x_n)$ satisfies $x_n \in S$ but $f(x_n) \not\in [l,u]$. Such monitor design ensures that the false alarm rate is $0$. 

Hence, the challenge in supermartingale certificate monitoring is to compute a monitor for the expected-decrease condition, and in the rest of this work we focus on that.
Specifically, our construction uses fixed observation times $t_n=nh$ and bounds the probability of raising an alarm before any completed observation interval has positive conditional expected corrected drift.
Formally, let $L_n\coloneqq\sum_{r=t_n}^{t_{n+1}-1}l(\rX_r)$ in discrete time and $L_n\coloneqq\int_{t_n}^{t_{n+1}}l(\rX_r)\,\mathrm dr$ in continuous time denote the {\em accumulated allowed drift}.
Writing $\tau_\delta$ for the first alarm index and
$E_n\coloneqq\expe[f(\rX_{t_{n+1}})-f(\rX_{t_n})-L_n\mid\aF_n]$
for the conditional expected corrected drift, we prove
\begin{align*}
    \prob\left(
        \tau_\delta<\infty
        \ \text{and}\
        E_n\leq0\text{ for every }n<\tau_\delta
    \right)\leq\delta.
\end{align*}

\section{Runtime monitoring as betting-based sequential testing}
\label{sec:monitor-construction}
We use a betting-based sequential testing framework~\citep{ramdas2023game} as the main underlying ingredient to construct a monitor for supermartingale certificates. 
In this section, we first show how a sequence of observed states can be transformed into scores that provide evidence of a certificate violation. We then define a general monitoring scheme, namely an \emph{abstract monitor}, that guarantees false-alarm control for any admissible betting strategy.  Lastly, we instantiate the abstract monitor with concrete strategies designed to improve detection power while preserving monitor validity.

A supermartingale $(M_n)_{n\in\NN_0}$ is \emph{nonnegative}
if $M_n \ge 0$ almost surely for every $n\ge 0$.
In the betting-based framework, we use \emph{Ville's inequality} to turn such a process into a monitor
with a bound on the probability of ever raising a false alarm. Ville's inequality states that, for any nonnegative supermartingale $(M_n)_{n\in\NN_0}$ with $M_0=1$ and for every $\delta\in(0,1)$, the probability of the supermartingale value ever reaching or exceeding $1/\delta$ is at most $\delta$, i.e.,
\begin{align}
    \prob\left(
        \exists n\in\NN_0:M_n\geq 1/\delta
    \right)\leq\delta.
    \label{eq:ville-bound}
\end{align}
We set the first alarm index $\tau_\delta$ and the verdict $\rV_n$ to
\begin{align*}
    \tau_\delta
    \coloneqq\inf\{n\in\NN_0:M_n\geq1/\delta\},
     \quad\text{and} \quad 
    \rV_n \coloneqq\mathbf{1}\{\tau_\delta\leq n\} \quad \text{with $\inf\varnothing\coloneqq\infty$.}
\end{align*}
This defines a statistical test, such that whenever the null hypothesis ensures that $(M_n)_{n\in\NN_0}$ is a nonnegative supermartingale, the probability of ever raising an alarm is at most $\delta$.

\subsection{Scoring schemes}

To apply the betting framework, we assign a numerical score to
each pair of consecutive observations. A positive score indicates
that the observed certificate change exceeds the allowed drift
upper bound. However, a positive score can also occur by chance.
We construct the scores so that, whenever the 
expected-decrease condition holds, the expected next score,
given the observations so far, is at most zero.
This ensures that betting on these scores cannot increase wealth
in conditional expectation while the condition holds.
We also ensure that each score is at least $-1$ almost surely,
so that admissible bets cannot make the wealth negative.

\paragraph{Assumptions}
The monitor observes the Markov process $\pX$ at times
$t_n=nh$, with fixed intervals $h\in\RN_{>0}$ in continuous time and $h=1$
in discrete time. 
We write $\aF_n\coloneqq\sigma(\rX_{t_0},\dots,\rX_{t_n})$ for the information available to the monitor at $t_n$.
Furthermore, in continuous time, the deployed process is absorbed
at its first exit from $\aS$, retaining its terminal state. For example, for a reach--avoid certificate, the system terminates upon reaching the target or an unsafe state. 

We assume that the monitor can compute two bounds from its
observations. First, after observing a state at $t_{n+1}$, it can
compute a finite upper bound $\overline{L}_n\ge L_n$ almost
surely on the accumulated allowed drift. 
Subtracting this bound
from the observed certificate change gives the
\emph{corrected certificate change}
$D_n\coloneqq f(\rX_{t_{n+1}})-f(\rX_{t_n})-\overline{L}_n$.
Second, at $t_n$ and before observing state $\rX_{t_{n+1}}$, the monitor must already be able to compute a finite \emph{loss bound} $B_n\ge b_{\min}$,
where $b_{\min}>0$ is fixed, guaranteeing $D_n\ge -B_n$
almost surely. This ensures that the normalised score
$D_n/B_n$ is at least $-1$.
Formally, $\overline{L}_n$ is $\aF_{n+1}$-measurable and $B_n$
is $\aF_n$-measurable, and these requirements hold for every
$n\in\NN_0$.
Moreover, we assume that the random variables $f(\rX_{t_n})$, $L_n$, and $\overline{L}_n$ are integrable.

Concrete conservative bounds can be obtained if we know a lower bound on the certificate value $f(x)\geq f_{\min}$ and an upper bound on the allowed drift $l(x)\leq l_{\max}$ for all reachable states. Then we get $ \overline L_n=hl_{\max}$ and $ B_n=\max\{b_{\min},f(\rX_{t_n})+hl_{\max}-f_{\min}\}$.

\paragraph{Scoring}
The monitor compares the observed certificate change with the upper bound on the allowed drift and normalises it by the loss bound. A positive score therefore indicates that the observed certificate change exceeds the drift upper bound. The score is bounded below by $-1$ and has nonpositive conditional mean whenever the certificate condition holds.
Formally, upon observing a state at $t_{n+1}$, the monitor computes a raw score $D_n$ and the normalised score $W_n$:
\begin{align*}
    D_n&\coloneqq f(\rX_{t_{n+1}})-f(\rX_{t_n})-\overline{L}_n,
    &
    W_n&\coloneqq D_n/B_n.
\end{align*}
Let $\mu_n\coloneqq\expe[W_n\mid\aF_n]$ denote the expected
next score given the observations so far. The following lemma
shows that $\mu_n\le 0$ whenever the supermartingale
expected-decrease condition holds. Together with the lower bound $W_n\ge -1$ ensured by normalisation,
this makes the scores \emph{valid} for the betting construction, where
admissible bets preserve nonnegative wealth and cannot increase
it in conditional expectation while the certificate condition
holds. The proof is deferred to Appendix~\ref{app:proofs}.

\begin{lemma}[Score validity]
\label{lem:score-validity}
Recall that
$E_n\coloneqq\expe[
f(\rX_{t_{n+1}})-f(\rX_{t_n})-L_n\mid\aF_n]$
is the conditional expected corrected drift over
$[t_n,t_{n+1}]$. For every $n\in\NN_0$,
\begin{align}
    \mu_n
    =\frac{E_n-\expe[\overline L_n-L_n\mid\aF_n]}{B_n}
    \leq\frac{E_n}{B_n}
    \qquad\text{almost surely}.
    \label{eq:observable-violation}
\end{align}
Thus, if the expected-decrease condition holds and $E_n\leq0$, then
$\mu_n\leq0$ almost surely on $\{\rX_{t_n}\in\aS\}$.
\end{lemma}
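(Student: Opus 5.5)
The plan is to compute $\mu_n$ directly from the definitions, using linearity of conditional expectation and the fact that $B_n$ is $\aF_n$-measurable. Since $B_n\ge b_{\min}>0$ is $\aF_n$-measurable and finite, it can be pulled out of the conditional expectation:
\begin{align*}
    \mu_n=\expe[D_n/B_n\mid\aF_n]=\frac{1}{B_n}\expe[D_n\mid\aF_n].
\end{align*}
For this to be legitimate I first need $D_n$ to be integrable. This follows from the standing assumption that $f(\rX_{t_n})$, $f(\rX_{t_{n+1}})$ and $\overline L_n$ are integrable. Alternatively, one can use that $W_n\ge-1$, so the conditional expectation of $W_n$ is well defined in $(-\infty,\infty]$ and the pull-out property holds for nonnegative parts separately. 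I will take the integrability route, since it is what the assumptions are set up for. This integrability bookkeeping, together with making sure the pull-out of $1/B_n$ is done for a possibly unbounded but $\aF_n$-measurable factor, is the only step that needs care. I expect it to be the main (mild) obstacle; the rest is algebra.

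Next I split $D_n=\bigl(f(\rX_{t_{n+1}})-f(\rX_{t_n})-L_n\bigr)-\bigl(\overline L_n-L_n\bigr)$. Both pieces are integrable because $L_n$ is integrable by assumption. Taking conditional expectations given $\aF_n$ yields
\begin{align*}
    \expe[D_n\mid\aF_n]=E_n-\expe[\overline L_n-L_n\mid\aF_n],
\end{align*}
which gives the equality in \eqref{eq:observable-violation}. Since $\overline L_n\ge L_n$ almost surely, monotonicity of conditional expectation gives $\expe[\overline L_n-L_n\mid\aF_n]\ge0$ almost surely. Dividing by $B_n>0$ then gives the inequality $\mu_n\le E_n/B_n$.

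For the final sentence, on the event $\{\rX_{t_n}\in\aS\}$ with $E_n\le0$, the bound $E_n/B_n\le0$ follows immediately from $B_n>0$, so $\mu_n\le0$ almost surely there. I would also remark why $E_n\le0$ is the natural hypothesis under the expected-decrease condition. In discrete time with $h=1$, the Markov property gives $E_n=\expe[f(\rX_{t_n+1})\mid\rX_{t_n}]-f(\rX_{t_n})-l(\rX_{t_n})\le0$ on $\{\rX_{t_n}\in\aS\}$, because $\aF_n\subseteq\aG_{t_n}$ and the tower property applies. In continuous time this would require a Dynkin-formula argument using the regularity assumptions and absorption at exit from $\aS$. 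The lemma, however, takes $E_n\le0$ as a hypothesis, so this remark is not needed for the proof.
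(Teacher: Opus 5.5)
Your proposal is correct and takes essentially the paper's route: integrability of $D_n$ from the standing assumptions, pulling the $\aF_n$-measurable factor $1/B_n$ out of the conditional expectation, adding and subtracting $L_n$, and using $\overline L_n\ge L_n$ with monotonicity. The factor $1/B_n$ is in fact bounded by $1/b_{\min}$, so your "possibly unbounded" concern does not arise. The paper also derives $E_n\le0$ from the expected-decrease condition, via the Markov property in discrete time and Lemma~\ref{lem:ct-terminal-increments} in continuous time, exactly as in your closing remark.
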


\begin{example}
\label{ex:hill-scores}
Consider the hill robot in Example~\ref{ex:hill-rsm}, with observation interval $h=1$. Since $f$ is nonnegative on reachable states and $l\leq0$, valid bounds are $\overline L_n=0$ and $B_n=\max\{b_{\min},f(\rX_{t_n})\}$. 
These bounds only assume that the certificate cannot decrease by more than its current value. However, we can use the known system dynamics to derive more informative bounds. 
The allowed drift is exactly $l(\rX_{t_n})$, and a forward step decreases the certificate by only $0.01$. Before stopping, the corrected increment is therefore at least $-0.01+0.005=-0.005$. We can thus choose $\overline L_n=l(\rX_{t_n})$ and $B_n=b_{\min}=0.02$, giving
\begin{align*}
  W_n=\frac{f(\rX_{t_{n+1}})-f(\rX_{t_n})+0.005}{0.02}
    =Z_n-\tfrac14,
    \qquad \mu_n=a  \cdot b(\rX_{t_n}).
\end{align*}
Hence, a forward step results in $W_n=-1/4$, while a slip gives $W_n=3/4$, so $W_n\geq-1$ for every amplitude, which validates the loss bound. After stopping, $W_n=0$.
\end{example}

\subsection{Runtime monitoring under general betting strategies}
\label{sec:abstract-monitor}
We describe how the monitor accumulates evidence from past
scores and decides to raise an alarm. 
Our monitor combines multiple wealth accounts and 
alarms when its total wealth reaches $1/\delta$. 
The construction here is \emph{abstract} because we do not yet specify the betting
strategies in this subsection.

A single positive score may occur by chance, so the monitor
accumulates evidence by betting on successive scores,
following~\cite{ramdas2023game}.
At each observation index $s\in\NN_0$, we may launch a
set of betting strategies $\aA_s$, called \emph{experts}.
For each expert $a\in\aA_s$, let $G^a_{s,n}$ denote its
normalised wealth at index $n\ge s$, starting from $G^a_{s,s}=1$.
The capital allocated to these accounts is specified below. For every $n\geq s$, if $\rX_{t_n}\in \aS$, expert $a$ chooses an $\aF_n$-measurable stake $\lambda_{s,n}^a\in[0,1]$ before observing $W_n$, and otherwise sets $\lambda_{s,n}^a=0$. Then the account wealth is updated via:
\begin{align*}
    G_{s,n+1}^a
    \coloneqq G_{s,n}^a(1+\lambda_{s,n}^aW_n)
\end{align*}
Since $W_n\geq-1$, wealth remains nonnegative.
The stake controls the gains from positive scores and the losses from negative scores. A good betting strategy can therefore help wealth grow rapidly when the certificate condition fails, allowing the monitor to raise an alarm sooner.

\paragraph{Mixing experts}
Different betting strategies may detect different violations
more quickly. Since the monitor does not know in advance
which strategy will work best, it combines their wealth accounts. Specifically, for each observation index $s$, the monitor combines the accounts in $\aA_s$ using fixed, nonnegative, $\aF_s$-measurable weights $\eta_{s,a}$ satisfying $\sum_{a\in\aA_s}\eta_{s,a}=1$. The combined wealth is $Q_{s,n}\coloneqq\sum_{a\in\aA_s}\eta_{s,a}G_{s,n}^a$ with $Q_{s,s}=1$.

A mixture may lose wealth before a violation begins. To launch fresh mixtures without earlier losses, the monitor
reserves part of its initial unit capital. 
The monitor starts with one unit of capital. Let $c_s$ denote
the amount allocated to the mixture launched at observation
$s$, with $c_s=0$ if no mixture is funded. Before launching
the mixture at index $n$, the monitor has already allocated
$C_n\coloneqq\sum_{s=0}^{n-1}c_s$ to earlier mixtures, leaving
$1-C_n$ in reserve. It allocates an amount
$c_n\in[0,1-C_n]$ from this reserve to the new mixture.
This amount may depend on the observations so far
($c_n$ is $\aF_n$-measurable).
Launching a mixture therefore transfers existing capital
rather than adding new capital.
The total monitor wealth is the remaining reserve plus
the current wealth of all funded mixtures:
\[
M_n\coloneqq 1-C_n+
\sum_{s=0}^{n-1}c_s
\sum_{a\in\aA_s}\eta_{s,a}
\prod_{k=s}^{n-1}(1+\lambda_{s,k}^aW_k),
\qquad M_0=1.
\]
Since $Q_{n,n}=1$, funding a new mixture does not change the total wealth.
For example, funding every $q\in\NN_{\geq1}$ observations can use $ c_{jq}=6/(\pi^2(j+1)^2)$ for $j\in\NN_0$ and $c_n=0$ otherwise. The allocations sum to one and leave reserve
$6/\pi^2\sum_{i=j+2}^{\infty}i^{-2}>0$ so there always remains fresh capital.

\paragraph{Probabilistic guarantees}
If the certificate condition holds almost surely throughout execution, Lemma~\ref{lem:score-validity} gives $\lambda_{s,n}^a\mu_n=\lambda_{s,n}^a\expe[W_n\mid\aF_n]\leq0$, and since the current wealth and the stake are determined by
the observations available before $W_n$ is observed, we obtain
\begin{align*}
    \expe[G_{s,n+1}^a\mid\aF_n]
    =G_{s,n}^a(1+\lambda_{s,n}^a\mu_n)
    \leq G_{s,n}^a.
\end{align*}
Thus, each expert account is a nonnegative supermartingale for any admissible stake strategy.
Combining these accounts using fixed mixture weights preserves this property for $Q_{s,n}$. Funding new mixtures from the remaining reserve likewise preserves it for $M_n$.
The threshold in~\eqref{eq:ville-bound} therefore applies to the total wealth regardless of the admissible stakes, mixture weights, and funding chosen.

\begin{theorem}
\label{thm:monitor-validity}
For every admissible choice of stakes, expert weights, and
funding, the monitor satisfies
\begin{align}
    \prob\left(
        \tau_\delta<\infty
        \ \text{and}\
        E_k\leq0\text{ for every }k<\tau_\delta
    \right)\leq\delta
    \label{eq:sampled-interval-validity},
\end{align}
and, if the expected-decrease condition holds for
the deployed process, then $ \prob(\tau_\delta<\infty)\leq\delta.$
\end{theorem}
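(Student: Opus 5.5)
The plan is to prove the stronger, "up to the first violation" statement~\eqref{eq:sampled-interval-validity} by a stopping argument, and then derive the unconditional bound as a special case. Let $\sigma\coloneqq\inf\{k\in\NN_0: E_k>0\}$, where $E_k$ is the conditional expected corrected drift. Since $E_k$ is $\aF_k$-measurable, the event $\{\sigma\le k\}$ lies in $\aF_k$, so $\sigma$ is a stopping time. The event in~\eqref{eq:sampled-interval-validity} is exactly $\{\tau_\delta<\infty,\ \tau_\delta\le\sigma\}$. I will therefore work with a modified wealth process in which every expert stops betting from index $\sigma$ onwards, namely $\tilde\lambda^a_{s,n}\coloneqq\lambda^a_{s,n}\mathbf{1}\{n<\sigma\}$. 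These stakes are still $\aF_n$-measurable and lie in $[0,1]$, so they are admissible. Let $\tilde M_n$ be the total wealth built from them.

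First I show $\tilde M$ is a nonnegative supermartingale. Fix $n$ and an expert account. On $\{n\ge\sigma\}$ the stake is $0$, so the account is unchanged. On $\{n<\sigma\}\cap\{\rX_{t_n}\in\aS\}$ we have $E_n\le0$, and Lemma~\ref{lem:score-validity} gives $\mu_n\le E_n/B_n\le 0$. On $\{\rX_{t_n}\notin\aS\}$ the stake is $0$ by definition. In all cases $\tilde\lambda^a_{s,n}\mu_n\le0$ almost surely. Because $G^a_{s,n}$ and $\tilde\lambda^a_{s,n}$ are $\aF_n$-measurable and nonnegative, this yields $\expe[G^a_{s,n+1}\mid\aF_n]=G^a_{s,n}(1+\tilde\lambda^a_{s,n}\mu_n)\le G^a_{s,n}$.

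The conditional expectation needs integrability, and I expect this to be the main technical point. I plan to handle it by induction. Since $|W_n|\le |D_n|/b_{\min}$ and $D_n$ is integrable, $W_n$ is integrable. The difficulty is that $G^a_{s,n}W_n$ may fail to be integrable. I would sidestep this using the generalised (nonnegative) conditional expectation, under which $\expe[G_{s,n}(1+\lambda W_n)\mid\aF_n]=G_{s,n}(1+\lambda\mu_n)$ holds by pulling out the nonnegative $\aF_n$-measurable factor. The supermartingale inequality then holds, and it implies integrability inductively from $G_{s,s}=1$.

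Next, the mixture $Q_{s,n}=\sum_a\eta_{s,a}G^a_{s,n}$ uses $\aF_s$-measurable, nonnegative weights summing to one, so it inherits the supermartingale inequality for $n\ge s$. For the total wealth, write $M_{n+1}-M_n$ and note that funding $c_n$ moves capital from the reserve into $Q_{n,n}=1$, so it contributes $0$. What remains is $\sum_{s\le n}c_s\bigl(\tilde Q_{s,n+1}-\tilde Q_{s,n}\bigr)$ with $\aF_s$-measurable $c_s\ge0$, whose conditional expectation is $\le0$. Nonnegativity follows from $1-C_n\ge0$ and nonnegative accounts. Since $\tilde M_0=1$, Ville's inequality~\eqref{eq:ville-bound} gives $\prob(\exists n:\tilde M_n\ge1/\delta)\le\delta$.

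Finally, I show that the original and modified wealths coincide up to index $\sigma$: $\tilde M_n=M_n$ for all $n\le\sigma$, since stakes differ only at indices $k\ge\sigma$, which affect wealth only from $k+1>\sigma$ on. Hence on $\{\tau_\delta<\infty,\tau_\delta\le\sigma\}$ we get $\tilde M_{\tau_\delta}=M_{\tau_\delta}\ge1/\delta$, and~\eqref{eq:sampled-interval-validity} follows. For the second claim, suppose the expected-decrease condition holds for the deployed process. Then $E_k\le 0$ for all $k$: the tower property and the Markov property give $E_k\le\expe[\,\cdot\,]\le0$ by summing (in discrete time) or integrating via Dynkin's formula (in continuous time, with the regularity and absorption assumptions from Appendix~\ref{app:sec:continuous}) the per-step bound $\mathcal{A}f\le l$ against $L_k$. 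So $\sigma=\infty$ almost surely, and the event in~\eqref{eq:sampled-interval-validity} reduces to $\{\tau_\delta<\infty\}$. The Dynkin step is the other delicate point, and I would defer its details to the stated regularity assumptions.
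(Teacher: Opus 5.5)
Your proof of \eqref{eq:sampled-interval-validity} is essentially the paper's. Multiplying the stake by $\mathbf{1}\{n<\sigma\}$ gives the same wealth as multiplying the score by it, which is what the paper does. Your integrability, mixture, funding, Ville and agreement-up-to-$\sigma$ steps match its Steps 1--6.

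The gap is in the second claim. You assert that the expected-decrease condition gives $E_k\leq0$ for every $k$, so that $\sigma=\infty$ almost surely. In continuous time this is true: absorption makes $E_k\leq0$ everywhere, by Lemma~\ref{lem:ct-terminal-increments}. In discrete time it is false, for three reasons:
\begin{itemize}
\item the $(\aS,l)$-condition only constrains the one-step drift from states in $\aS$;
\item no absorption is assumed in discrete time;
\item $l$ vanishes off $\aS$.
\end{itemize}
So on $\{\rX_{t_k}\notin\aS\}$ you have $E_k=\expe[f(\rX_{t_k+1})-f(\rX_{t_k})\mid\aF_k]$, which can be strictly positive. An example is a ranking supermartingale whose process is not stopped at the target and can move back to larger certificate values. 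Then $\sigma<\infty$ with positive probability. The event in \eqref{eq:sampled-interval-validity} is then strictly smaller than $\{\tau_\delta<\infty\}$, and your reduction breaks.

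The missing ingredient is the admissibility rule that every stake is zero when $\rX_{t_n}\notin\aS$. You invoke it in your case analysis but never use it here. There are two ways to repair this.

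\emph{Direct argument.} This is what the paper does in its last step. Under the condition, $\lambda^a_{s,n}\mu_n\leq0$ almost surely, because $\mu_n\leq E_n/B_n\leq0$ on $\{\rX_{t_n}\in\aS\}$ and $\lambda^a_{s,n}=0$ elsewhere. So the actual wealth $M_n$ is already a nonnegative supermartingale, and Ville applies directly.

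\emph{Modified stopping index.} Replace $\sigma$ by $\sigma'\coloneqq\inf\{k:\rX_{t_k}\in\aS,\ E_k>0\}$. Your supermartingale argument goes through verbatim, since off $\aS$ the stake is zero anyway. Because $\sigma'\geq\sigma$, \eqref{eq:sampled-interval-validity} still follows. Under the condition, $\sigma'=\infty$ almost surely in both time models, which gives the unconditional bound.
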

Inequality~\eqref{eq:sampled-interval-validity} bounds the
probability of raising an alarm before the expected-decrease condition is violated, thus bounding the false-positive rate of the monitor, as desired. The proof is in Appendix~\ref{app:proofs}.

\paragraph{Detection power}
We also bound the probability that the monitor raises an alarm
within a block of observations during which the supermartingale
expected-decrease condition is persistently violated. Consider observation intervals
indexed by $k=s,\ldots,s+N-1$, and suppose that there exists $\varepsilon \in (0,1]$ such that $\expe[W_k\mid\aF_k]\geq\varepsilon$ and $W_k\leq1$ almost surely, for each $k$. Suppose that a betting strategy $a$, launched at index $r\leq s$,
has capital $\rho:=c_r\eta_{r,a}G_{r,s}^a>0$ at observation $s$
and uses the constant stake $\varepsilon/2$ over these intervals.
Then, for any $\gamma\in(0,1)$,
\begin{align*}
    \prob\left(\tau_\delta\leq s+N\mid\aF_s\right)
    \geq1-\gamma
    \quad\text{almost surely on }
    \left\{N\geq\frac{8}{\varepsilon^2}
    \log\frac{1}{\delta\rho\gamma}\right\}.
\end{align*}
Thus, if the violation persists for sufficiently many observation
intervals, the monitor raises an alarm by observation $s+N$
with conditional probability at least $1-\gamma$. The parameter $\varepsilon$ is a lower bound on the expected score at each observation. 
A larger value of $\varepsilon$ reduces the number of observation intervals
required to detect violation. A larger starting capital $\rho$
also reduces this number, because less wealth growth is needed
to reach the alarm threshold $1/\delta$.

\subsection{Weighted prediction strategies}
\label{sec:weighted-prediction}
We now give two concrete strategies for choosing the stake by predicting the next score from past scores. 
The better the monitor predicts the next score, the faster it can grow its capital.
The prediction is based on the fact that many realistic systems exhibit regularity in space and time.
Here, \emph{spatial regularity} means that nearby states have similar conditional expected scores, whereas \emph{temporal regularity} means that these expectations change slowly over time.
We consider a mixture over several experts,
where each expert guesses and weights past scores according to their spatial and temporal proximity to the current observation. If the guesses are accurate, this weighting can improve our estimate of the conditional mean of the next score.

\paragraph{Prediction weights}
For computational efficiency, we limit ourselves to a finite history, i.e., let $m\geq1$ be the number of scores retained in the history, and define the retained indices $I_n:=\{i\in\mathbb N_0:\max\{0,n-m\}\leq i<n\}.$
Let $a\in\mathcal{A}_s$ be an expert launched at observation index $s$.
The expert chooses a spatial scale $r^a_{\aX,s}$ and a temporal scale
$r^a_{\sT,s}$, each in $(0,\infty]$. We describe how the scales are obtained in Appendix C.
The scales serve as input to a spatial kernel $k_{\aX, s}^a$ and a temporal kernel $k_{\sT, s}^a$, respectively. Both kernels are nonincreasing, take values in $[0,1]$,
and equal one at zero. The kernels and scales are fixed when the expert
is launched. At each observation index $n\geq s$, $a$ assigns each retained
score $W_i$, $i\in I_n$, a prediction weight based on its starting
time $t_i$ and starting state $\rX_{t_i}$:
\begin{align}
\omega_{s,n,i}^a \coloneqq
k_{\sT,s}^a\left(\frac{t_n-t_i}{r_{\sT,s}^a}\right)
k_{\aX,s}^a\left(
    \frac{\|\rX_{t_n}-\rX_{t_i}\|}{r_{\aX,s}^a}
\right),
\qquad i\in I_n.
\label{eq:prediction-weights}
\end{align}
Intuitively, recent observations and observations starting near the current state
therefore receive greater weight. 
An infinite scale removes the corresponding dependence on distance,
using the convention $u/\infty=0$, e.g., temporal-only experts use $r_{\mathcal{X},s}^a=\infty$. Setting both scales to infinity gives uniform weights over the
retained history.

\paragraph{Betting strategies}
Next, we transform past scores and their prediction weights into stakes, using two concrete strategies when $X_{t_n}\in S$. A stake $\lambda$ multiplies the wealth by $1+\lambda W_n$,
so its \emph{log return} is $\log(1+\lambda W_n)$. Log returns add
across successive bets and therefore measure accumulated multiplicative
wealth growth. 
\emph{The log-return strategy} maximises the weighted historical log return:
\begin{align}
    \lambda_{s,n}^{a,\log}
    \coloneqq\min\operatorname*{arg\,max}_{0\leq\lambda\leq \overline{\lambda}}
    \sum_{i\in I_n}\omega_{s,n,i}^a\log(1+\lambda W_i) \quad \text{for $1/2 \leq \overline{\lambda} < 1$}
    \label{eq:log-return-stake} 
\end{align}

A second strategy is \emph{the moment strategy}, which maximises
a quadratic lower bound on the weighted historical log return.
Since $\log(1+\lambda w)\geq \lambda w-\lambda^2w^2$ for
$w\geq-1$ and $\lambda\in[0,1/2]$, this bound is
$\lambda S_{s,n}^a-\lambda^2U_{s,n}^a$, where
$S_{s,n}^a\coloneqq\sum_{i\in I_n}\omega_{s,n,i}^aW_i$
and $U_{s,n}^a\coloneqq\sum_{i\in I_n}\omega_{s,n,i}^aW_i^2$.
Its maximiser is
\begin{equation}
\lambda_{s,n}^{a,\mathrm{mom}}
\coloneqq \min\left\{\frac12,\frac{(S_{s,n}^a)_+}{2U_{s,n}^a}\right\},
\qquad (0/0\coloneqq0).
\label{eq:moment-stake}
\end{equation}
Thus positive weighted scores favour betting, whereas large
score magnitudes reduce the stake.

Both strategies choose zero when the weighted sum of the retained scores is nonpositive, including for an empty history. The moment strategy requires only two sums and bets no more than the log-return strategy for the same weights. 
At each launch, the monitor can include a range of experts $\mathcal{A}$ with different configurations of scales, kernels and strategies, to accommodate different violation patterns. Their accounts are then combined using the capital weights $\eta_{s,a}$ from~\Cref{sec:abstract-monitor}.

\section{Experimental Evaluation}
\label{sec:exp}
We evaluate on a wide range of benchmarks whether our monitoring framework can effectively detect violations of
supermartingale certificate conditions. We aim to address the following research questions. Can the monitors detect certificate violations under changed system dynamics (RQ1)? 
How do temporal and spatial scaling affect the power of monitoring (RQ2)?

\paragraph{Benchmarks} We consider three benchmark suites. 
The \emph{synthetic} suite consists of six families that isolate persistent, delayed, temporary,
and recurring local violations.
The \emph{control} suite comprises three systems which are variants of Building and Tanks
from PRoTECT~\citep{wooding2026protect}, together with an LQR system.
Their expected certificate increments can be computed
exactly, providing ground truth for evaluating detection.
The \emph{neural} suite uses 16 learned controller--certificate pairs
from LogRASM~\citep{BadingsKJJ25} to assess monitoring under modified
deployment dynamics.

\paragraph{Monitors}
We use three prediction schemes: uniform (U), which weights retained scores equally; temporal (T), which favours recent observations; and spatial (S), which favours observations from nearby states. On the \emph{synthetic} benchmarks, we compare moment (M) and log-return (L) strategies. The \emph{control} and \emph{neural} experiments use log-return and equal mixtures over five scales for T and S. Each monitor uses a false-alarm tolerance of $\delta=0.05$
and raises an alarm when its betting wealth reaches
$1/\delta=20$. Full configurations and implementation details are provided in the Appendix.

\paragraph{RQ1: Detection of certificate violations.} 
Our results show that the monitors detect certificate violations under changed dynamics, with spatial prediction being particularly
effective when violations are localised in the state space.
Table~\ref{tab:control-magnitude} illustrates how violation magnitude and exposure affect
detection. In Building, S-L detects every run, and its median
delay falls from 3043 to 732 observations as violations become
stronger and more frequent. The weaker, less frequent violations
in Tanks remain harder to detect, even with spatial prediction.
For LQR, we weaken the controller by reducing the applied
control input. With a 5\% reduction, violations occur throughout
each trajectory, yet each monitor detects fewer than half the
runs. With a 20\% reduction, all monitors detect every run. Overall this shows that 
even persistent violations can remain undetected when
their effect is too small to distinguish reliably from
stochastic fluctuations within the observed trajectory.

Table~\ref{tab:aggregate-results} summarises the mean eligible first-alarm percentages across a number of disturbed settings. On the \emph{control} benchmarks, S-L achieves 83.8\%, compared with 29.7\% for U-L and 28.2\% for T-L. Specifically in Building and Tanks, the certificate condition fails in
some regions even though it holds on average along the
trajectory. This suggests spatial prediction helps detect these local
failures by emphasising observations from nearby states. The \emph{neural} benchmarks show a similar advantage under expected nonincrease ($\epsilon=0$), where S-L raises alarms on 43.1\% of runs, compared with approximately 11\% for either baseline. The higher alarm rates for positive $\epsilon$ concern stronger decrease requirements tested on the same trajectories.

\begin{table*}[t]
    \centering
    \caption{Violation strength and detection in selected \emph{control}
settings, with 50 runs each. 
For each run, magnitude is the average of $\mu_n$ over violating observations, and exposure is the percentage of violating observations. Magnitude and exposure are reported as medians over the 50 runs. 
Detection entries give counts, with median delay in parentheses. Detection delay is measured in observations from violation onset
or first entry into the violating region. U-L, T-L, and S-L use uniform,
temporal, and spatial prediction, respectively. NR means fewer than $25$ runs alarm.}
    \small
    \setlength{\tabcolsep}{4pt}
    \begin{tabular}{@{}l|r|r|r|r|r@{}}
        \toprule
        & & & \multicolumn{3}{c}{Detections (median delay)} \\
        \cmidrule(lr){4-6}
        Benchmark \& Setting & Magnitude & Exposure (\%)
                & U-L & T-L & S-L \\
        \midrule
        Building: $8\%$ heating loss
            & 0.0151 & 24.9
            & 0 (NR) & 0 (NR) & 50 (3043) \\
        Building: $12\%$ heating loss
            & 0.0309 & 37.6
            & 0 (NR) & 0 (NR) & 50 (732) \\
        \midrule
        Tanks: burst probability $0.35$
            & 0.00561 & 5.5
            & 0 (NR) & 0 (NR) & 0 (NR) \\
        Tanks: burst probability $0.45$
            & 0.00615 & 11.0
            & 0 (NR) & 0 (NR) & 34 (28507) \\
        \midrule
        LQR: $5\%$ feedback loss
            & 0.00955 & 100.0
            & 19 (NR) & 12 (NR) & 16 (NR) \\
        LQR: $20\%$ feedback loss
            & 0.0383 & 100.0
            & 50 (2241) & 50 (1349) & 50 (2264) \\
        \bottomrule
    \end{tabular}
    \label{tab:control-magnitude}
\end{table*}

\begin{table*}[t]
    \centering
    \caption{Median detection delay in observations on the six \emph{synthetic} benchmark
families: persistent, delayed, temporary, direct recurrence,
stochastic ring, and hill. 
    XS--XL denote 
    extra-small through extra-large prediction scales (exact parameters given in Appendix). Temporal scales determine how far back in the history observations receive weight; spatial scales determine the neighbourhood radius around the current state.
    \emph{Equal} divides capital equally among the five scale experts. \emph{Uniform} weights all retained observations equally.
    NR means fewer than $25$ runs alarm.
    }
    \footnotesize
    \setlength{\tabcolsep}{2.5pt}
    \begin{tabular}{@{}lrrrrrr@{\hspace{10pt}}|lrrrrrr@{}}
        \toprule
        \multicolumn{7}{c}{Temporal prediction}
        & \multicolumn{7}{c}{Spatial prediction} \\
        \cmidrule(lr){1-7}\cmidrule(lr){8-14}
        Scale & Pers. & Del. & Temp. & Rec. & Ring & Hill
        & Scale & Pers. & Del. & Temp. & Rec. & Ring & Hill \\
        \midrule
        Uniform & 2126 & 3204 & NR & NR & 2360 & 588
        & Uniform & 2126 & 3204 & NR & NR & 2360 & 588 \\
         Equal & 2183 & 668 & 914 & 1102 & 664 & 492
        & Equal & 3351 & 2141 & NR & 571 & 654 & 482 \\
        \midrule
        XS & NR & 1565 & NR & 570 & 168 & 568
        & XS & NR & 3979 & NR & NR & NR & 670 \\
        S & 6719 & 756 & NR & 2110 & 1127 & 441
        & S & NR & 2977 & NR & NR & 655 & 594 \\
        M & 2442 & 575 & 766 & NR & 1713 & 476
        & M & 21051 & 3437 & NR & 1578 & 640 & 552 \\
        L & 2118 & 1174 & NR & NR & 2338 & 560
        & L & 7406 & 2275 & NR & 1040 & 1652 & 447 \\
        XL & 2127 & 2439 & NR & NR & 2347 & 587
        & XL & 2190 & 2379 & NR & 1576 & 2153 & 472 \\
        \bottomrule
    \end{tabular}
    \label{tab:synthetic-aggregate-delay}
\end{table*}

\paragraph{RQ2: Temporal vs. spatial scaling impact}
Our results show that uniform scaling, while sufficient for long persistent violations, struggles on traces with transient violations. Both temporal and spatial weighting are able to detect such behaviour. Spatial weighting, in particular, benefits from recurrent observations.
The correct scaling depends on the system, i.e., no uniformly best scaling is observed. 
However, mixing over scales can retain some of the benefits of the respective scales, being often only slightly worse than the best for the respective benchmark. Table~\ref{tab:synthetic-aggregate-delay} summarises the results. We can observe that the uniform monitor has a median detection delay of 2126 on persistent violations, on par with large spatial and temporal scales. By contrast, for temporary and recurrent violations (Rec. and Ring) its median detection delay is either larger or fewer than 50\% of the runs detect a fault. For example, on the spatial recurring ring benchmark, the medium spatial monitor has a delay of $640$, while the uniform monitor has a delay of $2360$; on the temporary fault benchmark only the medium temporal monitor alarmed among the individual scales in more than 50\% of the cases.

\begin{table}[t]
    \centering
    \caption{Percentage of runs in which each monitor raises an alarm,
averaged across settings (50 runs per setting). For \emph{neural}, $\epsilon=0$ tests expected nonincrease;
larger $\epsilon$ requires faster expected decrease.
}
    \small
    \setlength{\tabcolsep}{4pt}
    \begin{tabular}{@{}l|r|r|r|r@{}}
        \toprule
        Benchmark set & No. Settings & U-L(\%) & T-L(\%) & S-L(\%) \\
        \midrule
        Synthetic                  & 18 & 62.2 & 77.0 & 79.6 \\
        Control                    & 18 & 29.7 & 28.2 & 83.8 \\
        Neural, $\epsilon=0$       & 48 & 10.7 & 10.8 & 43.1 \\
        Neural, $\epsilon=0.0005$  & 48 & 71.9 & 71.7 & 77.8 \\
        Neural, $\epsilon=0.001$   & 48 & 76.1 & 76.3 & 81.0 \\
        \bottomrule
    \end{tabular}
    \label{tab:aggregate-results}
\end{table}

\section{Related work}

Supermartingale certificates have found broad applications in probabilistic program verification and stochastic (neural) control~\citep{BadingsKJJ25,ZikelicLVCH23,ChatterjeeNZ17,ChatterjeeGMZ22,TakisakaOUH21,MajumdarS25,KuraU26,PrajnaJP07,AnsaripourCHLZ23,XueZF24,AbateGR24,HenzingerMSZ25,DBLP:conf/lics/AbateGIR26}. Our work is closest to recent approaches for runtime monitoring of neural Lyapunov certificates in deterministic systems~\citep{YuZH25,HenzingerKY25}. Runtime monitoring is comparatively straightforward in the deterministic setting: a violation of the required decrease condition is directly observable from a single state transition. For supermartingale certificates, however, monitoring is fundamentally harder because the required decrease holds only in expectation and therefore cannot be assessed from a single transition. We overcome this challenge using betting-based sequential testing~\citep{ramdas2023game,waudby2024betting,TimansVNN25}, which we apply for the first time to supermartingale certificates for temporal properties. We further exploit temporal and spatial regularity in stochastic systems to make informed bets and improve detection power.

Runtime monitoring is an established technique for ensuring correctness of cyber-physical systems~\citep{BartocciDDFMNS18,BauerLS11}, providing a lightweight, black-box alternative to formal verification. Existing methods typically detect property violations directly from observed states, fundamentally restricting monitorable properties to safety~\citep{AlpernS85,PnueliZ06} and excluding properties such as reachability, stability, and general $\omega$-regular properties. Predictive monitoring~\citep{BortolussiCPSS19,LindemannQDP23,HenzingerKMYZ25} extends this scope by forecasting future states, but remains limited to finite time horizons. In contrast, by monitoring supermartingale certificates, our approach can detect violations of certificate conditions supporting general temporal properties over unbounded horizons. Related work~\citep{HindyLBKSP24arXiv,henzinger2025} uses martingale theory to detect distribution shifts or model misalignment in stochastic processes, but does not monitor the validity of certificates.

\section{Conclusion}
We introduced the first framework for runtime monitoring of supermartingale certificates for temporal properties that is applicable to both discrete- and continuous-time systems and to a broad range of supermartingale certificates. Our framework bounds the probability of ever raising a false alarm by a given threshold. Experimental evaluation demonstrates the practical effectiveness of our method for detecting supermartingale certificate violations at runtime in a range of stochastic control tasks. Interesting directions of future work include designing shielding and repair procedures that would use the alarms raised by our runtime monitoring framework.

\clearpage
\subsection*{AI use statement}
We used ChatGPT to assist with writing, literature search, theoretical development, and the design and implementation of experiments. We describe these uses below. For writing, we used these tools to draft and revise text, simplify explanations, shorten sections, and improve consistency in notation and terminology. We also used them to assist with LaTeX formatting, e.g., constructing tables. For literature search, we used these tools to identify relevant work and compare its assumptions and guarantees with ours. For theoretical development, we used these tools to suggest proof arguments, expand derivations, and identify possible gaps and counterexamples. They also helped clarify certain measure-theoretic arguments required in the continuous-time setting. For experiments, we used these tools to write basic code and generate plots. They supported the selection of system disturbances. They were used to draft the experimental setup in the appendix. 
The authors take full responsibility for the final content of the paper, including all AI-assisted text, mathematical claims and proofs, references, code, and experimental results.

\bibliographystyle{plainnat}
\bibliography{references}

\newpage
\appendix

\section{Continuous Time}
\label{app:sec:continuous}
\paragraph{Termination.}
For a drift condition $(\aS,l)$, we assume that the deployed process terminates at its first exit from $\aS$.
We retain the terminal state, so $\pX$ denotes the absorbed process.
More precisely,
\[
    \sigma\coloneqq\inf\{t\geq0:\rX_t\notin\aS\},
    \qquad \inf\varnothing\coloneqq\infty,
\]
is an $(\aG_t)$-stopping time, and, on $\{\sigma<\infty\}$,
$\rX_\sigma\notin\aS$ and $\rX_t=\rX_\sigma$ for every $t\geq\sigma$.
We do not require $\sigma$ to be finite.
For example, for a reach--avoid certificate, $\aS$ is the continuation region before reaching the target or an unsafe state.
This assumption concerns the actual first exit, including exits between observations.
This stopping-time requirement ensures that the process does not leave $\aS$ without our knowledge. Meaning, we are only concerned with the certificate and loss in $\aS$. Hence, we extend the allowed drift by zero outside $\aS$:
\begin{align}
    l_{\aS}(x)\coloneqq
    \begin{cases}
        l(x), & x\in\aS,\\
        0, & x\notin\aS.
    \end{cases}
    \label{eq:ct-terminal-drift}
\end{align}
Absorption ensures that an exit is detectable at the next observation because the terminal state is retained.
The proof of Lemma~\ref{lem:ct-terminal-increments} uses this to extend the regional drift inequality to all visited states.

\paragraph{Continuous-time assumptions.}
In continuous time, the expected drift condition bounds the instantaneous rate of change of the certificate. To obtain an expected bound on the certificate change between observations, we impose the following assumptions. 
As in the preliminaries, $\pX$ is a right-continuous, time-homogeneous Markov process, and $(\aG_t)$ is complete and right-continuous.
Right-continuity and adaptation ensure the joint measurability used in the Tonelli argument in the proof of Lemma~\ref{lem:ct-terminal-increments}.

\begin{itemize}
    \item \emph{Generator:}
    There is a finite measurable function $\mathcal Af$ such that, for every fixed $t\geq0$, almost surely,
    \[
        \mathcal Af(\rX_t)
        =\lim_{u\downarrow0}
        \frac{\expe[f(\rX_{t+u})\mid\rX_t]-f(\rX_t)}{u}.
    \]
    This is the generator drift of the absorbed process, with $\mathcal Af=0$ outside $\aS$.
    The identity is used in the proof of Lemma~\ref{lem:ct-terminal-increments} to express the instantaneous certificate condition as
    $\mathcal Af(\rX_r)\leq l(\rX_r)$ on $\{\rX_r\in\aS\}$.

    \item \emph{Integrability:}
    For every finite $T$, we assume
    \begin{align}
        \expe\left[
            |f(\rX_0)|
            +\int_0^T
            \bigl(|\mathcal Af(\rX_r)|+|l_{\aS}(\rX_r)|\bigr)
            \,\mathrm{d}r
        \right]<\infty.
        \label{eq:ct-terminal-integrability}
    \end{align}
    Together with integrability of $N_t^f$ below, this establishes integrability of $f(\rX_t)$ and $L_n$ in the opening argument of the proof of Lemma~\ref{lem:ct-terminal-increments}.
    It also ensures that the conditional expectation of the integrated drift difference in the final step is finite.

    \item \emph{Martingale:}
    The process $(N_t^f)_{t\geq0}$ defined by
    \begin{align}
        N_t^f\coloneqq
        f(\rX_t)-f(\rX_0)
        -\int_0^t\mathcal Af(\rX_r)\,\mathrm{d}r
        \label{eq:ct-terminal-martingale}
    \end{align}
    is a right-continuous $(\aG_t)$-martingale.
    Thus, subtracting the accumulated actual drift leaves a process with zero conditional expected change.
    The proof of Lemma~\ref{lem:ct-terminal-increments} uses this property to obtain
    $\expe[N_{t_{n+1}}^f-N_{t_n}^f\mid\aG_{t_n}]=0$,
    leaving only the integrated drift difference in the expression for $E_n$.
\end{itemize}
For the martingale formulation of extended generators, see \citet[Definition~4.1 and Proposition~4.1]{hirsch2012temporally}.
We additionally require the displayed infinitesimal limit to exist.
The certificate condition $\mathcal Af\leq l$ on $\aS$ is imposed separately.
The usual filtration conditions and right-continuity of $N^f$ are retained as regularity assumptions; the fixed-time proof of Lemma~\ref{lem:ct-terminal-increments} does not use them separately.

\paragraph{Observations.}
We observe the absorbed process at fixed times $t_n=nh$.
The observation filtration satisfies $\aF_n\subseteq\aG_{t_n}$, and $\rX_{t_n}$ is $\aF_n$-measurable.
The proof of Lemma~\ref{lem:ct-terminal-increments} applies the martingale identity at these fixed times and uses the filtration inclusion to condition further on $\aF_n$.
Measurability of the sampled state makes the loss bound below predictable, as required when taking $1/B_n$ outside the conditional expectation in the proof of Lemma~\ref{lem:score-validity}.

The accumulated allowed drift is
\begin{align}
    L_n\coloneqq
    \int_{t_n}^{t_{n+1}}l_{\aS}(\rX_r)\,\mathrm{d}r
    =
    \int_{t_n\wedge\sigma}^{t_{n+1}\wedge\sigma}
        l_{\aS}(\rX_r)\,\mathrm{d}r.
    \label{eq:ct-terminal-allowance}
\end{align}
Thus, the allowed drift stops accumulating at termination.
The interval containing termination is scored using the retained terminal state.
On later intervals, we set $\overline L_n=0$, giving $D_n=W_n=0$.
Termination itself does not raise a statistical alarm.

\paragraph{Bounds at termination.}
Suppose that $l(x)\leq c$ on $\aS$ and $f(x)\geq f_{\min}$ on
all reachable states, including terminal states.
Write $c_+\coloneqq\max\{c,0\}$.
If the termination time is available when it occurs, define
\begin{align*}
    \Delta_n\coloneqq
        (t_{n+1}\wedge\sigma)-(t_n\wedge\sigma),
    \qquad
    \overline L_n\coloneqq c\Delta_n.
\end{align*}
Then $L_n\leq c\Delta_n$ for either sign of $c$.
If only sampled states are available, an upper bound is
\begin{align}
    \overline L_n\coloneqq
    \begin{cases}
        ch, & \rX_{t_{n+1}}\in\aS,\\
        c_+h, & \rX_{t_n}\in\aS,\ \rX_{t_{n+1}}\notin\aS,\\
        0, & \rX_{t_n}\notin\aS.
    \end{cases}
    \label{eq:ct-terminal-observable-bound}
\end{align}
Indeed, survival until $t_{n+1}$ ensures that the whole interval was
spent in $\aS$, while a terminating interval has duration
$0\leq\Delta_n\leq h$ and $L_n\leq c\Delta_n\leq c_+h$.
If the process has already terminated, $L_n=0$.
In both cases, the predictable bound
\begin{align}
    B_n\coloneqq
    \max\{b_{\min},\,f(\rX_{t_n})+c_+h-f_{\min}\}
    \label{eq:ct-terminal-loss-bound}
\end{align}
satisfies $D_n\geq-B_n$, because $\overline L_n\leq c_+h$.
For a negative constant drift bound, using $ch$ on the terminating
interval would generally underestimate the accumulated allowance.
The piecewise bound above avoids this without requiring the exact
termination time.

\begin{lemma}[Continuous-time expected increments]
\label{lem:ct-terminal-increments}
Under the continuous-time assumptions above, $f(\rX_t)$ is integrable
for every finite $t$. If the $(\aS,l)$-expected drift condition holds,
then, for every $n\in\NN_0$,
\begin{align}
    E_n\coloneqq
    \expe\left[
        f(\rX_{t_{n+1}})-f(\rX_{t_n})-L_n
        \,\middle|\,\aF_n
    \right]\leq0
    \qquad\text{almost surely}.
    \label{eq:ct-terminal-increment-validity}
\end{align}
\end{lemma}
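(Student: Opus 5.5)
The plan is to read off both claims from the martingale identity \eqref{eq:ct-terminal-martingale}, evaluated at the fixed observation times $t_n=nh$.

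\emph{Integrability.} For fixed $t$, rearranging \eqref{eq:ct-terminal-martingale} gives
\[
f(\rX_t)=f(\rX_0)+N_t^f+\int_0^t\mathcal Af(\rX_r)\,\mathrm dr .
\]
Each term on the right is integrable: $f(\rX_0)$ by \eqref{eq:ct-terminal-integrability}, $N_t^f$ because it is a martingale, and the integral because $\expe\int_0^t|\mathcal Af(\rX_r)|\,\mathrm dr<\infty$. The same bound \eqref{eq:ct-terminal-integrability}, applied to $l_{\aS}$, shows that $L_n$ is integrable. Joint measurability of $(r,\omega)\mapsto\mathcal Af(\rX_r(\omega))$ holds because $\pX$ is right-continuous and adapted, hence progressively measurable. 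This lets me use Tonelli/Fubini, so the integrals are random variables and expectations commute with $\mathrm dr$.

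\emph{Decomposition of $E_n$.} Write
\[
f(\rX_{t_{n+1}})-f(\rX_{t_n})-L_n=\bigl(N^f_{t_{n+1}}-N^f_{t_n}\bigr)+\int_{t_n}^{t_{n+1}}\bigl(\mathcal Af(\rX_r)-l_{\aS}(\rX_r)\bigr)\,\mathrm dr.
\]
The martingale property gives $\expe[N^f_{t_{n+1}}-N^f_{t_n}\mid\aG_{t_n}]=0$. Since $\aF_n\subseteq\aG_{t_n}$, the tower property makes the conditional expectation given $\aF_n$ also zero. Hence
\[
E_n=\expe\!\left[\int_{t_n}^{t_{n+1}}\bigl(\mathcal Af-l_{\aS}\bigr)(\rX_r)\,\mathrm dr\,\middle|\,\aF_n\right],
\]
which is well defined by integrability.

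\emph{Sign of the integrand.} I will show that $g(r):=(\mathcal Af-l_{\aS})(\rX_r)\le0$ almost surely for each fixed $r$.
\begin{itemize}
\item On $\{\rX_r\in\aS\}$: the drift condition together with the generator identity at the fixed time $r$ gives $\mathcal Af(\rX_r)\le l(\rX_r)=l_{\aS}(\rX_r)$.
\item On $\{\rX_r\notin\aS\}$: the process is absorbed, and by assumption $\mathcal Af=0=l_{\aS}$ outside $\aS$, so $g(r)=0$.
\end{itemize}
Thus for each fixed $r$ we get $g(r)\le0$ a.s.

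\emph{Main obstacle.} The difficulty is passing from ``a.s.\ for each fixed $r$'' to the conditional expectation of the integral, since the null sets depend on $r$. I handle this with Fubini. The set $\{(r,\omega):g(r,\omega)>0\}$ is jointly measurable, and each $r$-section has probability zero. So it has $\mathrm dr\otimes\prob$-measure zero. Therefore, for almost every $\omega$, $g(r,\omega)\le0$ for Lebesgue-a.e.\ $r\in[t_n,t_{n+1}]$, and the integral is $\le0$ almost surely.

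Monotonicity of conditional expectation then yields $E_n\le0$ almost surely, which proves \eqref{eq:ct-terminal-increment-validity}. The identity $L_n=\int_{t_n\wedge\sigma}^{t_{n+1}\wedge\sigma}l_{\aS}(\rX_r)\,\mathrm dr$ in \eqref{eq:ct-terminal-allowance} is not needed for the bound itself. It only reflects that $l_{\aS}$ vanishes after termination.
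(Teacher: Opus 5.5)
Your proposal is correct and follows the paper's proof essentially step for step: integrability from the martingale decomposition, the fixed-time sign check on and off $\aS$, a Tonelli/Fubini argument showing $\int_{t_n}^{t_{n+1}}(\mathcal Af-l_{\aS})(\rX_r)\,\mathrm dr\le0$ almost surely, and the tower property via $\aF_n\subseteq\aG_{t_n}$. The only cosmetic difference is that the paper phrases the Fubini step as $\expe\bigl[\int_{t_n}^{t_{n+1}}(\mathcal Af-l_{\aS})_+(\rX_r)\,\mathrm dr\bigr]=0$ rather than through a $\mathrm dr\otimes\prob$-null set, and these are equivalent.
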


\begin{proof}
Fix an observation index $n\in\NN_0$.

\emph{Step 1: We check integrability.}
By the martingale decomposition in \eqref{eq:ct-terminal-martingale},
\[
    f(\rX_t)
    =f(\rX_0)+N_t^f
     +\int_0^t\mathcal Af(\rX_r)\,\mathrm{d}r.
\]
Taking absolute values and expectations gives
\[
    \expe[|f(\rX_t)|]
    \leq
    \expe[|f(\rX_0)|]
    +\expe[|N_t^f|]
    +\expe\left[\int_0^t|\mathcal Af(\rX_r)|\,\mathrm{d}r\right]
    <\infty.
\]
The first and third terms are finite by
\eqref{eq:ct-terminal-integrability}.
The second is finite because $N^f$ is a martingale.
The same integrability assumption gives
\[
    \expe[|L_n|]
    \leq
    \expe\left[
        \int_{t_n}^{t_{n+1}}|l_{\aS}(\rX_r)|\,\mathrm{d}r
    \right]<\infty.
\]
Thus, the certificate values and allowance defining $E_n$ are integrable.

\emph{Step 2: We bound the drift difference at each fixed time.}
Fix $r\geq0$.
If $\rX_r\in\aS$, the certificate condition gives
\[
    \mathcal Af(\rX_r)\leq l(\rX_r)=l_{\aS}(\rX_r)
    \qquad\text{almost surely}.
\]
If $\rX_r\notin\aS$, the process is absorbed.
Its generator drift is therefore zero, and $l_{\aS}(\rX_r)=0$
by definition.
Combining the two cases yields
\[
    \mathcal Af(\rX_r)-l_{\aS}(\rX_r)\leq0
    \qquad\text{almost surely}.
\]

\emph{Step 3: We pass from fixed times to the whole interval.}
Write
\[
    J_n\coloneqq
    \int_{t_n}^{t_{n+1}}
        \bigl(\mathcal Af(\rX_r)-l_{\aS}(\rX_r)\bigr)
        \,\mathrm{d}r.
\]
This integral is integrable because
\[
    \expe[|J_n|]
    \leq
    \expe\left[
        \int_{t_n}^{t_{n+1}}
        \bigl(|\mathcal Af(\rX_r)|+|l_{\aS}(\rX_r)|\bigr)
        \,\mathrm{d}r
    \right]<\infty.
\]
We next show that $J_n\leq0$ almost surely.
Step~2 shows that, for each fixed time $r$, the drift difference is nonpositive with probability one. We now apply Tonelli's theorem to deduce that its integral over the observation interval is also nonpositive with probability one. For $(z)_+\coloneqq\max\{z,0\}$, joint measurability and nonnegativity give
\begin{align*}
    &\expe\left[
        \int_{t_n}^{t_{n+1}}
        \bigl(\mathcal Af(\rX_r)-l_{\aS}(\rX_r)\bigr)_+
        \,\mathrm{d}r
    \right] =
    \int_{t_n}^{t_{n+1}}
        \expe\left[
            \bigl(\mathcal Af(\rX_r)-l_{\aS}(\rX_r)\bigr)_+
        \right]
        \,\mathrm{d}r  =0.
\end{align*}
The last equality follows from Step~2 at each fixed $r$.
A nonnegative random variable with expectation zero is zero almost surely.
Consequently,
\[
    J_n
    \leq
    \int_{t_n}^{t_{n+1}}
        \bigl(\mathcal Af(\rX_r)-l_{\aS}(\rX_r)\bigr)_+
        \,\mathrm{d}r =0
    \qquad\text{almost surely}.
\]

\emph{Step 4: We decompose the corrected certificate increment.}
Subtracting \eqref{eq:ct-terminal-martingale} at $t_n$ from the
same identity at $t_{n+1}$ gives
\[
    f(\rX_{t_{n+1}})-f(\rX_{t_n})
    =N_{t_{n+1}}^f-N_{t_n}^f
     +\int_{t_n}^{t_{n+1}}\mathcal Af(\rX_r)\,\mathrm{d}r.
\]
Subtracting $L_n$ then yields
\[
    f(\rX_{t_{n+1}})-f(\rX_{t_n})-L_n
    =N_{t_{n+1}}^f-N_{t_n}^f+J_n.
\]
Thus, the corrected increment is the sum of a martingale increment
and the nonpositive accumulated drift difference.

\emph{Step 5: We take conditional expectations.}
The martingale property gives
\[
    \expe[N_{t_{n+1}}^f-N_{t_n}^f\mid\aF_{t_n}]
    =N_{t_n}^f-N_{t_n}^f=0.
\]
Since $\aF_n\subseteq\aF_{t_n}$, the tower property gives
\begin{align*}
    \expe[N_{t_{n+1}}^f-N_{t_n}^f\mid\aF_n]
    &=
    \expe\left[
        \expe[N_{t_{n+1}}^f-N_{t_n}^f\mid\aF_{t_n}]
        \,\middle|\,\aF_n
    \right]\\
    &=0.
\end{align*}
Applying conditional expectation to the decomposition in Step~4,
we therefore obtain
\begin{align*}
    E_n
    &=\expe\left[
        f(\rX_{t_{n+1}})-f(\rX_{t_n})-L_n
        \,\middle|\,\aF_n
    \right]\\
    &=\expe[N_{t_{n+1}}^f-N_{t_n}^f\mid\aF_n]
      +\expe[J_n\mid\aF_n]\\
    &=\expe[J_n\mid\aF_n]\leq0.
\end{align*}
The final inequality follows from $J_n\leq0$ almost surely and
monotonicity of conditional expectation.
\end{proof}

\section{Validity}
\label{app:proofs}

\paragraph{Score decrease.}
We show the score decrease condition. Intuitively, the score compares the observed certificate change with an upper bound on the allowed drift. Subtracting this upper bound can only reduce the corrected increment. Normalising by a positive bound chosen before the next observation then preserves the sign of its conditional mean.

\begin{proof}[Proof of Lemma~\ref{lem:score-validity}]
The integrability assumptions imply that $D_n$ is integrable.
Since $B_n\geq b_{\min}>0$,
$|W_n|\leq |D_n|/b_{\min}$, so $W_n$ is also integrable.
The loss bound gives $W_n\geq-1$.
Adding and subtracting $L_n$ and using the
$\aF_n$-measurability of $B_n$ gives
\begin{align*}
    \mu_n
    &=\frac{\expe[D_n\mid\aF_n]}{B_n}=\frac{E_n-\expe[\overline L_n-L_n\mid\aF_n]}{B_n}
    \leq\frac{E_n}{B_n}.
\end{align*}
The inequality follows from $\overline L_n\geq L_n$.
In continuous time, Lemma~\ref{lem:ct-terminal-increments} gives
$E_n\leq0$ under the certificate condition.
In discrete time, $h=1$ gives $t_{n+1}=t_n+1$.
On $\{\rX_{t_n}\in\aS\}$, we have $L_n=l(\rX_{t_n})$.
The Markov property and the one-step expected-drift condition therefore give
\begin{align*}
    E_n
    &=\expe[
        f(\rX_{t_{n+1}})-f(\rX_{t_n})
        \mid\aF_n
    ]-l(\rX_{t_n})\\
    &=\expe[
        f(\rX_{t_n+1})-f(\rX_{t_n})
        \mid\rX_{t_n}
    ]-l(\rX_{t_n})
    \leq0
\end{align*}
almost surely on $\{\rX_{t_n}\in\aS\}$.
Consequently, the certificate condition implies $\mu_n\leq0$
for all continuous-time observations and, in discrete time,
on $\{\rX_{t_n}\in\aS\}$.
\end{proof}

\paragraph{Statistical guarantee.}
For the proof, we freeze the wealth just before the first observation interval with positive conditional expected corrected drift.
Before freezing, each bet has nonpositive expected gain;
afterwards, the wealth stays constant.
The frozen wealth is therefore a supermartingale, so Ville's
inequality applies.

\begin{proof}[Proof of Theorem~\ref{thm:monitor-validity}]
We first construct an auxiliary monitor whose scores always have
nonpositive conditional means.
We then apply Ville's inequality to its wealth and show that every
alarm covered by the historical guarantee also crosses the auxiliary
wealth threshold.

\emph{Step 1: We replace scores from the first violation onwards.}
We define the first index at which the conditional expected
corrected increment is positive, i.e., $\kappa\coloneqq\inf\{n\in\NN_0:E_n>0\}$, with $\inf\varnothing\coloneqq\infty$ and freeze the process at that time $ \widetilde W_n\coloneqq\mathbf1\{n<\kappa\}W_n.$ The auxiliary scores agree with the actual scores before $\kappa$ and are zero from $\kappa$ onwards.
For every $k\leq n$, the random variable $E_k$ is
$\aF_k$-measurable and hence $\aF_n$-measurable.
Consequently,
\[
    \{n<\kappa\}
    =\bigcap_{k=0}^{n}\{E_k\leq0\}\in\aF_n.
\]
The indicator multiplying $W_n$ is therefore measurable before
that score is observed.
We may take it outside the conditional expectation:
\[
    \expe[\widetilde W_n\mid\aF_n]
    =\mathbf1\{n<\kappa\}\expe[W_n\mid\aF_n]
    =\mathbf1\{n<\kappa\}\mu_n.
\]
On $\{n<\kappa\}$, score validity gives
$\mu_n\leq E_n/B_n\leq0$.
On its complement, the displayed expression is zero.
Hence $\expe[\widetilde W_n\mid\aF_n]\leq0$ almost surely. 
Moreover, $|\widetilde W_n|\leq|W_n|$ and
$\widetilde W_n\geq-1$, so the auxiliary scores remain integrable
and satisfy the same lower bound.

\emph{Step 2: Each auxiliary account is a nonnegative supermartingale.}
For each expert $a$ launched at $s$, define
\[
    \widetilde G_{s,s}^a=1,
    \qquad
    \widetilde G_{s,n+1}^a
    =\widetilde G_{s,n}^a
       (1+\lambda_{s,n}^a\widetilde W_n).
\]
We use exactly the stakes chosen by the actual monitor.
These stakes remain $\aF_n$-measurable, because they are still
computed from the actual observation history.

Since $0\leq\lambda_{s,n}^a\leq1$ and $\widetilde W_n\geq-1$, it follows that $  1+\lambda_{s,n}^a\widetilde W_n
    \geq1-\lambda_{s,n}^a\geq0.$
Every account therefore remains nonnegative.
Both its current wealth and its next stake are $\aF_n$-measurable,
so Step~1 gives
\begin{align*}
    \expe[\widetilde G_{s,n+1}^a\mid\aF_n]
    &=
    \widetilde G_{s,n}^a
    \expe[1+\lambda_{s,n}^a\widetilde W_n\mid\aF_n] =
    \widetilde G_{s,n}^a
    \bigl(1+\lambda_{s,n}^a
    \expe[\widetilde W_n\mid\aF_n]\bigr) \leq\widetilde G_{s,n}^a.
\end{align*}

This calculation also establishes integrability.
Because the next account value is nonnegative, its conditional
expectation is defined even before integrability is established,
although it could initially be infinite.
Starting from $\widetilde G_{s,s}^a=1$, the displayed inequality
and the tower property give inductively
\[
    \expe[\widetilde G_{s,n+1}^a]
    \leq\expe[\widetilde G_{s,n}^a]\leq1.
\]
Thus every account value has finite expectation, and
$(\widetilde G_{s,n}^a)_{n\geq s}$ is a nonnegative supermartingale.

\emph{Step 3: The auxiliary mixtures are also supermartingales.}
Use the same mixture weights as the actual monitor and set
\[
    \widetilde Q_{s,n}
    \coloneqq\sum_{a\in\aA_s}\eta_{s,a}\widetilde G_{s,n}^a.
\]
The weights are nonnegative and sum to one, so
$\widetilde Q_{s,n}\geq0$ and $\widetilde Q_{s,s}=1$.
They are chosen at launch and are therefore
$\aF_s$-measurable, hence $\aF_n$-measurable for $n\geq s$.
Using Step~2,
\begin{align*}
    \expe[\widetilde Q_{s,n+1}\mid\aF_n]
    &=
    \sum_{a\in\aA_s}\eta_{s,a}
        \expe[\widetilde G_{s,n+1}^a\mid\aF_n]\\
    &\leq
    \sum_{a\in\aA_s}\eta_{s,a}\widetilde G_{s,n}^a
    =\widetilde Q_{s,n}.
\end{align*}
As above, nonnegativity and the initial value
$\widetilde Q_{s,s}=1$ give integrability by induction.
Thus each mixture is a nonnegative supermartingale.
No independence between its accounts is needed.

\emph{Step 4: Funding preserves the supermartingale property.}
Use the actual funding amounts $c_s$ and define
\[
    C_n\coloneqq\sum_{s=0}^{n-1}c_s,
    \qquad
    \widetilde M_n
    \coloneqq1-C_n+\sum_{s=0}^{n-1}c_s\widetilde Q_{s,n}.
\]
The funding constraint $0\leq c_n\leq1-C_n$ ensures
$0\leq C_n\leq1$.
Hence the reserve and every funded contribution are nonnegative,
giving $\widetilde M_n\geq0$.
The sum contains finitely many terms at each $n$.
Since $c_s\leq1$ and each mixture is integrable,
$\widetilde M_n$ is integrable as well.
Initially, no accounts have been funded, so $\widetilde M_0=1$.
At index $n$, funding removes $c_n$ from the reserve and assigns
it to a new mixture with unit initial wealth.
Thus
\[
    \widetilde M_{n+1}
    =1-C_n-c_n
     +\sum_{s=0}^{n-1}c_s\widetilde Q_{s,n+1}
     +c_n\widetilde Q_{n,n+1}.
\]
All funding amounts appearing here are $\aF_n$-measurable.
Taking conditional expectations and applying Step~3 gives
\begin{align*}
    \expe[\widetilde M_{n+1}\mid\aF_n]
    &=
    1-C_n-c_n
    +\sum_{s=0}^{n-1}c_s
        \expe[\widetilde Q_{s,n+1}\mid\aF_n]
    +c_n\expe[\widetilde Q_{n,n+1}\mid\aF_n]\\
    &\leq
    1-C_n-c_n
    +\sum_{s=0}^{n-1}c_s\widetilde Q_{s,n}
    +c_n\widetilde Q_{n,n}\\
    &=
    1-C_n+\sum_{s=0}^{n-1}c_s\widetilde Q_{s,n}
    =\widetilde M_n.
\end{align*}
The last equality uses $\widetilde Q_{n,n}=1$:
the capital assigned to the new mixture exactly matches the
amount removed from the reserve.
Therefore $(\widetilde M_n)$ is a nonnegative supermartingale.

\emph{Step 5: We bound threshold crossings of the auxiliary wealth.}
Since $\widetilde M_0=1$, Ville's inequality gives
\[
    \prob\left(
        \exists n\in\NN_0:\widetilde M_n\geq1/\delta
    \right)\leq\delta.
\]

\emph{Step 6: We transfer this bound to the actual monitor.}
Consider an outcome on which $\tau_\delta<\infty$ and
$E_k\leq0$ for every $k<\tau_\delta$.
By definition of $\kappa$, this implies
$\kappa\geq\tau_\delta$.
Consequently,
\[
    \widetilde W_k=W_k
    \qquad\text{for every }k<\tau_\delta.
\]
These are precisely the scores used to compute the wealth at
$\tau_\delta$.
In particular, equality $\kappa=\tau_\delta$ causes no problem:
the score with index $\tau_\delta$ has not yet contributed to
that wealth.

The actual and auxiliary accounts have the same initial values
and use the same stakes.
Since their scores agree through index $\tau_\delta-1$, their
account values agree at $\tau_\delta$.
Their mixture weights, funding amounts, and reserves also agree.
It follows that
\[
    \widetilde M_{\tau_\delta}
    =M_{\tau_\delta}\geq1/\delta.
\]
We have therefore established the event inclusion
\[
    \left\{
        \tau_\delta<\infty,\;
        E_k\leq0\text{ for every }k<\tau_\delta
    \right\}
    \subseteq
    \left\{
        \exists n\in\NN_0:\widetilde M_n\geq1/\delta
    \right\}.
\]
Applying the bound from Step~5 yields
\[
    \prob\left(
        \tau_\delta<\infty
        \ \text{and}\ E_k\leq0
        \text{ for every }k<\tau_\delta
    \right)\leq\delta.
\]

\emph{Step 7: We obtain the unconditional false-alarm bound.}
Suppose that the certificate condition holds.
Score validity gives $\mu_n\leq0$ almost surely on
$\{\rX_{t_n}\in\aS\}$.
Outside $\aS$, every expert chooses stake zero.
Consequently, for every expert and every $n\geq s$,
\[
    \lambda_{s,n}^a\mu_n\leq0
    \qquad\text{almost surely}.
\]
We can therefore apply the account argument directly to the
actual scores:
\begin{align*}
    \expe[G_{s,n+1}^a\mid\aF_n]
    &=G_{s,n}^a
      \bigl(1+\lambda_{s,n}^a
          \expe[W_n\mid\aF_n]\bigr)=G_{s,n}^a(1+\lambda_{s,n}^a\mu_n)
    \leq G_{s,n}^a.
\end{align*}
The nonnegativity and integrability arguments from Step~2 apply
unchanged.
The mixture and funding arguments from Steps~3 and~4 then show
that the actual wealth $(M_n)$ is a nonnegative supermartingale
with $M_0=1$.
Ville's inequality therefore gives
\[
    \prob(\tau_\delta<\infty)
    =\prob\left(\exists n\in\NN_0:M_n\geq1/\delta\right)
    \leq\delta.
\]

\end{proof}

\paragraph{Monitoring several conditions.}
For a certificate with several drift conditions, each component
monitor must satisfy the score assumptions for its own condition.
The termination assumption applies separately to the region used
in each continuous-time condition.
In particular, one physical termination time suffices when the
conditions share a continuation region.
If their regions differ, stopping at the target or an unsafe state
does not by itself justify every component: each component must
have access to the process stopped at its own first exit, including
its exit state, or use another valid interval construction.

\begin{corollary}[Composition]
\label{cor:certificate-composition}
Suppose that a certificate has $J$ expected-drift conditions.
For each condition $j$, run a monitor satisfying
Theorem~\ref{thm:monitor-validity} at level $\delta_j>0$, where
$\sum_{j=1}^J\delta_j\leq\delta$.
Let $\tau$ be the first index at which any component monitor or
sampled domain-bound check raises an alarm.
If all certificate conditions hold for the deployed process,
then $\prob(\tau<\infty)\leq\delta$.
\end{corollary}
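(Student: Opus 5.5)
The plan is a union bound over the component alarms, using the second conclusion of Theorem~\ref{thm:monitor-validity} for each drift condition and the zero false-alarm rate of the sampled domain-bound checks. I would write the first alarm index as $\tau=\min\{\tau^{(1)},\dots,\tau^{(J)},\tau^{\mathrm{dom}}\}$. Here $\tau^{(j)}$ is the first alarm index of the monitor for the $j$-th expected-drift condition, and $\tau^{\mathrm{dom}}$ is the first index at which some sampled domain-bound check fires. The event then decomposes as
\begin{align*}
  \{\tau<\infty\}=\{\tau^{\mathrm{dom}}<\infty\}\cup\bigcup_{j=1}^J\{\tau^{(j)}<\infty\}.
\end{align*}

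Next I handle each term. For the domain-bound checks, the verdict fires only if an observed state $\rX_{t_n}\in\aS$ has $f(\rX_{t_n})\notin[l,u]$. If the deployed process satisfies the domain-bound condition at all times, this happens with probability zero, and a countable union over $n$ and over the finitely many bound conditions keeps it a null event. For each drift condition $j$, the hypothesis is that it holds for the deployed process and that component monitor $j$ satisfies the score assumptions for its own condition, including the termination assumption for its own region. These are exactly the hypotheses under which Theorem~\ref{thm:monitor-validity} gives $\prob(\tau^{(j)}<\infty)\leq\delta_j$. Subadditivity then yields $\prob(\tau<\infty)\leq 0+\sum_j\delta_j\leq\delta$.

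The main point needing care is that the component monitors share one trajectory, so their wealth processes are dependent. Theorem~\ref{thm:monitor-validity} is a marginal statement about each monitor, and the union bound needs no independence, so the dependence does no harm. A second point is that all monitors must be defined on the same probability space and observation filtration $(\aF_n)$, with the same times $t_n$ or at least adapted to a common filtration. I would note this explicitly but leave it unchanged, since each theorem application only uses its own monitor's measurability.
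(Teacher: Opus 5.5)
Your proposal is correct and is essentially the paper's proof. The sampled domain-bound checks fire only on a null event (countably many sampling times, finitely many conditions), and each drift monitor contributes at most $\delta_j$ by the second conclusion of Theorem~\ref{thm:monitor-validity}. A union bound, which needs no independence among the monitors sharing the trajectory, then gives $\prob(\tau<\infty)\leq\sum_{j}\delta_j\leq\delta$.
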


\begin{proof}
Write $\tau_j$ for the first alarm index of component $j$.
A valid domain-bound condition holds at every fixed sampled time
almost surely. Since there are countably many samples and finitely
many such conditions, their checks raise no alarm almost surely.
The component guarantees and the union bound therefore give
\begin{align*}
    \prob(\tau<\infty)
    \leq\sum_{j=1}^J\prob(\tau_j<\infty)
    \leq\sum_{j=1}^J\delta_j
    \leq\delta.
\end{align*}
Independence is not required.
\end{proof}

\section{Power}
\paragraph{Weighted stakes.}
The two stake strategies use the same weighted history to decide how strongly to bet. The log-return strategy directly maximises the historical log return.
The moment strategy instead maximises a quadratic lower bound, which requires only the weighted sum of the scores and their squares.
The following lemma justifies both constructions and explains how these quantities relate to the expected return of the next bet.
For the following lemma we fix an expert $a\in\mathcal A_s$, an observation index $n\geq s$, and a cap $1/2\leq\overline\lambda<1$. Moreover, for $ 0\leq\lambda\leq\overline\lambda$, define:
\begin{align*}
    \omega_i\coloneqq\omega_{s,n,i}^a,
    \quad 
    S\coloneqq\sum_{i\in I_n}\omega_iW_i,
    \quad
    U\coloneqq\sum_{i\in I_n}\omega_iW_i^2, \quad  H(\lambda)
    \coloneqq\sum_{i\in I_n}\omega_i\log(1+\lambda W_i).
\end{align*}
\begin{lemma}[Weighted stake strategies]
\label{lem:weighted-stake-strategies}
The following statements hold on $\{\rX_{t_n} \in \aS\}$.
\begin{enumerate}
    \item
    The log-return and moment stakes are $\mathcal F_n$-measurable, both are zero exactly when $S\leq0$, and $ 0 \leq\lambda_{s,n}^{a,\mathrm{mom}}      \leq\lambda_{s,n}^{a,\log} \leq\overline\lambda.$
    \item
    For every $\lambda\in[0,1/2]$, the moment stake maximises the lower bound $ H(\lambda)\geq\lambda S-\lambda^2U$ over $[0,1/2]$.

    \item
    For any $\mathcal F_n$-measurable stake $\lambda_n\in[0,1/2]$, if $ q_n\coloneqq \mathbb E[W_n^2\mid\mathcal F_n]<\infty$ a.s.\ then
    \begin{align}
        \mathbb E[\log(1+\lambda_nW_n)\mid\mathcal F_n]
        \geq\lambda_n\mu_n-\lambda_n^2q_n
        \qquad\text{almost surely}.
        \label{eq:conditional-log-growth}
    \end{align}
\end{enumerate}
\end{lemma}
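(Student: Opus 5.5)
The plan is to derive all three items from one elementary inequality and the concavity of $H$. The inequality is
\[
  \log(1+x)\geq x-x^2 \qquad\text{for } x\geq -\tfrac12 .
\]
To prove it, set $g(x)=\log(1+x)-x+x^2$. Then $g(0)=0$ and $g'(x)=\frac{x(1+2x)}{1+x}$, which is nonpositive on $[-\tfrac12,0]$ and nonnegative on $[0,\infty)$. Hence $0$ is the minimiser and $g\geq0$. For $w\geq-1$ and $\lambda\in[0,1/2]$ we have $x=\lambda w\geq-\tfrac12$. This gives $\log(1+\lambda w)\geq\lambda w-\lambda^2w^2$, and summing with the weights $\omega_i\geq0$ yields $H(\lambda)\geq\lambda S-\lambda^2U$, the bound in item~2.

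For item~2 itself, $q(\lambda)=\lambda S-\lambda^2U$ is a concave quadratic. If $S\leq0$, it is maximised at $\lambda=0$ on $[0,1/2]$. If $S>0$, its unconstrained maximiser is $S/(2U)$, where $U>0$ because some $\omega_iW_i^2>0$. Projecting onto $[0,1/2]$ gives exactly the formula \eqref{eq:moment-stake}.

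Item~3 follows by taking conditional expectations in the pointwise inequality with $\lambda=\lambda_n$. Since $\lambda_n$ is $\aF_n$-measurable, $\expe[\lambda_nW_n\mid\aF_n]=\lambda_n\mu_n$ and $\expe[\lambda_n^2W_n^2\mid\aF_n]=\lambda_n^2q_n$. The left-hand side is bounded below by an integrable variable, so its conditional expectation is well defined, and monotonicity gives \eqref{eq:conditional-log-growth}.

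For item~1, measurability comes from two facts. First, the weights $\omega_{s,n,i}^a$ and the scores $W_i$ with $i<n$ are $\aF_n$-measurable. Second, the stakes are measurable functions of finitely many of these quantities: $\min\arg\max$ of a continuous concave function on a compact interval can be written as a measurable selection. On $\{S>0\}$ the moment stake is a closed-form rational expression. The zero-stake characterisation uses concavity of $H$ on $[0,\overline\lambda]$; this is well defined because $1+\lambda W_i\geq1-\overline\lambda>0$. The derivative is $H'(0)=S$ and $H'$ is decreasing. If $S\leq0$, then $H$ is nonincreasing and $\min\arg\max=0$. If $S>0$, then $H$ increases near $0$, so the maximiser is positive. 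The moment stake is zero iff $S\leq0$ by inspection.

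The comparison $\lambda^{\mathrm{mom}}\leq\lambda^{\log}$ is the step I expect to be the main obstacle. I would compare derivatives, using $\frac{w}{1+\lambda w}\geq w-\lambda w^2$ for $\lambda w\geq-1$. Indeed, $w-(w-\lambda w^2)(1+\lambda w)=\lambda^2w^3$, and on the region where $1+\lambda w>0$ this residual must be checked for sign. Writing the difference instead as $\frac{\lambda^2w^3}{1+\lambda w}$, it is negative when $w<0$, so the pointwise derivative comparison fails in general. I would therefore try the cleaner route $\frac{w}{1+\lambda w}\geq w-2\lambda w^2$ for $\lambda w\geq-\tfrac12$. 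This holds because $w-(w-2\lambda w^2)(1+\lambda w)=\lambda w^2(1+2\lambda w)\geq0$, using $1+2\lambda w\geq0$ when $\lambda w\geq-\tfrac12$. Summing gives $H'(\lambda)\geq S-2\lambda U=q'(\lambda)$ for $\lambda\in[0,1/2]$. At $\lambda^{\mathrm{mom}}<1/2$ with $S>0$ we have $q'(\lambda^{\mathrm{mom}})=0$, so $H'\geq0$ on $[0,\lambda^{\mathrm{mom}}]$. By concavity, the smallest maximiser of $H$ is then at least $\lambda^{\mathrm{mom}}$ (if $H'(\lambda^{\mathrm{mom}})=0$ the two coincide). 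If $\lambda^{\mathrm{mom}}=1/2$, then $q'\geq0$ on $[0,1/2]$, so $H'\geq0$ there and $\lambda^{\log}\geq1/2$ since $\overline\lambda\geq1/2$. The bound $\lambda^{\log}\leq\overline\lambda$ holds by definition.
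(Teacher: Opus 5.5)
Your proposal is correct and follows essentially the same route as the paper. Both use the auxiliary function $\log(1+z)-z+z^2$ with derivative $z(1+2z)/(1+z)$, concavity of $H$ with $H'(0)=S$, the comparison $H'(\lambda)\geq S-2\lambda U$ on $[0,1/2]$, and a conditional expectation of the pointwise bound for item~3. For the comparison, the paper uses $1+\lambda W_i\geq 1/2$ where you use the identity $w-(w-2\lambda w^2)(1+\lambda w)=\lambda w^2(1+2\lambda w)$, and it argues measurability through the bisection iterates rather than a measurable selection. To rule out a smaller maximiser of $H$, use the strict inequality $S-2\lambda U>0$ for $\lambda<\lambda^{\mathrm{mom}}$, which holds because $U>0$ when $S>0$; $H'\geq 0$ alone does not exclude a flat stretch.
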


\begin{proof}

We first establish the quadratic lower bound.
For $z\geq-1/2$, define $\psi(z)\coloneqq\log(1+z)-z+z^2$. We take its derivative $ \psi'(z)=\frac{z(1+2z)}{1+z}$ and observe that $\psi'(z)\leq0$ on $[-1/2,0]$, because the factor $z$ is nonpositive, while $1+2z$ is nonnegative and $1+z$ is positive. Moreover, on $[0,\infty)$, all factors are nonnegative, so $\psi'(z)\geq0$. Thus, $\psi$ attains its minimum at zero.
Since $\psi(0)=0$, substituting $z=\lambda w$ implies that
\begin{align}
    \log(1+\lambda w)
    \geq\lambda w-\lambda^2w^2
    \qquad
    \text{for }w\geq-1,\quad 0\leq\lambda\leq\tfrac12.
    \label{eq:quadratic-log-lower-bound}
\end{align}
Multiplying by the nonnegative weights and summing yields $  H(\lambda)\geq\lambda S-\lambda^2U$.

The linear term rewards a positive weighted score sum, while the quadratic term penalises large stakes.
If $U>0$, the derivative of this lower bound is $S-2\lambda U$, so its maximiser over $[0,1/2]$ is
\begin{align*}
    \lambda_{s,n}^{a,\mathrm{mom}}
    =\min\left\{\frac12,\frac{S_+}{2U}\right\}.
\end{align*}
If $U=0$, every score with positive weight is zero.
Hence, $S=0$, the lower bound is constant, and the strategy chooses zero.

We next consider the log-return strategy.
Since $W_i\geq-1$ and $\overline\lambda<1$, the logarithms are well-defined throughout the optimisation interval.
The objective is continuous on a compact interval, so a maximiser exists.
Its derivatives are
\begin{align*}
    H'(\lambda)
    &=\sum_{i\in I_n}\frac{\omega_iW_i}{1+\lambda W_i},
    &
    H''(\lambda)
    &=-\sum_{i\in I_n}
    \frac{\omega_iW_i^2}{(1+\lambda W_i)^2}.
\end{align*}
Consequently, $H$ is concave, and strictly concave whenever $\sum_{i\in I_n}\omega_iW_i^2>0$. The derivative $H'(0)=\sum_{i\in I_n}\omega_iW_i$ describes whether increasing the stake from zero initially improves the historical return.
If $S\leq0$, concavity implies that zero is the smallest maximiser.
If $S>0$, the maximiser is positive.
It equals $\overline\lambda$ when $H'(\overline\lambda)\geq0$; otherwise, it is the unique root of $H'$ in $(0,\overline\lambda)$.
This gives the endpoint checks and bisection procedure used in the implementation.
All inputs are $\mathcal F_n$-measurable, as are the bisection iterates and their limit.
The explicit moment stake is likewise measurable.

To compare the stakes, suppose $S>0$.
For $\lambda\in[0,1/2]$, we have $1+\lambda W_i\geq1/2$, and therefore
\begin{align*}
    H'(\lambda)
    &=S-\lambda\sum_{i\in I_n}
      \frac{\omega_iW_i^2}{1+\lambda W_i} \geq S-2\lambda U.
\end{align*}
At the moment stake, the final expression is nonnegative.
Thus, the historical log return has not started decreasing at that stake, so its maximiser cannot lie below it.
If $S\leq0$, both stakes are zero.

Finally, the pointwise logarithmic bound also applies to the next score.
Taking conditional expectations and using predictability gives
\begin{align*}
    \mathbb E[\log(1+\lambda_nW_n)\mid\mathcal F_n]
    \geq\lambda_n\mu_n-\lambda_n^2q_n.
\end{align*}
The logarithm is integrable because
\begin{align*}
    \bigl|\log(1+\lambda_nW_n)\bigr|
    \leq\log2+\tfrac12(W_n)_+,
\end{align*}
and $W_n$ is integrable under the standing assumptions.
\end{proof}
The conditional bound explains why the moment strategy uses both the scores and their squares.
A positive conditional mean favours betting, while a large second moment makes a large stake less attractive.
In particular, the expected log return is positive whenever
\begin{align*}
    \lambda_n>0
    \qquad\text{and}\qquad
    \mu_n>\lambda_nq_n.
\end{align*}
Thus, sufficiently small positive stakes ensure positive expected log return when the conditional mean is positive and the conditional second moment is finite. The log-return strategy achieves at least as much weighted historical log return as the moment strategy, but neither is necessarily better for the next score, e.g., overcommitting to a wrong outcome may decrease wealth drastically.

\paragraph{Locality.}
We choose spatial and temporal scales using guesses about the regularity of the conditional expected scores.
Fix $\theta\in(0,1)$ and an expert $a\in\aA_s$ launched at observation index $s$, which aims to detect a conditional expected score of at least $\mu_{\star,s}^a>0$.
The expert guesses spatial and temporal Lipschitz constants
$\alpha_{\aX,s}^a>0$ and $\alpha_{\sT,s}^a>0$.
For the spatial regularity argument, suppose that
$\mu_n=\expe[W_n\mid\aF_n]=g_h(\rX_{t_n})$ almost surely for every $n$,
where $g_h\colon\aX\to\RN$ gives the expected score over an observation interval of length $h$.
The regularity guesses are
\begin{equation}
\label{eq:lipschitz}
\begin{aligned}
|g_h(y)-g_h(x)|
  &\leq\alpha_{\aX,s}^a\lVert x-y\rVert,
  &&\text{for all }x,y\in\aX,\\
|\mu_n-\mu_i|
  &\leq\alpha_{\sT,s}^a|t_n-t_i|,
  &&\text{almost surely, for all }0\leq i<n.
\end{aligned}
\end{equation}
Using $\theta\mu_{\star,s}^a$ as the tolerated difference between past and current conditional means, we define
\[
    r_{\aX,s}^a\coloneqq
    \frac{\theta\mu_{\star,s}^a}{\alpha_{\aX,s}^a},
    \qquad
    r_{\sT,s}^a\coloneqq
    \frac{\theta\mu_{\star,s}^a}{\alpha_{\sT,s}^a}.
\]
For a cutoff or linear kernel with finite scale, if the corresponding guess is correct and $\mu_n\geq\mu_{\star,s}^a$, every observation with positive weight has conditional mean
$\mu_i\geq(1-\theta)\mu_{\star,s}^a>0$.
Giving these observations greater weight can help the expert choose positive stakes during violations, without affecting false-alarm control.

\paragraph{Lipschitz Regularity.}
The regularity guesses guide which historical observations are useful for predicting the next score.
Fix an expert $a\in\aA_s$.
Suppose that its spatial regularity guess is valid and its spatial kernel is a cutoff or linear kernel with scale
\begin{align*}
    r_{\aX,s}^a
    =\frac{\theta\mu_{\star,s}^a}{\alpha_{\aX,s}^a}.
\end{align*}
A positive prediction weight requires the spatial kernel to be positive.
For a cutoff or linear kernel, this is possible only when the distance from the historical state to the current state is at most the spatial scale.
Hence,
\begin{align*}
    \omega_{s,n,i}^a>0
    \quad\Longrightarrow\quad
    \|\rX_{t_i}-\rX_{t_n}\|\leq r_{\aX,s}^a.
\end{align*}
If the spatial regularity guess is valid, the conditional means at these states therefore satisfy
\begin{align*}
    |\mu_i-\mu_n|
    &=|g_h(\rX_{t_i})-g_h(\rX_{t_n})|\\
    &\leq\alpha_{\aX,s}^a\|\rX_{t_i}-\rX_{t_n}\|
    \leq\alpha_{\aX,s}^a r_{\aX,s}^a
    =\theta\mu_{\star,s}^a.
\end{align*}
Thus, the chosen radius ensures that every observation with positive weight has a conditional mean within $\theta\mu_{\star,s}^a$ of the current conditional mean.
The same conclusion follows from a valid temporal guess and the corresponding temporal kernel.
In particular, if $\mu_n\geq\mu_{\star,s}^a$, then
\begin{align*}
    \omega_{s,n,i}^a>0
    \quad\Longrightarrow\quad
    \mu_i\geq(1-\theta)\mu_{\star,s}^a>0.
\end{align*}

Hence, when the current conditional mean is sufficiently positive, the selected observations also come from positive-mean steps.
Their realised scores may nevertheless be negative, and their second moments may differ.
Geometric kernels retain observations beyond the chosen scale, so they favour relevant observations without ensuring this implication.
These regularity guesses guide prediction and need not hold for false-alarm control.

\paragraph{Accumulating evidence.}
A positive expected log return favours wealth growth, but an alarm requires the realised wealth to reach the prescribed threshold.
The following lemma connects the evidence accumulated by one expert to the monitor's verdict.

\begin{lemma}[Sufficient evidence for detection]
Fix a starting index $s$, an expert $a\in\mathcal A_s$ with $c_s\eta_{s,a}>0$, and an index $n>s$.
Suppose that this expert uses stakes strictly below one.
If
\begin{align}
    \sum_{k=s}^{n-1}\log(1+\lambda_{s,k}^aW_k)
    \geq
    \log\frac{1}{\delta c_s\eta_{s,a}},
    \label{eq:sufficient-log-evidence}
\end{align}
then $\tau_\delta\leq n$.
If its stakes are at most $1/2$, it suffices that
\begin{align}
    \sum_{k=s}^{n-1}
    \left(
        \lambda_{s,k}^aW_k
        -(\lambda_{s,k}^a)^2W_k^2
    \right)
    \geq
    \log\frac{1}{\delta c_s\eta_{s,a}}.
    \label{eq:sufficient-quadratic-evidence}
\end{align}
\end{lemma}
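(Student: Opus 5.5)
The plan is to lower-bound the total monitor wealth $M_n$ by the single funded contribution of expert $a$ launched at $s$. Every term in $M_n$ is nonnegative: the reserve $1-C_n\geq0$ by the funding constraint, and each account is nonnegative because $\lambda\in[0,1]$ and $W_k\geq-1$. Also $n>s$, so the mixture launched at $s$ appears in the sum defining $M_n$. Dropping all other terms therefore gives
\[
M_n\;\geq\;c_s\eta_{s,a}G^a_{s,n}
=c_s\eta_{s,a}\prod_{k=s}^{n-1}\bigl(1+\lambda^a_{s,k}W_k\bigr).
\]

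The next step is to pass to logarithms. Since the stakes are strictly below one and $W_k\geq-1$, each factor satisfies $1+\lambda^a_{s,k}W_k\geq1-\lambda^a_{s,k}>0$. Hence the product is strictly positive and its logarithm is the finite sum in \eqref{eq:sufficient-log-evidence}. Checking this positivity is the only point needing care, and it is exactly why the hypothesis of stakes strictly below one is imposed. Under \eqref{eq:sufficient-log-evidence}, exponentiating gives $G^a_{s,n}\geq 1/(\delta c_s\eta_{s,a})$, and hence $M_n\geq1/\delta$. By the definition of $\tau_\delta$ as an infimum, $\tau_\delta\leq n$.

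For the second claim, I assume stakes at most $1/2$, which is in particular below one, so the first claim applies. I then apply the pointwise inequality \eqref{eq:quadratic-log-lower-bound} from the proof of Lemma~\ref{lem:weighted-stake-strategies}, which holds for $w\geq-1$ and $\lambda\in[0,1/2]$, with $w=W_k$ and $\lambda=\lambda^a_{s,k}$. This gives $\log(1+\lambda^a_{s,k}W_k)\geq\lambda^a_{s,k}W_k-(\lambda^a_{s,k})^2W_k^2$ for each $k$. Summing over $k=s,\dots,n-1$ shows that \eqref{eq:sufficient-quadratic-evidence} implies \eqref{eq:sufficient-log-evidence}, and the first claim finishes the argument.

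No step is a real obstacle. The only care needed is the bookkeeping that the mixture launched at $s$ is already included in $M_n$ once $n>s$, and that all other contributions to $M_n$ are nonnegative.
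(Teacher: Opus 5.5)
Your proposal is correct and follows essentially the same route as the paper's proof: you bound $M_n$ below by the single contribution $c_s\eta_{s,a}G^a_{s,n}$, use positivity of the betting factors to pass to logarithms and exponentiate, and then reduce the quadratic condition to the log condition via \eqref{eq:quadratic-log-lower-bound}. The bookkeeping points you flag (inclusion of the mixture launched at $s$ once $n>s$, nonnegativity of the reserve and the other accounts, and stakes below one giving strictly positive factors) are the same ones the paper relies on.
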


\begin{proof}
The mixture launched at index $s$ receives $c_s$ units of capital, of which the fraction $\eta_{s,a}$ is assigned to expert $a$.
The account $G_{s,n}^a$ starts from one and records how this initial capital grows through betting.
Hence, the expert's contribution to the total wealth at index $n$ is $c_s\eta_{s,a}G_{s,n}^a$.

By definition, the total wealth is
\begin{align*}
    M_n
    =1-C_n
    +\sum_{r=0}^{n-1}c_r
      \sum_{b\in\mathcal A_r}\eta_{r,b}G_{r,n}^b.
\end{align*}
Since $n>s$, this sum includes the contribution of expert $a$ launched at $s$.
The reserve $1-C_n$ and all other contributions are nonnegative.
Keeping only the chosen expert's contribution therefore gives the lower bound
\begin{align*}
    M_n\geq c_s\eta_{s,a}G_{s,n}^a.
\end{align*}
Thus, it suffices to show that this expert's contribution alone reaches the alarm threshold $1/\delta$.
The expert's account is the product of its successive betting factors:
\begin{align*}
    G_{s,n}^a
    =\prod_{k=s}^{n-1}(1+\lambda_{s,k}^aW_k).
\end{align*}
Each factor is strictly positive because $W_k\geq-1$ and $\lambda_{s,k}^a<1$.
We can therefore express the product as the exponential of the sum of its logarithms.
Under the first condition of the lemma,
\begin{align*}
    G_{s,n}^a
    &=\exp\left(
        \sum_{k=s}^{n-1}\log(1+\lambda_{s,k}^aW_k)
    \right) \geq\exp\left(
        \log\frac{1}{\delta c_s\eta_{s,a}}
    \right)=\frac{1}{\delta c_s\eta_{s,a}}
\end{align*}
Multiplying by the expert's initial capital gives
\begin{align*}
    M_n
    \geq c_s\eta_{s,a}G_{s,n}^a
    \geq c_s\eta_{s,a}
         \frac{1}{\delta c_s\eta_{s,a}}
    =\frac{1}{\delta}.
\end{align*}
Since $\tau_\delta$ is the first index at which the total wealth reaches $1/\delta$, it follows that $\tau_\delta\leq n$.

Finally, suppose that all stakes of this expert are at most $1/2$.
Applying \eqref{eq:quadratic-log-lower-bound} to each betting factor and summing yields
\begin{align*}
    \sum_{k=s}^{n-1}\log(1+\lambda_{s,k}^aW_k)
    \geq
    \sum_{k=s}^{n-1}
    \left(
        \lambda_{s,k}^aW_k
        -(\lambda_{s,k}^a)^2W_k^2
    \right).
\end{align*}
If the second condition holds, the right-hand side is at least
$\log(1/(\delta c_s\eta_{s,a}))$.
The sum of log returns is therefore also at least this threshold, which establishes the first condition.
The preceding argument then gives $\tau_\delta\leq n$.
\end{proof}

The lemma explains the cost of dividing capital among experts.
An expert with less initial capital must accumulate more log return to trigger an alarm by itself.
For example, dividing a mixture equally among $K$ experts adds $\log K$ to this sufficient threshold compared with assigning that mixture to one expert.
This cost allows the monitor to support several strategies without selecting one in advance.
Since other accounts also contribute wealth, the monitor may alarm before any individual expert satisfies the condition.

\paragraph{Monitoring several conditions.}
A certificate may contain several expected-decrease conditions.
We can monitor them jointly by dividing the permitted false-alarm probability among their monitors. This union-bound approach allows the monitors to use the same observations, because independence is not required.

\begin{corollary}[Composition]
\label{cor:certificate-composition}
Suppose that a certificate has $J$ expected-decrease conditions.
For each condition $j$, run a monitor satisfying
Theorem~\ref{thm:monitor-validity} at a level $\delta_j>0$,
where $\sum_{j=1}^{J}\delta_j\leq\delta$.
Let $\tau$ be the first index at which any component monitor
or sampled domain-bound check raises an alarm.
If all certificate conditions hold for the deployed process,
then $\prob(\tau<\infty)\leq\delta$.
\end{corollary}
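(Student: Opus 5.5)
The plan is a union bound over the component monitors and the domain-bound checks. No independence between them is needed, so it does not matter that they all read the same sampled trajectory. Write $\tau_j$ for the first alarm index of the monitor for the $j$-th expected-decrease condition. Write $\tau_{\mathrm{db}}$ for the first index at which some sampled domain-bound check fires. Then
\[
\{\tau<\infty\}\subseteq\{\tau_{\mathrm{db}}<\infty\}\cup\bigcup_{j=1}^J\{\tau_j<\infty\},
\]
because $\tau$ is by definition the minimum of these indices.

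First I would show that the domain-bound part contributes probability zero. The domain-bound checks are deterministic and look only at the last sampled pair. Under the hypothesis, each $(\aS,[l,u])$-domain-bound condition holds at every time $t\in\sT$ almost surely. In particular it holds at each fixed sampling time $t_n=nh$. There are countably many sampling times and finitely many domain-bound conditions, so a countable union of null events is null. Hence $\prob(\tau_{\mathrm{db}}<\infty)=0$.

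Second, for each $j$, the hypothesis says the $j$-th expected-decrease condition holds for the deployed process. Component $j$ is assumed to satisfy Theorem~\ref{thm:monitor-validity} at level $\delta_j$, so its second conclusion applies directly and gives $\prob(\tau_j<\infty)\le\delta_j$. For this step I would check that each component's scoring assumptions refer to its own condition $(\aS_j,l_j)$. These are the bounds $\overline L_n$ and $B_n$, and in continuous time the absorption at the first exit from $\aS_j$, as in the remark preceding the corollary. Those are exactly the hypotheses under which Theorem~\ref{thm:monitor-validity} is invoked.

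Combining the two steps with subadditivity gives
\[
\prob(\tau<\infty)\le 0+\sum_{j=1}^J\delta_j\le\delta.
\]
I expect no serious obstacle. The only delicate point is the first step, namely making sure that "holds for all $t$" is read as an almost-sure statement at each sampled time, so that the countable union argument goes through.
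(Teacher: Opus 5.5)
Your proposal is correct and essentially matches the paper's proof: the domain-bound checks raise no alarm almost surely, because a valid bound holds almost surely at each of the countably many sampled times, and the guarantees $\prob(\tau_j<\infty)\leq\delta_j$ from Theorem~\ref{thm:monitor-validity} then combine by a union bound that needs no independence. Your reading of the domain-bound condition as an almost-sure statement at each sampled time, and your check that each component uses its own condition's bounds, are the same points the paper relies on.
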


\begin{proof}
A valid domain bound cannot be violated at an observed state.
Writing $\tau_j$ for the first alarm index of component
monitor $j$, Theorem~\ref{thm:monitor-validity} and the union
bound give
\begin{align*}
    \prob(\tau<\infty)
    \leq\sum_{j=1}^{J}\prob(\tau_j<\infty)
    \leq\sum_{j=1}^{J}\delta_j
    \leq\delta.
\end{align*}
\end{proof}

\paragraph{Monitor power.}
To quantify detection, we consider a block on which the conditional
expected corrected drift $E_k$ exceeds the expected slack in the observable drift bound.
The difference determines the signal available to the monitor.
The following result uses an additional upper score bound and
an expert whose stake is constant throughout the block.
For the next lemma, fix a starting index $s$, an expert
$a\in\aA_r$ launched at $r\leq s$, a duration
$N\in\NN_{\geq1}$, and $\beta\in(0,1)$.
Suppose that the expert's capital at the start of the block is
$\rho_{r,s}\coloneqq c_r\eta_{r,a}G_{r,s}^a>0$ almost surely.
Let $d_\star$ be a lower bound on the conditional expected
corrected drift $E_k$, and let $s_\star\geq0$ be an upper bound
on the conditional expected slack
$\expe[\overline L_k-L_k\mid\aF_k]$ throughout the block.
We require $d_\star>s_\star$, so a positive excess remains
after accounting for the slack.
Let $B_{\max}>0$ be an upper bound on $B_k$, and assume
$\varepsilon\coloneqq(d_\star-s_\star)/B_{\max}\in(0,1]$.
Formally, for every $k=s,\ldots,s+N-1$, we require almost surely
\begin{align*}
    E_k&\geq d_\star,
    &\expe[\overline L_k-L_k\mid\aF_k]&\leq s_\star,\\
    B_k&\leq B_{\max},
    &W_k&\leq1,
    &\lambda_{r,k}^a&=\varepsilon/2.
\end{align*}
The standing score assumptions already give $W_k\geq-1$,
so the scores lie in $[-1,1]$.

\begin{lemma}[Detection of a persistent certificate violation]
\label{lem:certificate-detection}
On the $\aF_s$-measurable event
\begin{align}
    N\geq
    \frac{8B_{\max}^2}{(d_\star-s_\star)^2}
    \log\frac{1}{\delta\rho_{r,s}\beta},
    \label{eq:persistent-mean-duration}
\end{align}
the monitor satisfies
$\prob(\tau_\delta\leq s+N\mid\aF_s)\geq1-\beta$
almost surely.
\end{lemma}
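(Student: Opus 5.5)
We plan to reduce detection to the sufficient-evidence lemma and then control the accumulated quadratic evidence with a martingale concentration bound. First, we lower-bound the conditional mean of each score in the block. By Lemma~\ref{lem:score-validity},
\[
\mu_k=\frac{E_k-\expe[\overline L_k-L_k\mid\aF_k]}{B_k}\geq\frac{d_\star-s_\star}{B_k}\geq\frac{d_\star-s_\star}{B_{\max}}=\varepsilon
\]
almost surely for $k=s,\ldots,s+N-1$. Next, we lower-bound the expert's wealth. The total wealth satisfies $M_{s+N}\geq c_r\eta_{r,a}G^a_{r,s+N}=\rho_{r,s}\prod_{k=s}^{s+N-1}(1+\tfrac{\varepsilon}{2}W_k)$. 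This holds by the argument of the sufficient-evidence lemma, applied from index $s$ with starting capital $\rho_{r,s}$ instead of $c_s\eta_{s,a}$. Since $\lambda=\varepsilon/2\leq 1/2$, inequality \eqref{eq:quadratic-log-lower-bound} and $W_k^2\leq1$ give
\[
\log\frac{M_{s+N}}{\rho_{r,s}}\geq\sum_{k=s}^{s+N-1}\Bigl(\tfrac{\varepsilon}{2}W_k-\tfrac{\varepsilon^2}{4}\Bigr).
\]
It therefore suffices to show that, with conditional probability at least $1-\beta$,
\[
\sum_{k}\tfrac{\varepsilon}{2}W_k-N\tfrac{\varepsilon^2}{4}\geq\log\frac{1}{\delta\rho_{r,s}}.
\]

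To get this, we decompose $W_k=\mu_k+(W_k-\mu_k)$ and use $\mu_k\geq\varepsilon$. The deterministic drift part is then at least $N\varepsilon^2/2-N\varepsilon^2/4=N\varepsilon^2/4$. The martingale part is $Z\coloneqq\sum_k(W_k-\mu_k)$, whose increments $W_k-\mu_k$ are bounded in an interval of length $2$ because $W_k\in[-1,1]$. The Azuma--Hoeffding inequality, applied conditionally on $\aF_s$ (which is legitimate since $\rho_{r,s}$ and $N$ are $\aF_s$-measurable), gives
\[
\prob\bigl(Z\leq -u\mid\aF_s\bigr)\leq\exp\bigl(-u^2/(2N)\bigr).
\]
We choose $u=\sqrt{2N\log(1/\beta)}$. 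Outside this event of probability at most $\beta$, the log wealth is at least $N\varepsilon^2/4-\tfrac{\varepsilon}{2}\sqrt{2N\log(1/\beta)}$. It remains to check that this exceeds $\log(1/(\delta\rho_{r,s}))$ under \eqref{eq:persistent-mean-duration}, i.e.\ when $N\varepsilon^2\geq 8\log(1/(\delta\rho_{r,s}\beta))$.

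We expect the final arithmetic to be the main obstacle. Write $A=\log(1/(\delta\rho_{r,s}))$ and $b=\log(1/\beta)$, so $N\varepsilon^2\geq 8(A+b)$. By AM--GM, $\tfrac{\varepsilon}{2}\sqrt{2Nb}\leq N\varepsilon^2/16+2b$. The log wealth is then at least $3N\varepsilon^2/16-2b\geq\tfrac{3}{2}(A+b)-2b$, which falls short by $b/2$. So the constant $8$ does not close directly with this split, and the weighting in AM--GM needs retuning. If no AM--GM weighting closes the gap, we will replace Azuma with a sharper exponential supermartingale argument. Concretely, we would show that $\exp\bigl(-\tfrac{\varepsilon}{2}Z-N\varepsilon^2/8\bigr)$ is a supermartingale, using Hoeffding's lemma with conditional variance proxy $1$ for increments in $[-1,1]$. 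Markov's inequality then yields $\prob\bigl(\tfrac{\varepsilon}{2}Z\leq -N\varepsilon^2/8-b\mid\aF_s\bigr)\leq\beta$. On the complement, the log wealth is at least $N\varepsilon^2/4-N\varepsilon^2/8-b=N\varepsilon^2/8-b\geq A$, which matches the constant $8$ exactly.

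The last step confirms that reaching $1/\delta$ at some index no later than $s+N$ implies $\tau_\delta\leq s+N$. This holds because $M_{s+N}\geq1/\delta$ implies $\tau_\delta\leq s+N$ by the definition of $\tau_\delta$, and it gives the claimed almost-sure conditional bound on the stated $\aF_s$-measurable event.
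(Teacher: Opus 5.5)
Your proposal is correct and follows the paper's proof: score validity gives $\mu_k\geq\varepsilon$, the single expert's account lower-bounds $M_{s+N}$, the quadratic log bound with $W_k^2\leq1$ reduces everything to $\sum_k W_k$, and a conditional Hoeffding bound controls $\sum_k(W_k-\mu_k)$. The obstacle you flagged is not real: with $b=\log(1/\beta)$, the Azuma route closes via $\tfrac{\varepsilon}{2}\sqrt{2Nb}=2\sqrt{(N\varepsilon^2/8)\,b}\leq N\varepsilon^2/8+b$, which is exactly the paper's step and leaves log wealth at least $N\varepsilon^2/8-b\geq\log(1/(\delta\rho_{r,s}))$. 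Your fallback, which fixes the Chernoff parameter at $q=\varepsilon/2$, is also valid and reaches the same condition without AM--GM. It is not sharper than Azuma, though: it is the same bound with that relaxation built in from the start.
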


\begin{proof}[Proof of Lemma~\ref{lem:certificate-detection}]
First, we translate the certificate excess into a lower bound on the conditional score mean. Since $B_k$ is positive and $\aF_k$-measurable,
\begin{align*}
    \mu_k
    &=\frac{E_k-\expe[\overline L_k-L_k\mid\aF_k]}{B_k} \geq\frac{d_\star-s_\star}{B_k}
    \geq\frac{d_\star-s_\star}{B_{\max}}
    =\varepsilon.
\end{align*}
Thus, each score in the block has conditional mean at least $\varepsilon$.

Write $Z_k=W_k-\mu_k$. Each $Z_k$ has conditional mean zero and conditional range of length at most $2$, because $W_k\in[-1,1]$. The conditional Hoeffding bound gives, for every $q\geq0$,
\begin{align*}
    \expe[e^{-qZ_k}\mid\aF_k]\leq e^{q^2/2}.
\end{align*}
For $j=0,\ldots,N$, write
$S_j\coloneqq\sum_{k=s}^{s+j-1}Z_k$, with $S_0=0$.
Since $S_j$ is $\aF_{s+j}$-measurable, the tower property
and the conditional Hoeffding bound give
\begin{align*}
    \expe[e^{-qS_{j+1}}\mid\aF_s]
    &=\expe\!\left[
        e^{-qS_j}
        \expe[e^{-qZ_{s+j}}\mid\aF_{s+j}]
        \,\middle|\,\aF_s
    \right]\\
    &\leq e^{q^2/2}\expe[e^{-qS_j}\mid\aF_s].
\end{align*}
Applying this inequality successively for $j=0,\ldots,N-1$,
starting from $e^{-qS_0}=1$, yields
\begin{align*}
    \expe[e^{-qS_N}\mid\aF_s]\leq e^{Nq^2/2}.
\end{align*}

For $q>0$, the event $S_N\leq-u$ is equivalent to
$e^{-qS_N}\geq e^{qu}$.
Conditional Markov's inequality therefore gives
\begin{align*}
    \prob(S_N\leq-u\mid\aF_s)
    &=\prob(e^{-qS_N}\geq e^{qu}\mid\aF_s)\\
    &\leq e^{-qu}\expe[e^{-qS_N}\mid\aF_s]\\
    &\leq\exp(-qu+Nq^2/2).
\end{align*}
For $u>0$, the exponent is minimised at $q=u/N$.
Substituting this choice gives
\begin{align*}
    \prob(S_N\leq-u\mid\aF_s)
    \leq\exp\left(-\frac{u^2}{2N}\right).
\end{align*}
For $u=0$, the same bound holds trivially.

Now choose $u=\sqrt{2N\log(1/\beta)}$, so the right-hand
side equals $\beta$.
Thus, with conditional probability at least $1-\beta$,
$S_N\geq-\sqrt{2N\log(1/\beta)}$.
Since $W_k=\mu_k+Z_k$ and $\mu_k\geq\varepsilon$
throughout the block, on this event we have
\begin{align*}
    \sum_{k=s}^{s+N-1}W_k
    &=\sum_{k=s}^{s+N-1}\mu_k+S_N\geq N\varepsilon-\sqrt{2N\log(1/\beta)}.
\end{align*}

We next translate the score bound into growth of the expert's
wealth.
Since $\rho_{r,s}>0$, the account wealth $G_{r,s}^a$ is positive.
During the block, the expert uses the constant stake
$\lambda=\varepsilon/2\leq1/2$.
Each betting factor satisfies $1+\lambda W_k\geq1/2$, so
\begin{align*}
    \frac{G_{r,s+N}^a}{G_{r,s}^a}
    =\prod_{k=s}^{s+N-1}(1+\lambda W_k)>0.
\end{align*}
Taking logarithms converts this product into a sum.
For each score, the inequality
$\log(1+\lambda w)\geq\lambda w-\lambda^2w^2$
and the bound $W_k^2\leq1$ give
\begin{align*}
    \log(1+\lambda W_k)
    \geq\lambda W_k-\lambda^2W_k^2
    \geq\lambda W_k-\lambda^2.
\end{align*}
Summing over the $N$ scores therefore yields
\begin{align*}
    \log\frac{G_{r,s+N}^a}{G_{r,s}^a}
    &\geq\lambda\sum_{k=s}^{s+N-1}W_k-N\lambda^2.
\end{align*}
On the score event established above, substituting
$\lambda=\varepsilon/2$ gives
\begin{align*}
    \log\frac{G_{r,s+N}^a}{G_{r,s}^a}
    &\geq
    \frac{\varepsilon}{2}
    \left(N\varepsilon-\sqrt{2N\log(1/\beta)}\right)
    -\frac{N\varepsilon^2}{4}=\frac{N\varepsilon^2}{4}
      -\varepsilon\sqrt{\frac N2\log(1/\beta)}.
\end{align*}

To simplify the square-root term, apply
$2\sqrt{uv}\leq u+v$ with
$u=N\varepsilon^2/8$ and $v=\log(1/\beta)$:
\begin{align*}
    \varepsilon\sqrt{\frac N2\log(1/\beta)}
    &=2\sqrt{\frac{N\varepsilon^2}{8}\log(1/\beta)}
    \leq\frac{N\varepsilon^2}{8}+\log(1/\beta).
\end{align*}
Consequently,
\begin{align*}
    \log\frac{G_{r,s+N}^a}{G_{r,s}^a}
    \geq\frac{N\varepsilon^2}{8}-\log(1/\beta).
\end{align*}

It remains to show that this growth suffices to reach the
alarm threshold.
Since $\varepsilon=(d_\star-s_\star)/B_{\max}$,
the duration condition~\eqref{eq:persistent-mean-duration}
implies
\begin{align*}
    \frac{N\varepsilon^2}{8}\geq\log\frac{1}{\delta\rho_{r,s}\beta}=\log\frac{1}{\delta\rho_{r,s}}+\log(1/\beta).
\end{align*}
Combining the two bounds and exponentiating gives
\begin{align*}
    \log\frac{G_{r,s+N}^a}{G_{r,s}^a}
    \geq\log\frac{1}{\delta\rho_{r,s}}
    \quad\Longrightarrow\quad
    \frac{G_{r,s+N}^a}{G_{r,s}^a}
    \geq\frac{1}{\delta\rho_{r,s}}.
\end{align*}

The expert starts the block with capital
$\rho_{r,s}=c_r\eta_{r,a}G_{r,s}^a$.
Multiplying this capital by its growth factor gives its
contribution to the total wealth at $s+N$.
Since the reserve and all other accounts are nonnegative,
\begin{align*}
    M_{s+N}
    \geq c_r\eta_{r,a}G_{r,s+N}^a
    =\rho_{r,s}\frac{G_{r,s+N}^a}{G_{r,s}^a}
\geq\rho_{r,s}\frac{1}{\delta\rho_{r,s}}
    =\frac1\delta.
\end{align*}
Thus, the monitor has raised an alarm by index $s+N$.

The score event has conditional probability at least $1-\beta$
given $\aF_s$, and the duration condition is $\aF_s$-measurable.
Hence, on that condition,
\begin{align*}
    \prob(\tau_\delta\leq s+N\mid\aF_s)\geq1-\beta
    \qquad\text{almost surely}.
\end{align*}
\end{proof}
The sufficient duration increases quadratically as the observable excess $d_\star-s_\star$ decreases, and logarithmically as the expert's capital at the start of the block decreases. For an expert launched at $s$, this capital is $\rho_{s,s}=c_s\eta_{s,a}$. For an expert launched earlier, $\rho_{r,s}$ also accounts for its previous gains and losses. A conservative drift bound can conceal a violation if its expected slack is as large as the excess. With observation gap $h$, the $N$ scores span physical time $hN$. The upper score bound is used only for this detection result; false-alarm control still requires only $W_k\geq-1$.

\section{Implementation}
\label{sec:monitor-implementation}

Algorithm~\ref{alg:fixed-expert-monitor} implements the monitor using a shared history of the most recent $m$ completed scores and their starting states. Before each new observation, it chooses the funding amount, computes the loss bound, and uses the retained history to choose the experts' stakes. Upon observing the next state, it computes the score, updates the accounts, and checks whether the total wealth reaches $1/\delta$. The completed score is then added to the history.

\paragraph{Combining accounts.}
Repeated funding need not require storing a separate account for every starting index. Suppose that every mixture uses the same finite set of experts $\aA_s=\aA$, with $|\aA|=K$, and the same nonnegative weights $\eta_{s,a}=\eta_a$ satisfying $\sum_{a\in\aA}\eta_a=1$. Each expert keeps its kernels, scales, and stake strategy fixed and uses the shared retained history. For the moment and log-return strategies above, its stake therefore does not depend on the starting index:
\begin{align*}
    \lambda_{s,n}^a=\lambda_n^a
    \qquad\text{for every }s\leq n.
\end{align*}
The accounts of expert $a$ may hold different amounts of capital, but the next score multiplies each of them by the same factor $1+\lambda_n^aW_n$. We can therefore store their combined capital:
\begin{align*}
    A_n^a\coloneqq\sum_{s=0}^{n-1}c_s\eta_aG_{s,n}^a,
    \qquad R_n\coloneqq1-C_n,
\end{align*}
where $R_n$ is the remaining reserve. Funding at index $n$ adds $c_n\eta_a$ to the combined account of expert $a$ and removes $c_n$ from the reserve. Applying the common betting factor gives
\begin{align}
    A_{n+1}^a&=(A_n^a+c_n\eta_a)(1+\lambda_n^aW_n),
    &R_{n+1}&=R_n-c_n,
    \label{eq:aggregated-accounts}
\end{align}
with $A_0^a=0$, $R_0=1$, and total wealth $M_n=R_n+\sum_{a\in\aA}A_n^a$.
Multiplication distributes over the individual account balances, so this update gives exactly the same total wealth and alarm time as maintaining those accounts separately. The optimisation requires the combined accounts to use identical stakes. Accounts whose scales or prediction histories depend on their starting indices must generally remain separate.

\begin{algorithm}[t]
\caption{Monitor with combined expert accounts}
\label{alg:fixed-expert-monitor}
\begin{algorithmic}[1]
\Require Certificate $f$, allowed drift $l$, spacing $h$, level $\delta$, history size $m$;
\Statex \hspace{\algorithmicindent} valid score-bound routines; a fixed set $\aA$ of $K$ experts and weights $\eta_a$;
\Statex \hspace{\algorithmicindent} predictable funding rule $0\leq c_n\leq R_n$.
\State Observe $\rX_{t_0}$; initialise $R_0\gets1$, $A_0^a\gets0$ for every $a\in\aA$, and empty history.
\For{$n=0,1,\ldots$ while observations continue}
    \State Choose $c_n$ and compute $B_n$ using information available at $t_n$.
    \State Compute prediction weights from the retained history using~\eqref{eq:prediction-weights}.
    \State If $\rX_{t_n}\in \aS$ choose each expert's stake $\lambda_n^a$ using~\eqref{eq:moment-stake} , or~\eqref{eq:log-return-stake}, otherwise $\lambda_n^a=0$.
    \State Observe $\rX_{t_{n+1}}$ and compute $\overline L_n$ and $W_n$.
    \State Update the combined accounts and reserve using~\eqref{eq:aggregated-accounts}.
    \If{$R_{n+1}+\sum_{a\in\aA}A_{n+1}^a\geq1/\delta$}
        \State \Return alarm at index $n+1$.
    \EndIf
    \State Append $(t_n,\rX_{t_n},W_n)$ to the history and retain its most recent $m$ entries.
\EndFor
\end{algorithmic}
\end{algorithm}

The history stores each score with its starting state because spatial prediction compares the states from which the corresponding transitions began. Removing an old observation from the history affects future stake choices but leaves accumulated wealth unchanged. If observations end upon reaching a target or another observed termination condition, the monitor processes the final completed interval before stopping.

\paragraph{Computational cost.}
For states in $\RN^d$, the retained history requires $O(md)$ storage and the combined accounts require $O(K)$. Processing prediction weights one expert at a time therefore gives total storage $O(md+K)$ for the betting implementation.
At each observation, distances to the retained states can be computed once and shared across experts, costing $O(md)$. Each moment strategy then computes two weighted sums in $O(m)$ time, giving $O(md+Km)$ work per observation. For log-return stakes, each derivative evaluation scans the retained history. Using $J$ bisection iterations therefore gives $O(md+KmJ)$ work. Finite bisection approximates the maximiser, but any returned stake in $[0,\overline\lambda]$ preserves false-alarm control. These bounds exclude the resources required by the certificate, score-bound, and funding routines.

\paragraph{Recursive updates for unlimited history.}
For uniform or purely temporal geometric weights, an expert using the moment strategy can use the entire history without storing individual observations.
Since its temporal scale is fixed across launches, write
$r_{\sT}^a\coloneqq r_{\sT,s}^a$, independently of $s$.
For the geometric kernel $k(u)=2^{-u}$, set
$\gamma^a=2^{-h/r_{\sT}^a}$; for uniform weighting, set $\gamma^a=1$.
The geometric weights are
$2^{-(t_n-t_i)/r_{\sT}^a}=(\gamma^a)^{n-i}$.
Rescaling them to $(\gamma^a)^{n-1-i}$ leaves the moment stake unchanged, since the common factor cancels in the ratio of weighted sums.
Starting from $S_0^a=U_0^a=H_0^a=0$, we maintain the resulting sums through
\begin{align*}
    S_{n+1}^a&=\gamma^a S_n^a+W_n,
    &
    U_{n+1}^a&=\gamma^a U_n^a+W_n^2,
    &
    H_{n+1}^a&=\gamma^a H_n^a+1.
\end{align*}
For $n\geq1$, the weighted mean and second moment are $S_n^a/H_n^a$ and $U_n^a/H_n^a$. The moment stake uses $S_n^a$ and $U_n^a$ directly, so maintaining $H_n^a$ is optional. When $\gamma^a=1$, these recursions give the ordinary sums and $H_n^a=n$; when $\gamma^a<1$, each update discounts earlier observations. For $K$ such experts, maintaining these sums requires $O(K)$ work per observation and $O(K)$ storage, independently of the history length. This optimisation does not generally extend to spatial weights, which change with the current state, or to exact log-return optimisation, whose objective cannot be recovered from these sums alone.

\section{Experimental setup}
\label{sec:selected-experimental-setup}

\paragraph{Prediction, stakes and funding.}
Uniform prediction (U) assigns equal weight to all retained scores.
Temporal prediction (T) favours recent scores, while spatial prediction
(S) favours scores whose starting states are close to the current state.
Temporal and spatial prediction use the linear kernel
$k(u)=(1-u)_+$ with fixed scales.

For the synthetic benchmarks, we retain $16384$ observations.
We use five scales, labelled XS--XL, calibrated to reference effective
sample sizes $32,128,512,2048,8192$, where
\[
    \operatorname{ESS}(w)
    =\frac{(\sum_i w_i)^2}{\sum_i w_i^2}.
\]
The temporal cutoffs are approximately $43.166$, $171.167$, $683.167$,
$2731.167$ and $10923.167$ observations.
We evaluate the individual scales, an equal mixture, a centred mixture
with weights $(0.1,0.2,0.4,0.2,0.1)$, and five equal mixtures that each
omit one scale. 
For the control and neural benchmarks, we use U-L, T-L and S-L with
a history of $8192$ observations.
T-L equally mixes temporal cutoffs $64,128,512,2048,8192$, while S-L
equally mixes spatial radii $0.00625,0.025,0.1,0.4,1.6$. 
Log-return optimisation uses endpoint checks followed, when necessary,
by $32$ bisection steps.
We launch accounts every $256$ observations.
Launch $j\geq0$ receives $6/[\pi^2(j+1)^2]$ of the initial capital
in the synthetic benchmarks and $1/[(j+1)(j+2)]$ in the control and
neural benchmarks.

\subsection{Synthetic benchmarks}
\label{sec:selected-synthetic-setup}

The synthetic benchmarks isolate different sources of detection
difficulty
(Figures~\ref{fig:synthetic:overview-rate}--\ref{fig:synthetic:l-m-delay}).
We consider six families of processes.

\emph{Temporal:}
The scalar process starts at $\rX_0=32769$ and follows
\[
    \rX_{n+1}=\rX_n-0.001+g_n+\xi_n,
    \qquad
    \xi_n\overset{\mathrm{i.i.d.}}\sim
    \operatorname{Unif}[-0.6,0.6].
\]
With certificate $f(x)=x$ and allowance $-0.001$, the conditional
expected corrected increment is $\expe[D_n\mid\aF_n]=g_n$.
Persistent violations use $g_n\in\{0.01,0.02,0.05\}$ throughout.
Delayed violations use $g_n=-0.02$ until onset
$2048$, $4096$ or $8192$, followed by $g_n=0.05$.
Temporary violations use $g_n=0.05$ for $256$, $1024$ or $4096$
observations from onset $8192$, and $g_n=-0.02$ otherwise.
Each run lasts $32768$ observations.

\emph{Spatial:}
The first cyclic system follows a deterministic circular path with
directly prescribed scores. It does not use a controller--certificate
pair, and its conditional score mean jumps at the sector boundaries,
so it is not globally Lipschitz.
Each period starts with $64$ observations of mean $0.10$ in a fixed
sector, followed by observations of mean $-0.02$.
Independent symmetric score noise has amplitude $0.3$.
We use periods $P\in\{512,2048,4096\}$ and horizons $8P$.
Each run therefore contains exactly $512$ positive-mean observations;
increasing $P$ lengthens the gaps between visits.

The second system is a stochastic ring with noise in both radial
position and angular speed. The certificate is the squared radial
error, and the allowance is its exact nominal expected increment.
The radial contraction is $0.2$, radial noise is uniform on
$[-0.01,0.01]$, and angular speed has independent uniform modulation
of $\pm20\%$.
A triangular angular envelope with half-width $0.2$ reduces the radial
feedback by up to $75\%$.
We use the same periods and horizons, with a nominal sector visit of
$256$ observations. Stochastic return times make the number of
observations under disturbance vary across runs.

\emph{Hill:}
We use the robot from the running example.
It starts at $\rX_0=10$ and stops upon reaching $[0,1]$.
Before stopping,
\[
    \rX_{n+1}=\rX_n-0.01+0.02Z_n,
    \qquad
    Z_n\mid\aG_n\sim
    \operatorname{Bernoulli}
    \bigl(1/4+a(1-|\rX_n-5|/2)_+\bigr).
\]
With certificate $f(x)=x$, allowance $-0.005$ and loss bound $0.02$,
the conditional expected score is $a(1-|\rX_n-5|/2)_+$.
Amplitudes $a\in\{0.0625,0.125,0.25\}$ reduce expected progress at
the hill centre by one quarter, one half or entirely.
Stronger disturbances also keep the robot on the hill for longer.

For the cyclic systems, increasing the period changes the horizon,
the observations available in the retained history and the number of
funding opportunities, even though positive exposure is fixed for
direct recurrence.
Including the scalar, ring and hill references gives $21$ settings.
Direct recurrence has no separate saved reference.
The six example trajectories use seed $970000003$ and settings with
weak violations, late onset, short duration or long gaps between visits.
These figures illustrate individual runs; the heatmaps summarise
all seeds.

\subsection{Control benchmarks}
\label{sec:selected-control-setup}

Building and Two Tanks adapt benchmarks from
PRoTECT~\citep{wooding2026protect}.
The stochastic double-integrator LQR uses a standard quadratic value
function~\citep{tedrake2023underactuated}.
Each system has one reference setting and two disturbances, each
evaluated at three strengths, giving $21$ settings in total.
All disturbances act from the start of the run.
The certificates and allowances remain fixed.

\emph{Building:}
We reduce heating effectiveness by $8\%$, $10\%$ or $12\%$, or make
the controller underestimate the temperature by $1$, $2$ or $3$
degrees. In the latter case, the monitor observes the true temperature.

\emph{Tanks:}
We add inlet flow of amplitude $0.2$, $0.3$ or $0.4$, which fades as
the downstream level rises from $2.15$ to $2.25$.
Alternatively, we add independent inlet bursts of size $0.5$ with
probability $0.35$, $0.40$ or $0.45$ per step.

Building and Tanks use bounded triangular noise with the same means
and variances as their source models.
Their newly synthesised sum-of-squares certificates have degrees two
and four, respectively.
Independent checks using exact arithmetic validate the nominal
allowances $0.04$ and $0.07$ under these noise distributions.
Initial states are uniform on $[19.5,20]$ for Building and on
$[4.45,4.55]\times[1.45,1.55]$ for Tanks.

\emph{LQR:}
The initial state lies in $[1,2]\times[-0.1,0.1]$.
The nominal acceleration diffusion is $0.1$.
We sample every $0.1$ time units using the exact Gaussian transition
and use the exact nominal sampled certificate increment as the allowance.
Disturbances either reduce feedback by $5\%$, $10\%$ or $20\%$,
or multiply the noise amplitude by $1.05$, $1.10$ or $1.20$.

All control runs have a limit of $32768$ observations.
Building and Tanks also stop upon entering an unsafe set or leaving
the domain; LQR stops upon leaving $[-10,10]^2$.
We compute conditional moments exactly along each recorded trajectory.
These determine the fraction of runs encountering a violation and the
expected-score summaries
(Figures~\ref{fig:control:overview-rate}--\ref{fig:control:expected_score}).

\subsection{Neural controller--certificate pairs}
\label{sec:selected-neural-setup}

We retain $16$ original controller--certificate pairs from
LogRASM: four CollisionAvoidance pairs,
six Linear layout-$0$ pairs, two Linear layout-$1$ pairs, two Pendulum
pairs and two Triple-integrator pairs.
The controller and certificate parameters remain fixed.
We consider three disturbance magnitudes per setting.
Initial states are sampled from fixed pools away from the target,
unsafe sets and domain boundaries, with at least $20\%$ headroom below
the certificate threshold. Starts are shared across disturbance
strengths and pairs within the same layout.

To obtain longer trajectories, we multiply the source drift increments
by $1/20$ and the direct plant-noise amplitudes by $1/\sqrt{20}$.
The baseline controller effectiveness is $0.1$ for CollisionAvoidance,
$0.5$ and $0.25$ for the two Linear layouts, $1$ and $0.5$ for the
ordinary and logarithmic Pendulum pairs, and $1$ for Triple integrator.

For CollisionAvoidance, we add terrain-dependent slips with peak
probabilities $0.25$, $0.35$ and $0.50$.
For the Linear systems, we add fluctuating sensor offsets with
amplitudes $0.30$, $0.35$ and $0.40$.
For Pendulum, random torque pushes the system away from the upright
position, with amplitudes $0.075$, $0.10$ and $0.125$.
For Triple integrator, we add a positive actuator bias with amplitudes
$0.06$, $0.08$ and $0.10$.
Paths stop upon reaching the target, entering an unsafe set, leaving
the domain, reaching the certificate threshold or completing $20480$
observations. 

Let $C=1/(1-\rho)$.
For an ordinary certificate output $V$, we use
$f(x)=\min\{V(x),C\}$.
For a logarithmic output $L$, we use
$f(x)=\exp(\min\{L(x),\log C\})$.
We evaluate each recorded path under three decrease requirements,
$\epsilon\in\{0,0.0005,0.001\}$.
The allowance is $-\epsilon$ for ordinary certificates and
$(e^{-\epsilon}-1)f(x)$ for exponentiated logarithmic certificates.
Thus, $\epsilon=0$ tests expected nonincrease; positive values require
additive or multiplicative decrease, respectively.
All three conditions use the same predictable loss bound and the same
physical trajectories
(Figures~\ref{fig:neural:overview-rate}
and~\ref{fig:neural:overview-delay}).

\section{Synthetic}
\label{sec:selected-synthetic-figures}

\begin{figure}[!htbp]
    \centering
    \includegraphics[width=\linewidth]{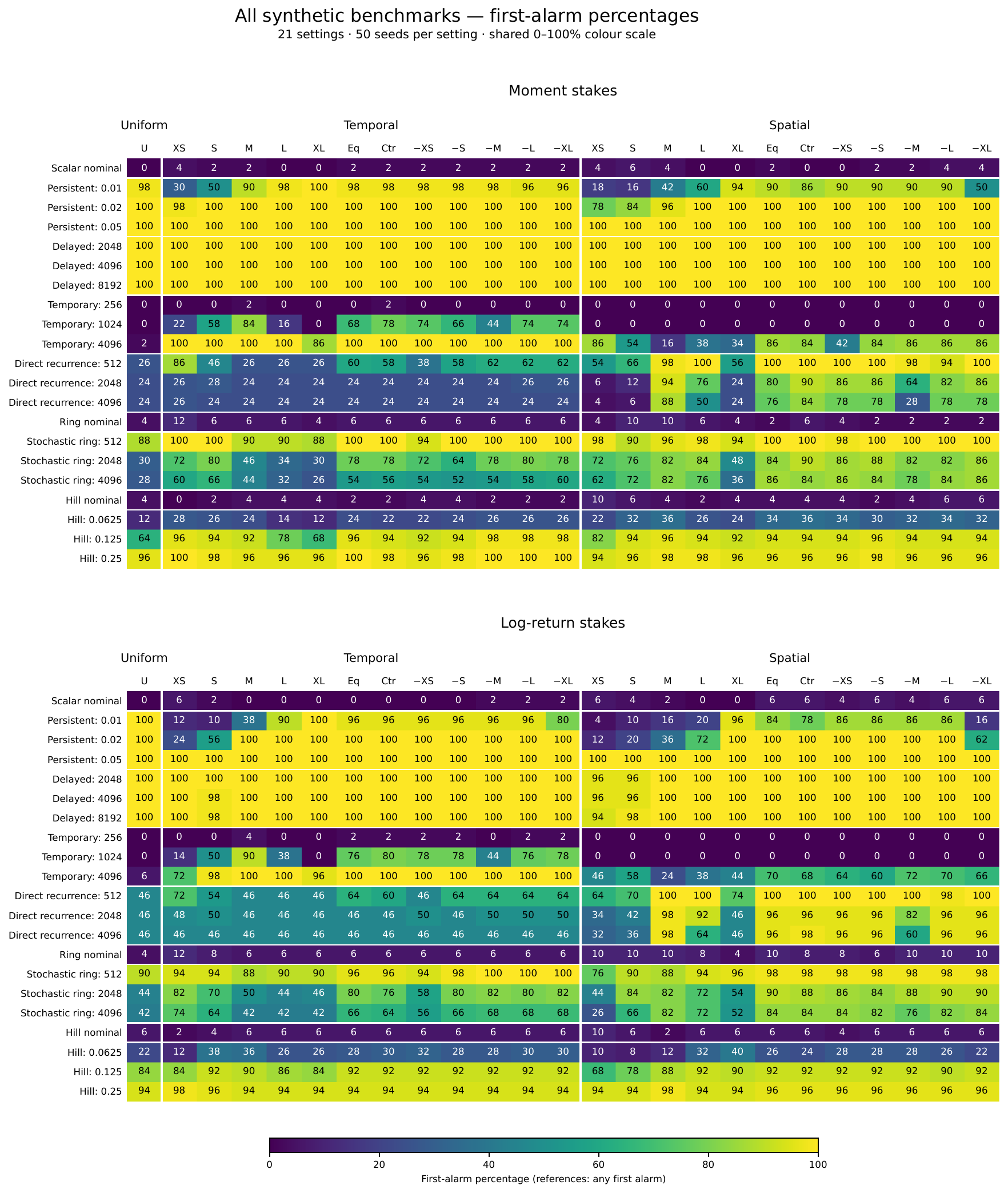}
    \caption{Detection rates for the synthetic benchmarks, with $21$ settings,
    $50$ monitors and $50$ paired runs per setting.
    Cells show the percentage of runs with an eligible first alarm;
    reference rows count any first alarm.
    The panels separate moment (M) and log-return (L) stakes.
    U uses uniform prediction, while T and S use temporal and spatial
    prediction. XS--XL denote individual scales; Eq is their equal
    mixture; Ctr uses weights $(0.1,0.2,0.4,0.2,0.1)$.
    A minus label indicates that the corresponding scale is omitted
    from an otherwise equal mixture.
    All monitors use wealth threshold $20$.
    The panels share a colour scale from $0$ to $100\%$.
    NA denotes incomplete data.}
    \label{fig:synthetic:overview-rate}
\end{figure}

\begin{figure}[!htbp]
    \centering
    \includegraphics[width=\linewidth]{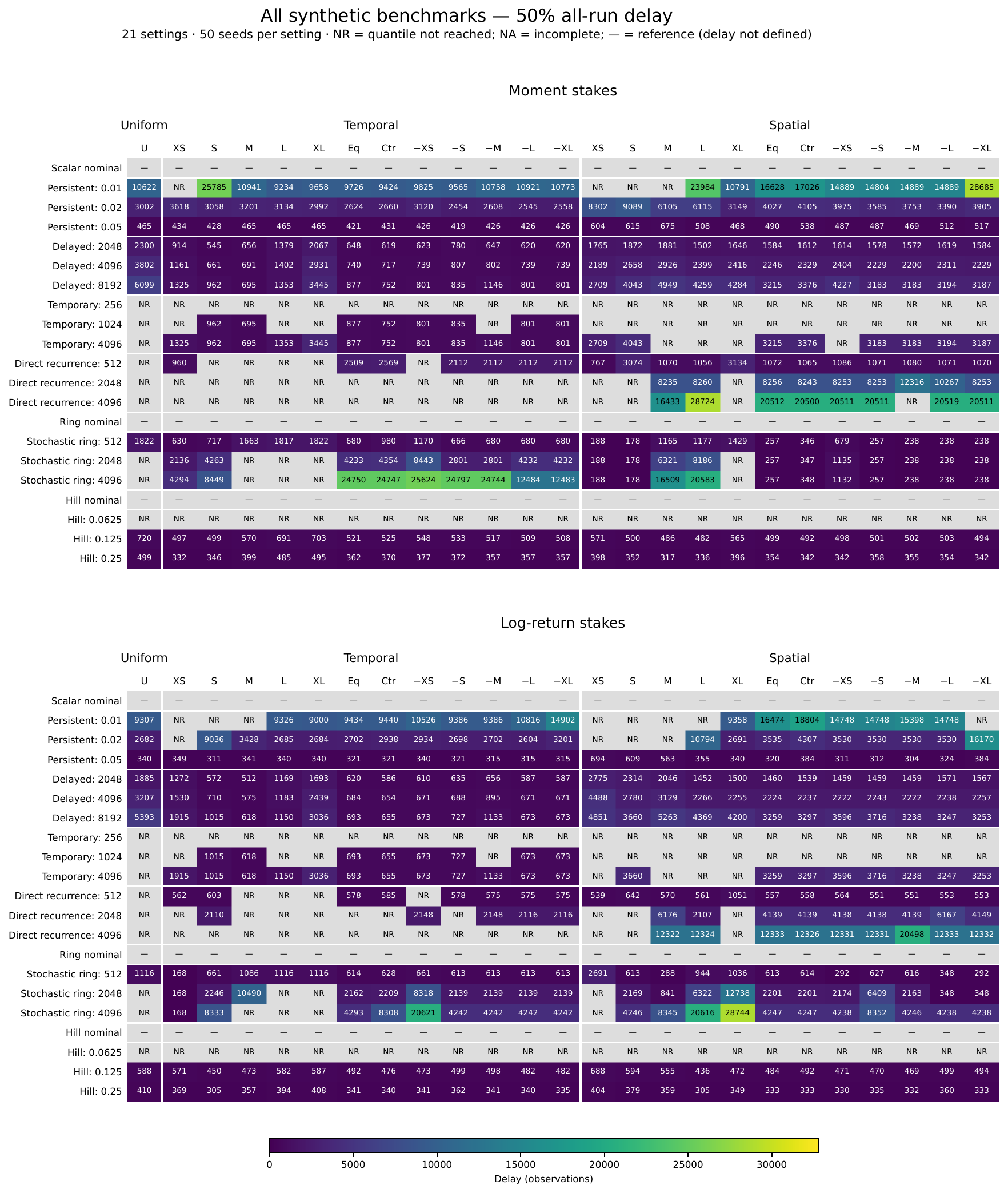}
    \caption{The $50\%$ detection delays for the synthetic monitors in
    Figure~\ref{fig:synthetic:overview-rate}.
    Each value is the first delay by which $25$ of all $50$ runs have
    eligible first alarms, measured in observations since the declared
    onset or first entry into the region with positive conditional mean.
    Nondetections, short paths and early first alarms remain in the
    denominator. A later crossing does not replace an early first alarm.
    NR denotes a median that was not reached, NA incomplete data,
    and a dash an undefined reference delay.
    Both panels use the same colour scale.}
    \label{fig:synthetic:overview-delay}
\end{figure}

\begin{figure}[!htbp]
    \centering
    \includegraphics[width=\linewidth]{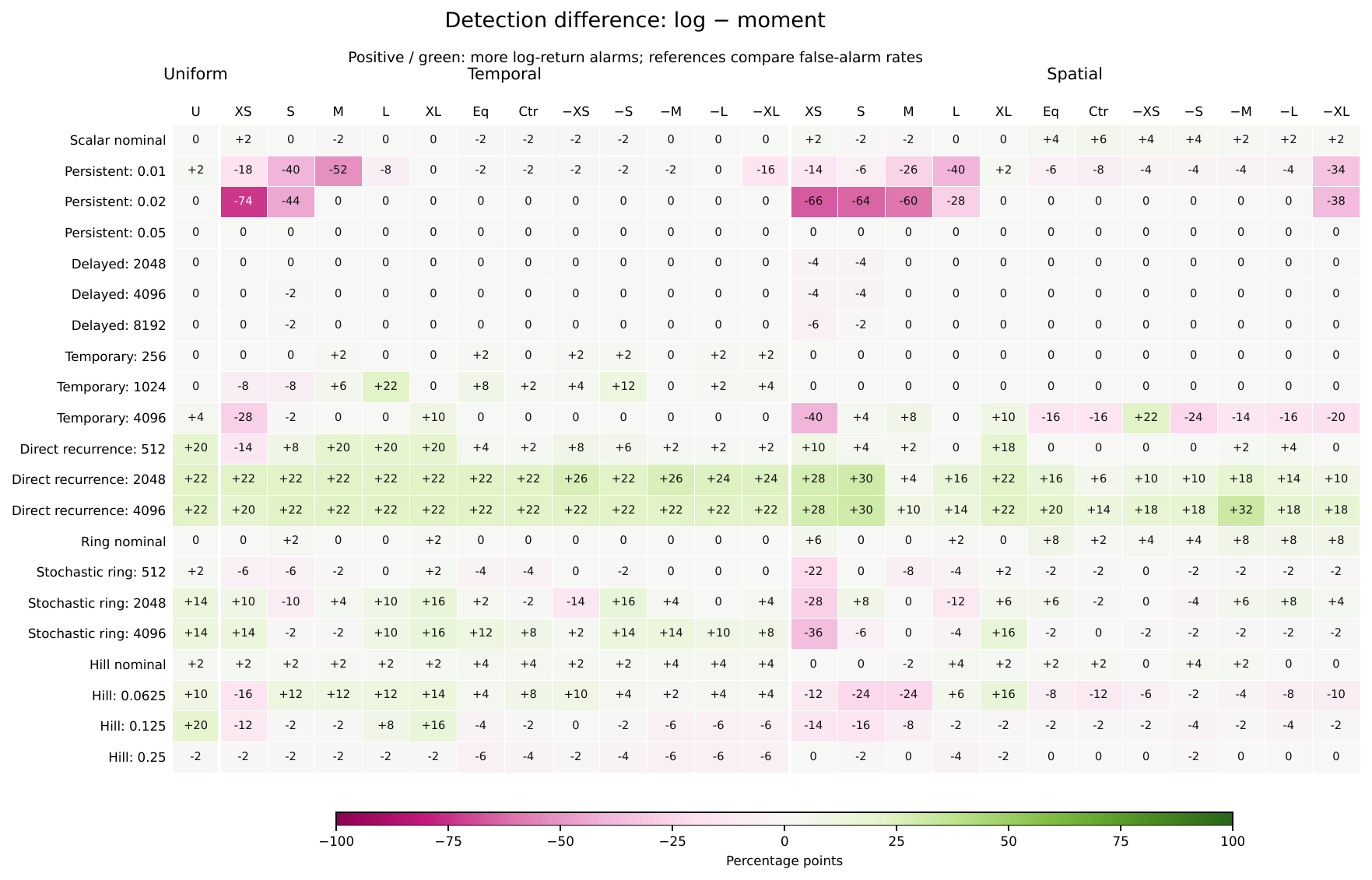}
    \caption{Differences in synthetic detection rates between log-return and
    moment stakes, computed as L$-$M in percentage points.
    Each comparison uses the same prediction rule and the same $50$ paths.
    Positive green cells indicate more alarms with log-return stakes;
    negative cells indicate more alarms with moment stakes.
    For reference rows, more alarms mean a higher false-alarm rate.
    Column labels follow Figure~\ref{fig:synthetic:overview-rate}.
    The comparison includes the different stake caps, $0.99$ for L
    and $0.5$ for M. These differences are descriptive; no significance
    test is applied.}
    \label{fig:synthetic:l-m-rate}
\end{figure}

\begin{figure}[!htbp]
    \centering
    \includegraphics[width=\linewidth]{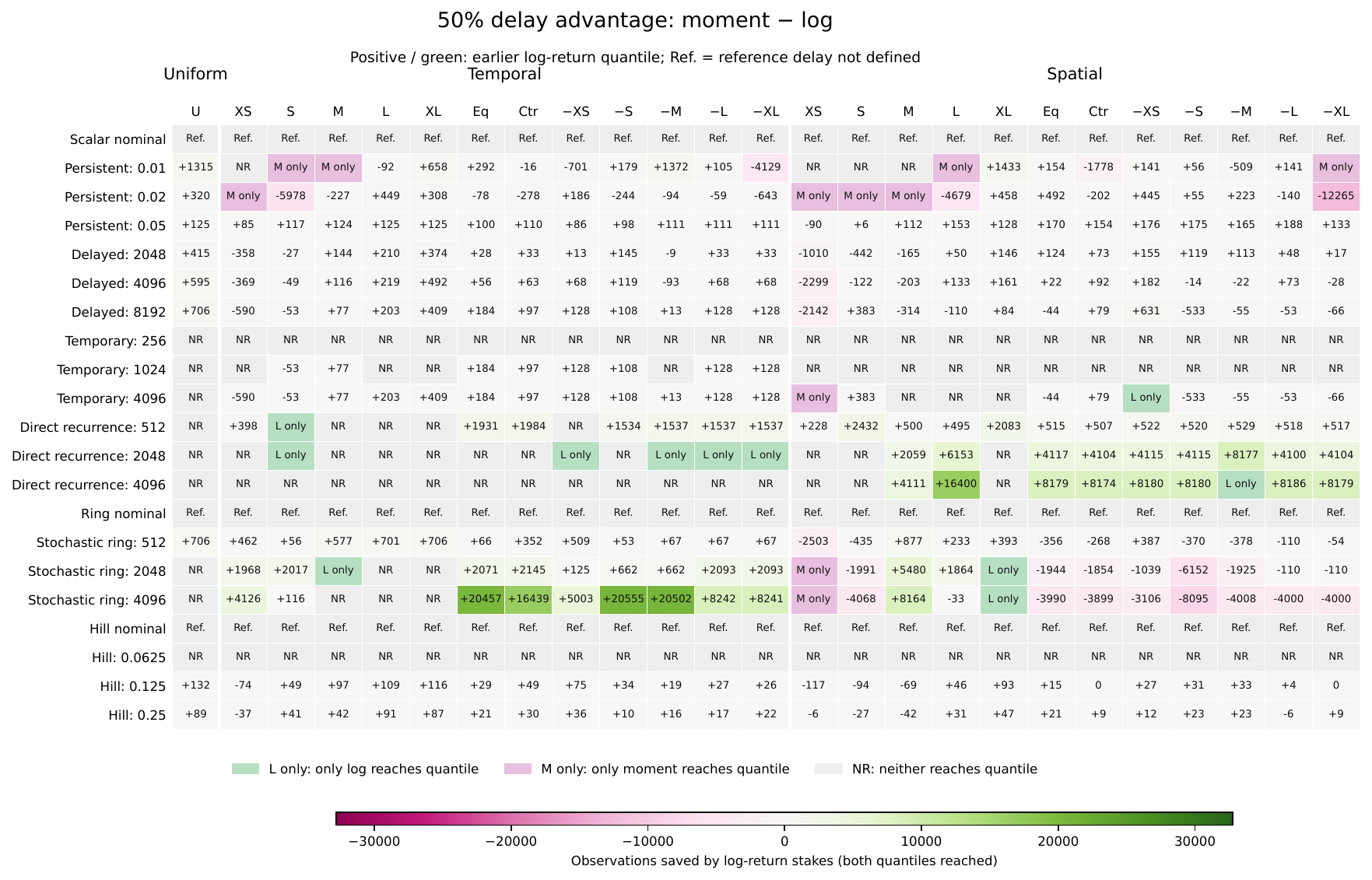}
    \caption{Differences in synthetic median detection delays, computed as
    $Q_{0.5}^{\mathrm M}-Q_{0.5}^{\mathrm L}$ in observations.
    Positive green values indicate earlier detection with log-return
    stakes when both strategies reach $25$ eligible first alarms among
    all $50$ runs.
    \textnormal{L only} and \textnormal{M only} indicate that only the
    named strategy reaches this fraction; NR means neither does.
    Reference delays are undefined (Ref.), and NA denotes incomplete data.
    No artificial delays are assigned to non-numeric cells.
    The comparisons use the same prediction rules and paths, with
    the respective stake caps of each strategy.}
    \label{fig:synthetic:l-m-delay}
\end{figure}

\begin{figure}[!htbp]
    \centering
    \includegraphics[width=\linewidth]{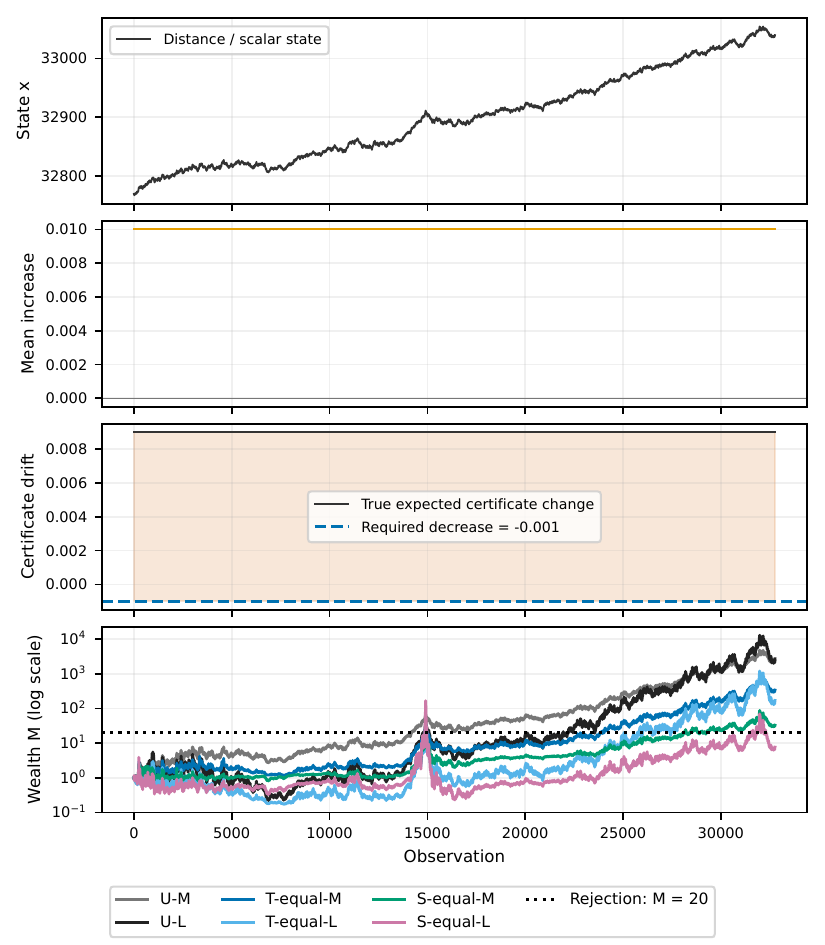}
    \caption{Persistent violation with $g=0.01$ and seed $970000003$.
    From top to bottom: the scalar state; mean increase $g$;
    the exact expected certificate increment $g-0.001=0.009$
    and allowance $-0.001$; and monitor wealth on a logarithmic scale.
    Uniform noise on $[-0.6,0.6]$ makes individual increments noisy
    despite the positive conditional mean of the corrected increment.
    U uses uniform prediction, while T-equal and S-equal mix the five
    XS--XL experts equally. M and L denote moment and log-return stakes.
    The dotted line marks wealth threshold $20$.
    All $32768$ observations are shown, including those after alarms.}
    \label{fig:synthetic:ex-persistent}
\end{figure}

\begin{figure}[!htbp]
    \centering
    \includegraphics[width=\linewidth]{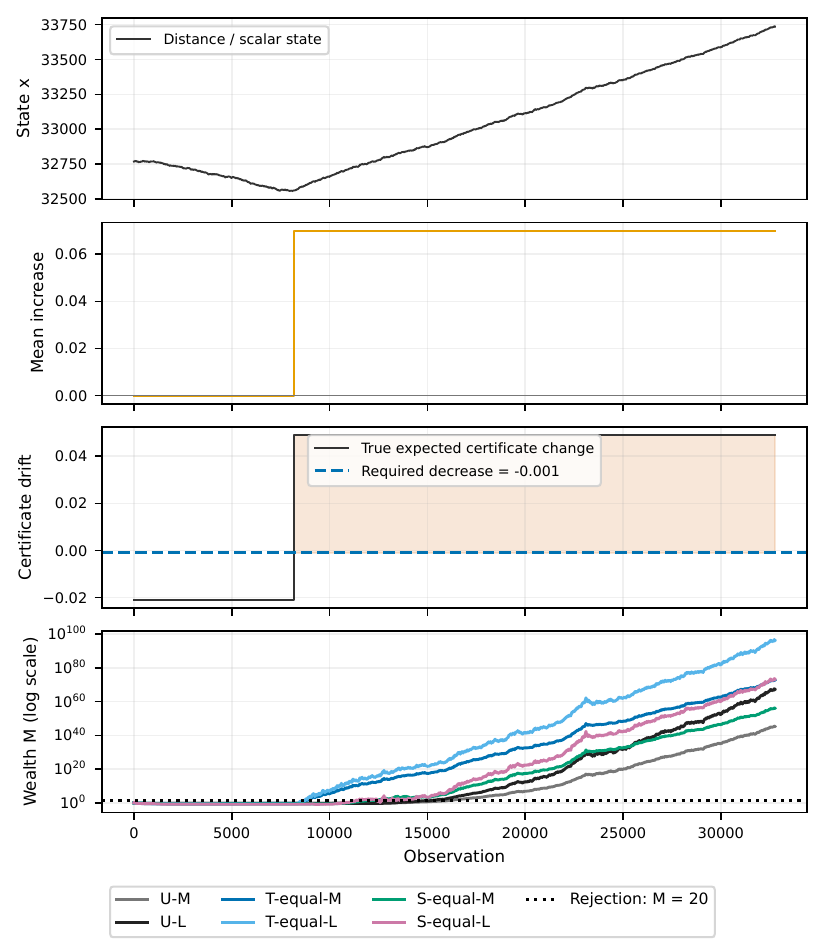}
    \caption{Delayed violation with onset $8192$ and seed $970000003$.
    The conditional mean of the corrected increment changes from
    $-0.02$ to $0.05$ and remains positive.
    From top to bottom: the state; mean increase $0.07$;
    the expected certificate increment, changing from $-0.021$ to
    $0.049$, and allowance $-0.001$; and logarithmic monitor wealth.
    Temporal prediction reduces the influence of observations collected
    before onset as new scores arrive.
    U, T-equal and S-equal are shown with both M and L stakes,
    as in Figure~\ref{fig:synthetic:ex-persistent}.
    The wealth threshold is $20$.
    Neither the history nor wealth is reset at onset.}
    \label{fig:synthetic:ex-delay}
\end{figure}

\begin{figure}[!htbp]
    \centering
    \includegraphics[width=\linewidth]{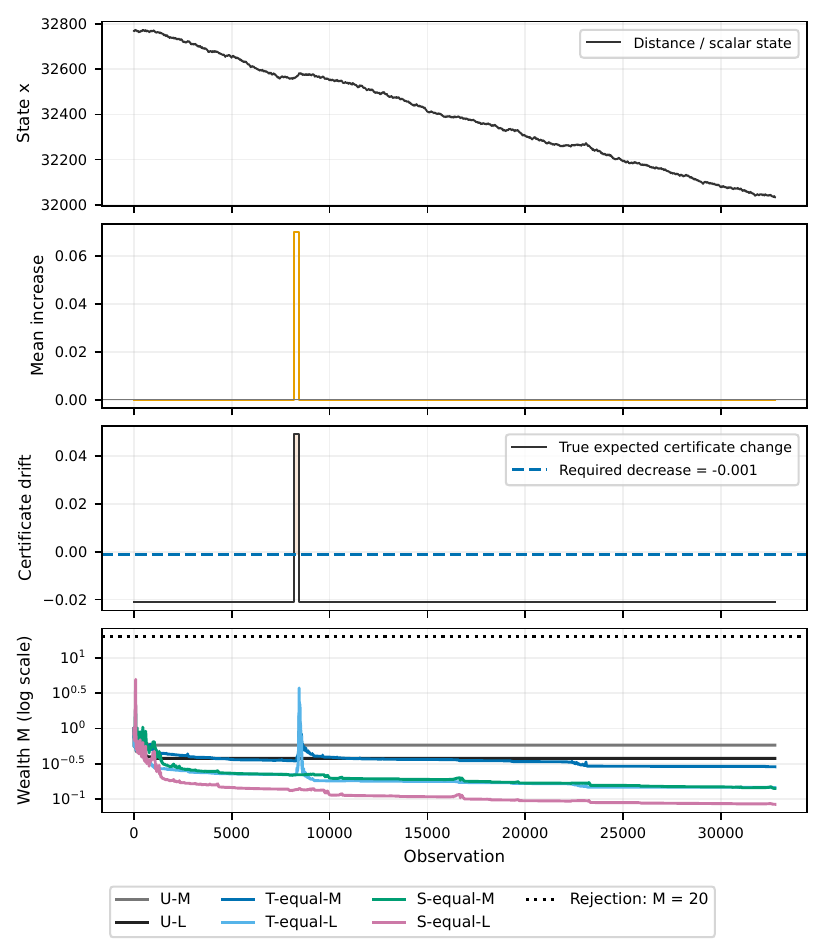}
    \caption{Temporary violation lasting $256$ observations from onset $8192$,
    with seed $970000003$.
    The conditional mean of the corrected increment is $0.05$ during
    the disturbance and $-0.02$ before and after it.
    From top to bottom: the state; temporary mean increase $0.07$;
    the expected certificate increment and allowance $-0.001$;
    and logarithmic wealth for U and the equal T/S mixtures with
    M and L stakes.
    None of the six displayed monitors reaches the dotted threshold $20$.
    The path is shown through observation $32768$, including recovery.
    The aggregate detection summaries also count eligible first alarms
    occurring after recovery.}
    \label{fig:synthetic:ex-temporary}
\end{figure}

\begin{figure}[!htbp]
    \centering
    \includegraphics[width=\linewidth]{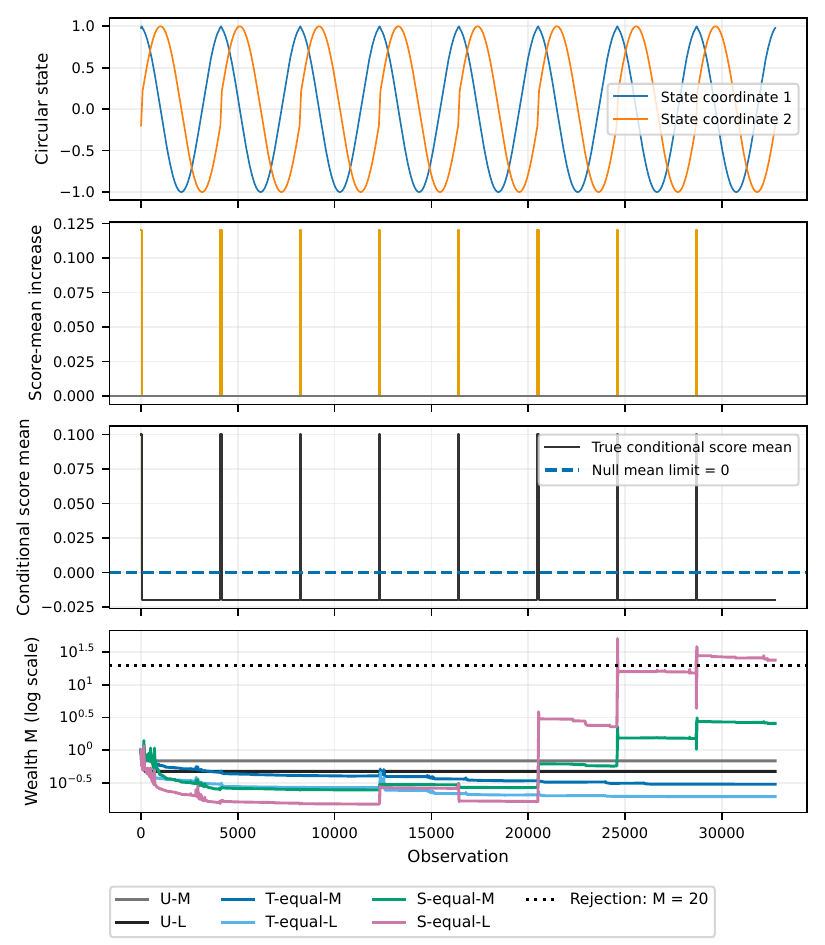}
    \caption{Direct recurrence with period $P=4096$ and seed $970000003$.
    Each of eight visits to a fixed circular sector contains $64$
    observations with score mean $0.10$.
    Between visits, the mean is $-0.02$.
    Independent symmetric score noise has amplitude $0.3$.
    From top to bottom: the two state coordinates; local mean increase
    $0.12$; the prescribed conditional score mean relative to zero;
    and logarithmic wealth for U and the equal T/S mixtures with
    M and L stakes.
    The $32768$ observations include exactly $512$ with positive
    conditional mean. Spatial prediction can reuse observations from
    earlier visits despite the long gaps between them.
    The dotted line marks wealth threshold $20$.}
    \label{fig:synthetic:ex-reuse}
\end{figure}

\begin{figure}[!htbp]
    \centering
    \includegraphics[width=\linewidth]{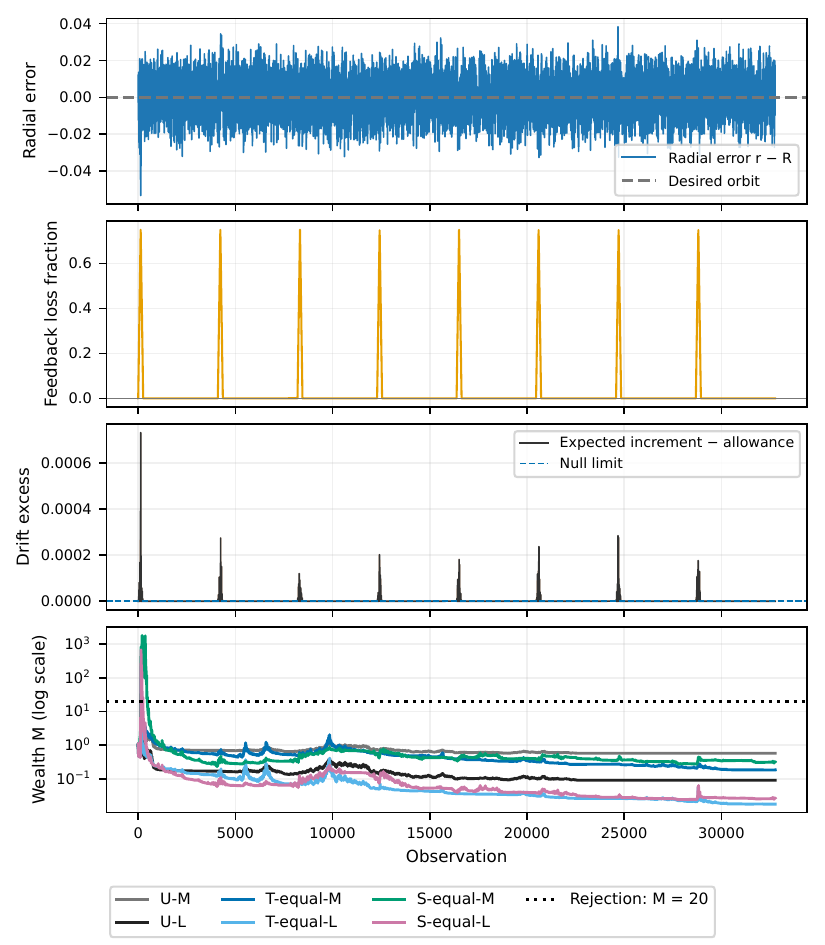}
    \caption{Stochastic ring with nominal period $P=4096$ and seed $970000003$.
    From top to bottom: radial error relative to the unit circle;
    the state-dependent reduction in restoring feedback, reaching
    $0.75$ in the affected sector; the exact expected certificate
    increment minus its nominal allowance; and logarithmic monitor wealth.
    The certificate is the squared radial error.
    Radial and angular noise make return times and violation exposure
    vary across runs.
    U and the equal T/S mixtures are shown with M and L stakes.
    The dotted line marks wealth threshold $20$.
    The expected-increment condition can be violated even while the
    process remains close to the circle.
    All observations are retained, including those after alarms.}
    \label{fig:synthetic:ex-ring}
\end{figure}

\begin{figure}[!htbp]
    \centering
    \includegraphics[width=\linewidth]{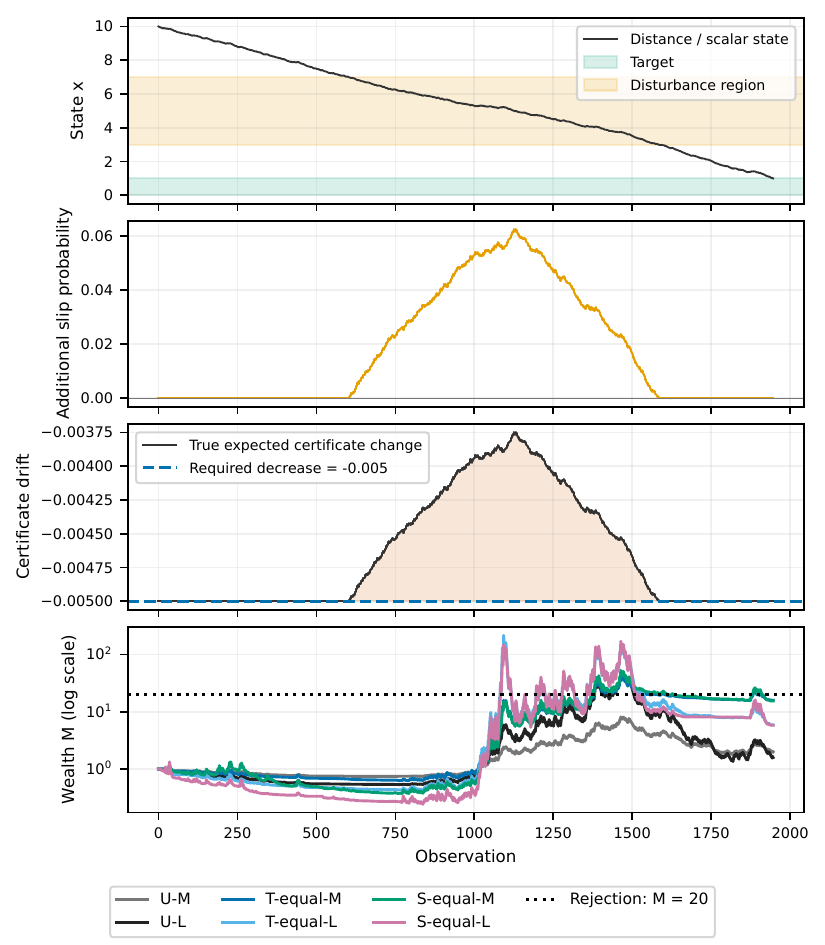}
    \caption{Hill disturbance with amplitude $a=0.0625$ and seed $970000003$.
    From top to bottom: distance to the target $[0,1]$, with the hill
    region $(3,7)$ shaded; additional slip probability
    $a(1-|x-5|/2)_+$; the exact expected certificate increment and
    required allowance $-0.005$; and logarithmic wealth for U and
    the equal T/S mixtures with M and L stakes.
    At $x=5$, the expected increment is $-0.00375$, a $25\%$
    reduction in progress.
    The robot therefore still approaches the target in expectation
    while violating the required rate of decrease.
    The dotted line marks wealth threshold $20$.
    The run ends when the robot reaches the target, not when a monitor
    raises an alarm.}
    \label{fig:synthetic:ex-hill}
\end{figure}

\begin{figure}[!htbp]
    \centering
    \includegraphics[width=\linewidth]{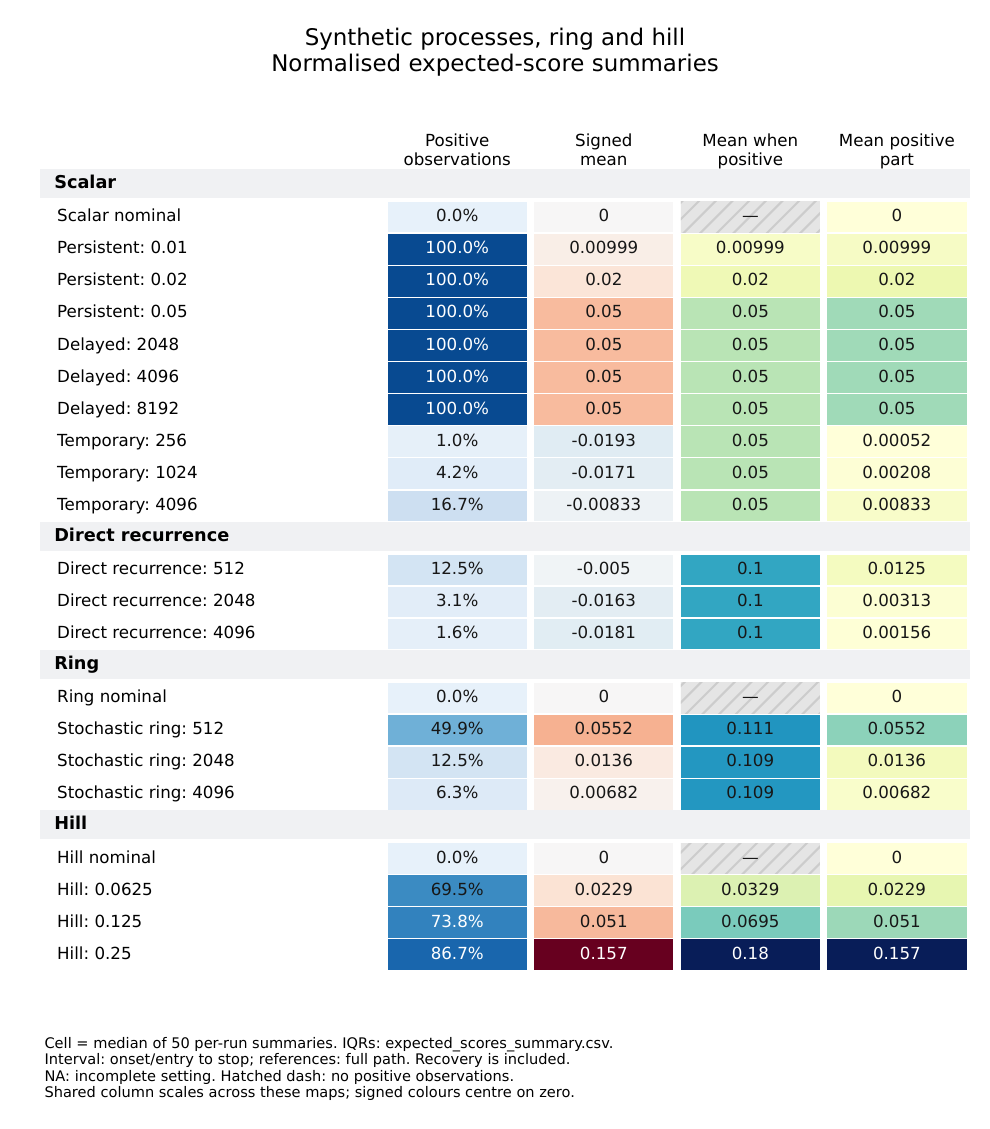}
    \caption{Normalised expected-score summaries for the synthetic benchmarks.
    Each cell is the median of $50$ quantities computed separately
    for each run: the fraction of observations with $\mu_n>0$,
    the signed mean of $\mu_n$, its mean over positive-mean observations,
    and the mean of $(\mu_n)_+$ over all observations.
    Summaries start at onset or first entry and include recovery and
    later excursions. References use the full path.
    Each run receives equal weight.
    A hatched dash marks an undefined mean over positive observations;
    NA denotes incomplete data.
    Column colour scales match the control heatmap, with the signed
    scale centred at zero.}
    \label{fig:synthetic:expected_score}
\end{figure}

\clearpage
\section{Control}
\label{sec:selected-control-figures}

\begin{figure}[!htbp]
    \centering
    \includegraphics[width=\linewidth]{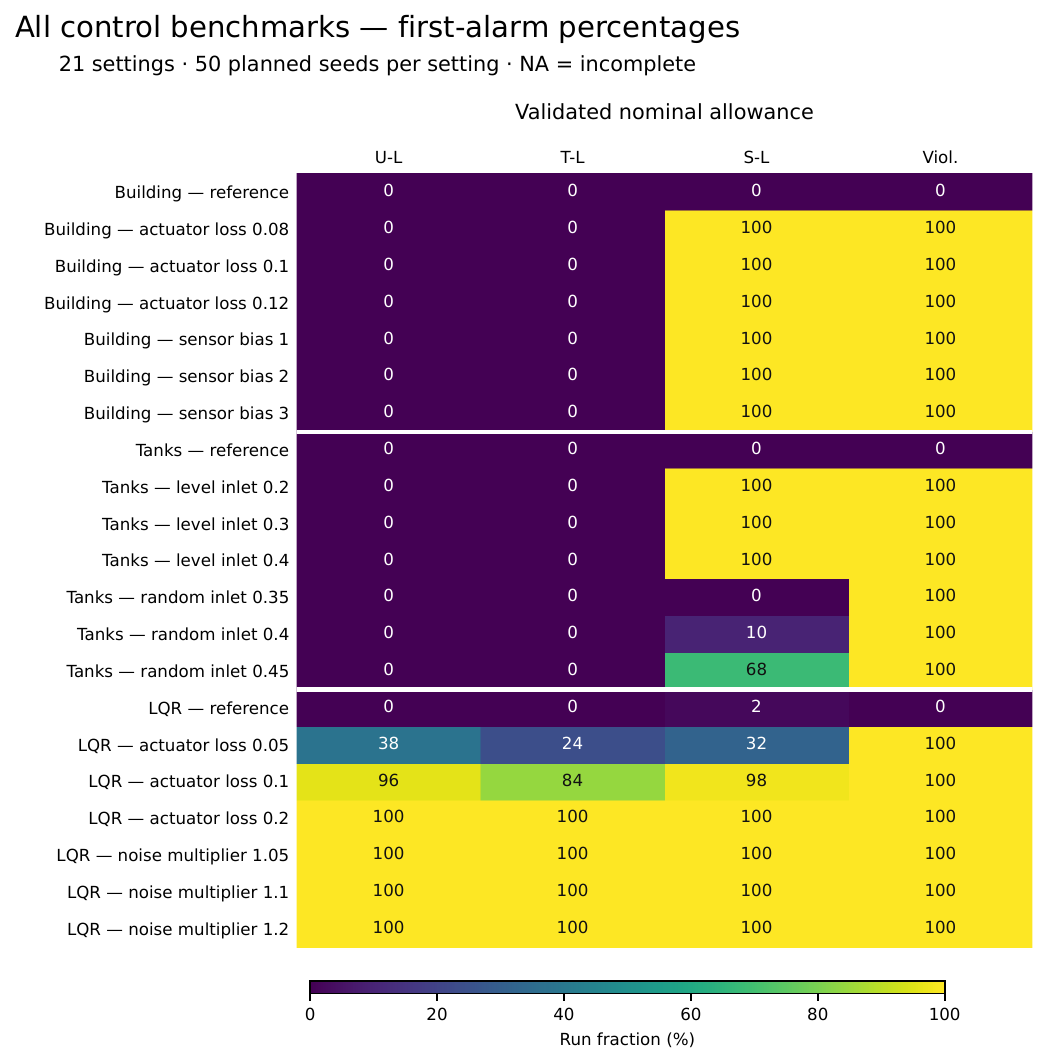}
    \caption{First-alarm percentages for Building, Two Tanks and LQR.
    Each system has a reference setting and two disturbances, each
    evaluated at three strengths.
    U-L, T-L and S-L use uniform, temporal and spatial prediction
    with log-return stakes and wealth threshold $20$.
    Every cell uses all $50$ paired runs.
    The \textnormal{Viol.} column gives the percentage of trajectories
    encountering at least one positive conditional excess, computed
    from exact model moments with tolerance $10^{-12}$ in raw units.
    This differs from both the alarm rate and the fraction of
    observations with positive conditional excess.
    Reference alarms measure false alarms under the validated
    nominal conditions.}
    \label{fig:control:overview-rate}
\end{figure}

\begin{figure}[!htbp]
    \centering
    \includegraphics[width=\linewidth]{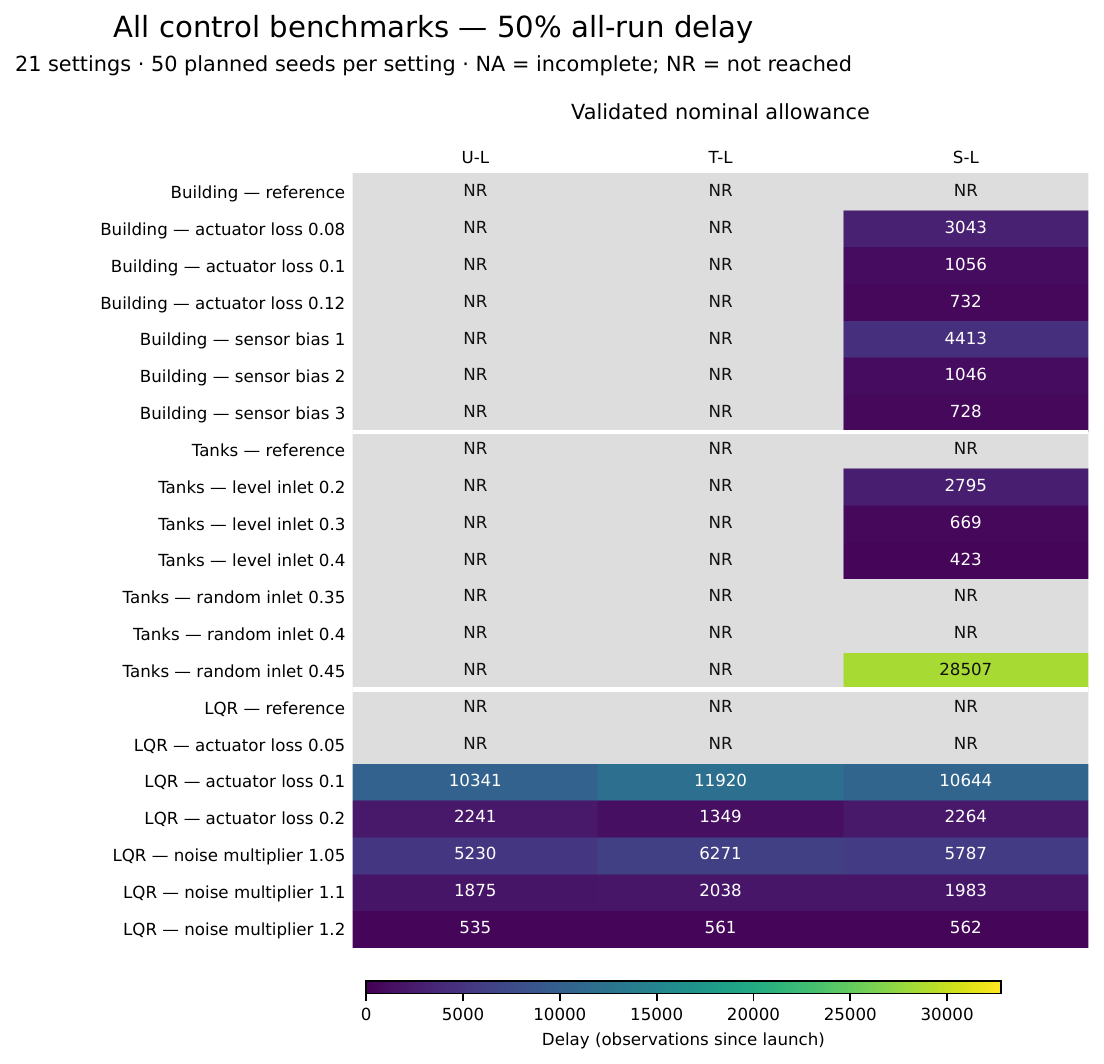}
    \caption{The $50\%$ detection delays for the control settings and log-return
    monitors in Figure~\ref{fig:control:overview-rate}.
    Disturbances act from the start, so each entry is the number of
    observations until $25$ of all $50$ runs have raised their first alarm.
    NR means this fraction was not reached, and NA denotes incomplete data.
    Nondetections remain in the denominator.
    Reference delays also start at the beginning of the run and concern
    false alarms.
    LQR observations are spaced by $0.1$ physical time units;
    all entries are reported in observations.}
    \label{fig:control:overview-delay}
\end{figure}

\begin{figure}[!htbp]
    \centering
    \includegraphics[width=\linewidth]{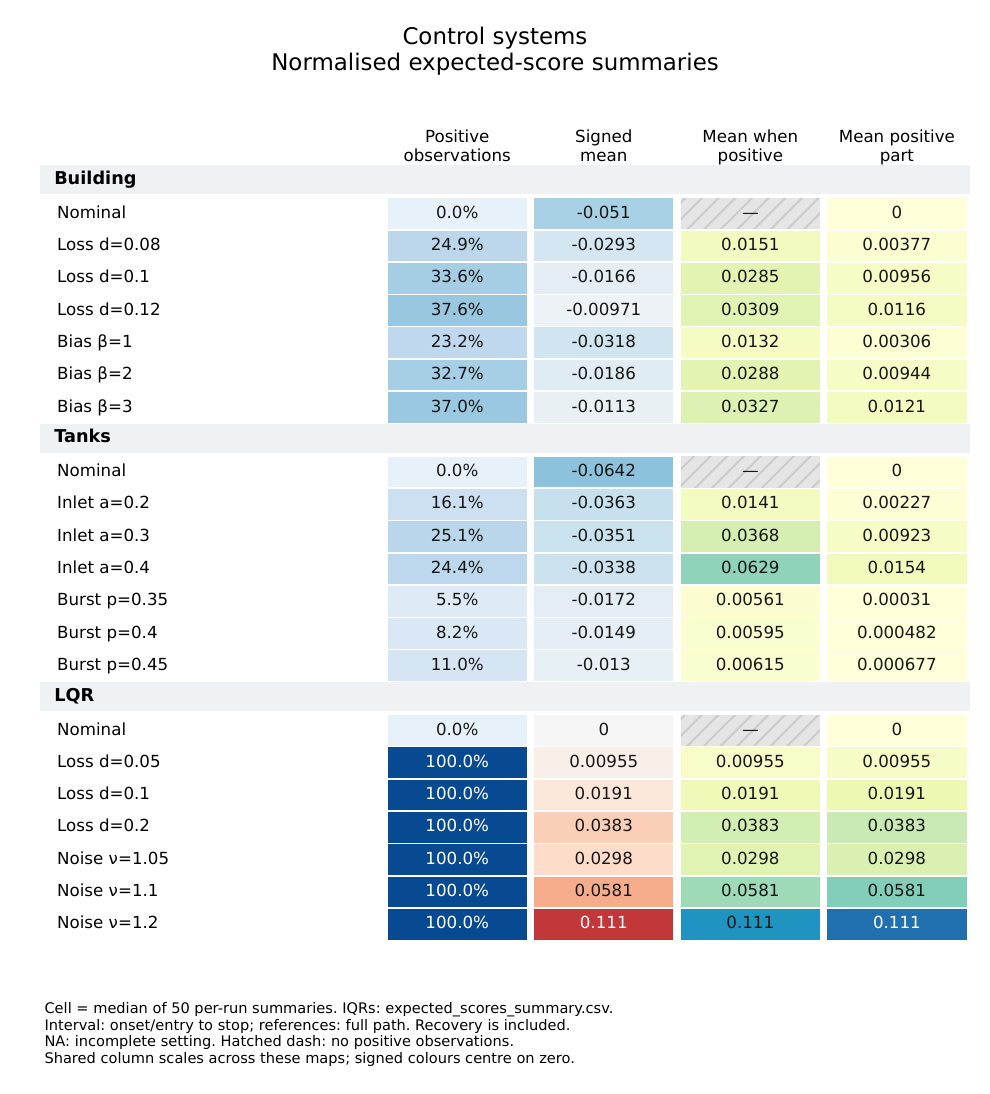}
    \caption{Normalised expected-score summaries for the control benchmarks,
    computed from exact conditional moments along each recorded path.
    Cells give medians of $50$ equally weighted per-run summaries:
    the fraction of positive-mean observations, the signed mean,
    the mean over positive-mean observations, and the mean positive part.
    The last quantity includes all observations in its denominator.
    A negative signed mean can therefore coexist with local violations.
    Positivity requires a raw conditional excess above $10^{-12}$.
    Reference and disturbed runs use their full paths, including
    observations after alarms.
    A hatched dash marks an undefined mean over positive observations;
    NA denotes incomplete data.
    Column colour scales match Figure~\ref{fig:synthetic:expected_score}.}
    \label{fig:control:expected_score}
\end{figure}

\clearpage
\section{Neural}
\label{sec:selected-neural-figures}

\begin{figure}[!htbp]
    \centering
    \includegraphics[width=\linewidth,height=0.75\textheight,keepaspectratio]{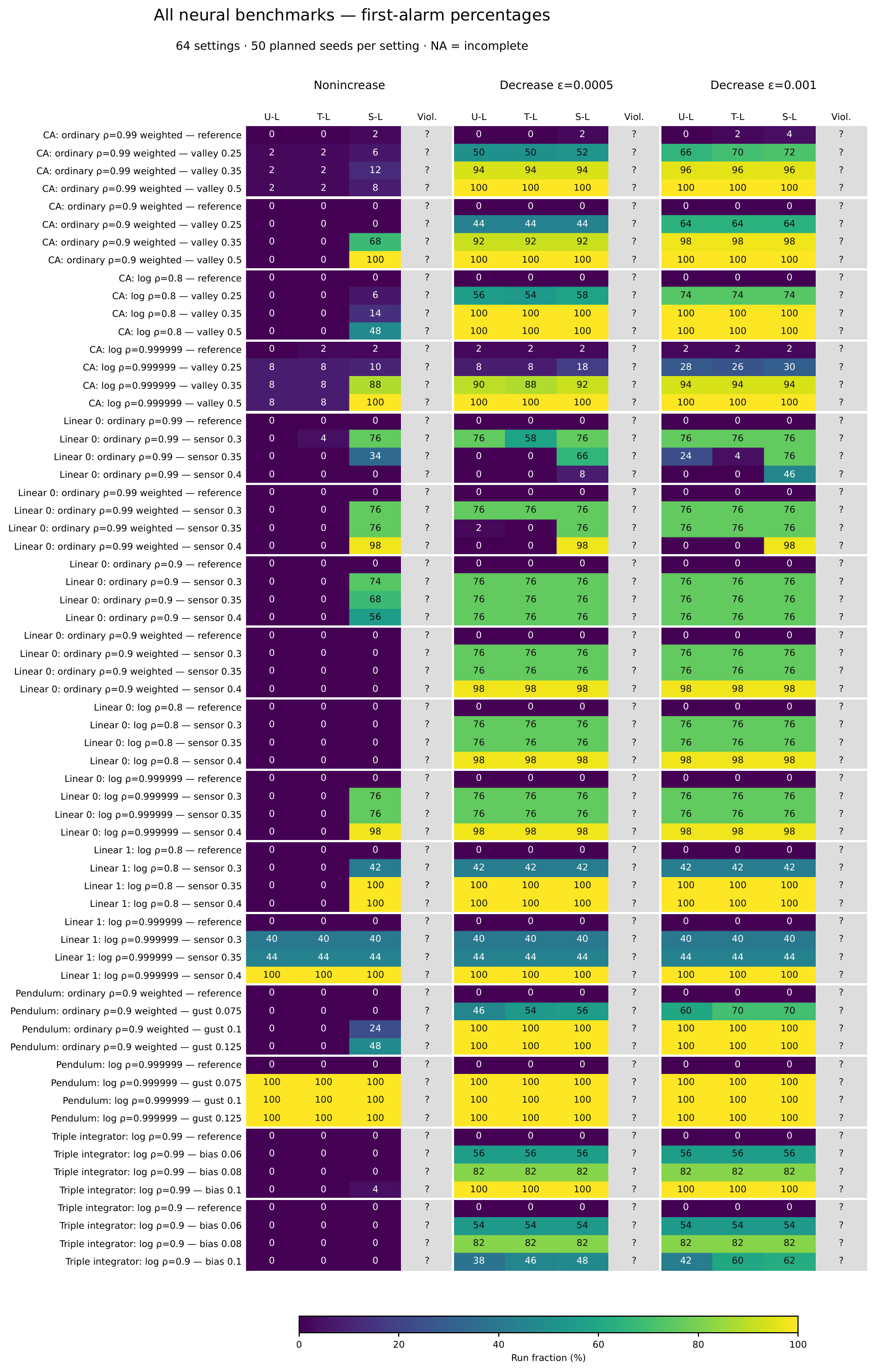}
    \caption{First-alarm percentages for the neural benchmarks.
    Each of the $16$ original controller--certificate pairs has a reference
    setting and three disturbance strengths.
    Each row uses $50$ trajectories shared by U-L, T-L and S-L and by
    the conditions $\epsilon=0,0.0005,0.001$.
    The allowance is $-\epsilon$ for ordinary certificates and
    $(e^{-\epsilon}-1)f(x)$ for exponentiated logarithmic certificates.
    All monitors use wealth threshold $20$.
    Question marks in \textnormal{Viol.} indicate that the conditional
    mean estimates at selected states do not determine the exact fraction
    of runs encountering a violation.
    All three conditions remain unverified under the adapted dynamics,
    including in the reference settings.
    Reference alarms therefore do not measure false-alarm rates under
    a verified null.}
    \label{fig:neural:overview-rate}
\end{figure}

\begin{figure}[!htbp]
    \centering
    \includegraphics[width=\linewidth,height=0.75\textheight,keepaspectratio]{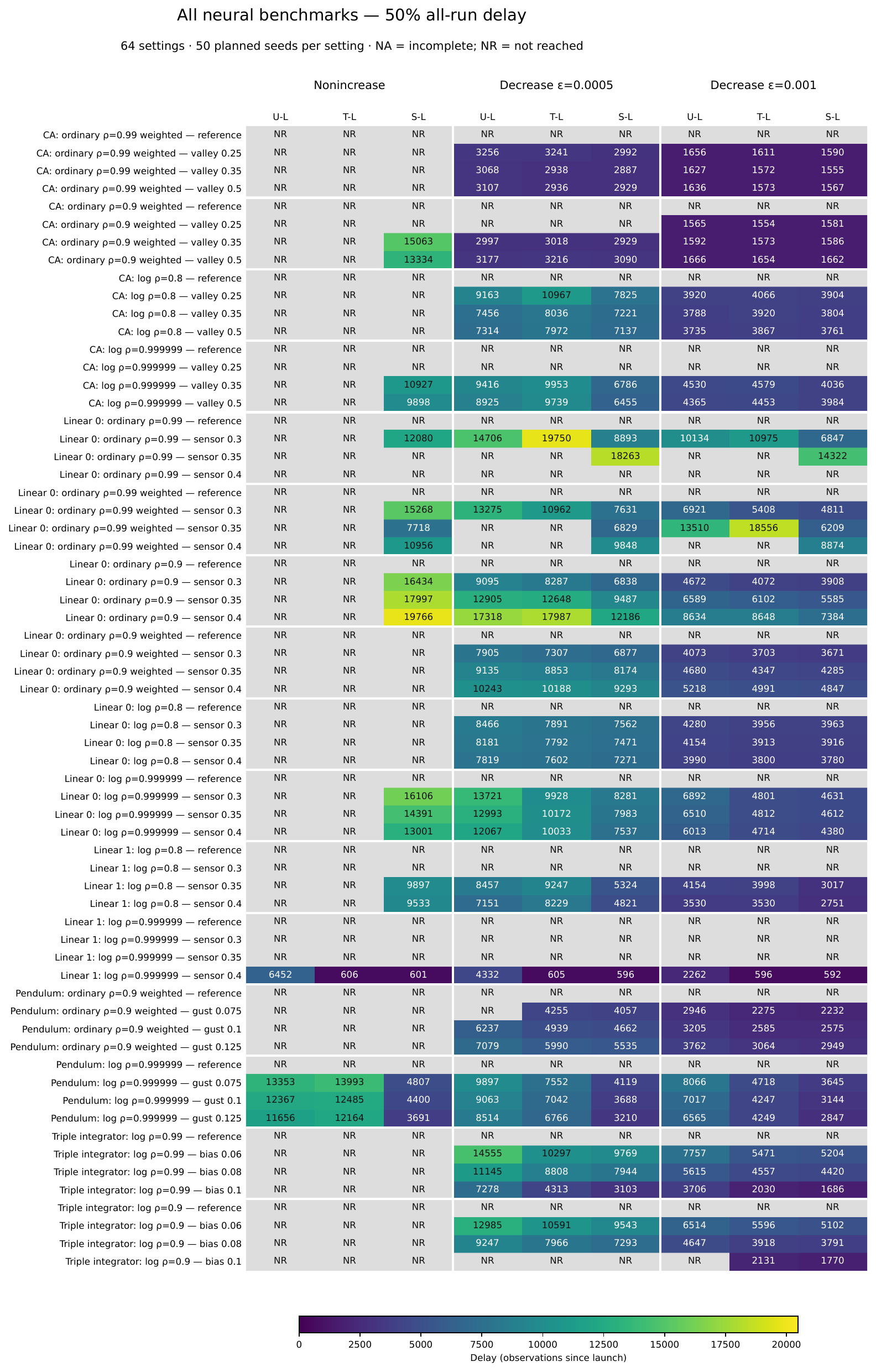}
    \caption{The $50\%$ detection delays for the neural pairs, disturbances and
    conditions in Figure~\ref{fig:neural:overview-rate}, measured in
    observations from the start of each run.
    Each finite entry requires $25$ first alarms among all $50$ paired
    paths. Short paths and nondetections remain in the denominator.
    NR means this fraction was not reached, and NA denotes incomplete data.
    Paths end at the target, an unsafe set, domain exit, certificate
    threshold or $20480$ observations.
    Detection therefore depends on both the evidence available and the
    length of the trajectory.
    All condition blocks share a colour scale.
    The different conditions change the allowance on the same
    physical trajectories.}
    \label{fig:neural:overview-delay}
\end{figure}

\clearpage

\end{document}